\DeclareMathOperator*{\argmin}{arg\,min}
\newcommand{\R}{\mathbb{R}}
\newcommand{\gN}{\mathcal{N}}
\newtheorem{proof}{Proof}
\title{Imbalanced Adversarial Training with Reweighting}
\author{%
  Wentao Wang\thanks{Equal Contribution}\\
  Michigan State University\\
  \texttt{wangw116@msu.edu} \\
  \And 
  Han Xu$^*$\\
Michigan State University \\
  \texttt{xuhan1@msu.edu} \\
  \And
Xiaorui Liu\\
  Michigan State University\\
  \texttt{xiaorui@msu.edu} \\
  \AND 
Yaxin Li\\
  Michigan State University\\
  \texttt{liyaxin1@msu.edu} \\
  \And 
Bhavani Thuraisingham\\
    The University of Texas at Dallas\\
    \texttt{bxt043000@utdallas.edu} \\
    \And
Jiliang Tang\\
Michigan State University\\
  \texttt{tangjili@msu.edu} \\
}
\begin{document}

\maketitle

\begin{abstract}
Adversarial training has been empirically proven to be one of the most effective and reliable defense methods against adversarial attacks. However, almost all existing studies about adversarial training are focused on balanced datasets, where each class has an equal amount of training examples. Research on adversarial training with imbalanced training datasets is rather limited. As the initial effort to investigate this problem, we reveal the facts that adversarially trained models present two distinguished behaviors from naturally trained models in imbalanced datasets: (1) Compared to natural training, adversarially trained models can suffer much worse performance on under-represented classes, when the training dataset is extremely imbalanced.
(2) Traditional \textit{reweighting} strategies may lose efficacy to deal with the imbalance issue for adversarial training. For example, upweighting the under-represented classes will drastically hurt the model's performance on well-represented classes, and as a result, finding an optimal reweighting value can be tremendously challenging. In this paper, to further understand our observations, we theoretically show that the poor data separability is one key reason causing this strong tension between under-represented and well-represented classes. Motivated by this finding, we propose \textit{Separable Reweighted Adversarial Training} (SRAT) to facilitate adversarial training under imbalanced scenarios, by learning more separable features for different classes. Extensive experiments on various datasets verify the effectiveness of the proposed framework.
\end{abstract}

\section{Introduction}

The existence of adversarial samples~\cite{szegedy2013intriguing,goodfellow2014explaining} has risen huge concerns on applying deep neural network (DNN) models into security-critical applications, such as autonomous driving~\cite{chen2015deepdriving} and video surveillance systems~\cite{kurakin2016adversarial}. As countermeasures against adversarial attacks, adversarial training~\cite{madry2017towards, zhang2019theoretically} has been empirically proven to be one of the most effective and reliable defense methods. In general, adversarial training can be formulated to minimize the model's average error on adversarially perturbed input examples~\cite{madry2017towards, zhang2019theoretically, rice2020overfitting}. Although promising to improve the model’s robustness, most existing adversarial training methods~\cite{zhang2019theoretically,wang2019improving} assume that the number of training examples from each class is equally distributed. However, datasets collected from real-world applications typically have imbalanced distribution~\cite{everingham2010pascal,lin2014microsoft,van2017devil}. Hence, it is natural to ask: \textit{what is the behavior of adversarial training under imbalanced scenarios? Can we directly apply existing imbalanced learning strategies in natural training to tackle the imbalance issue for adversarial training?} Recent studies find that adversarial training usually presents distinct properties from natural training~\cite{schmidt2018adversarially, xu2020robust}. For example, compared to natural training, adversarially trained models suffer more from the overfitting issue~\cite{schmidt2018adversarially}.
Moreover, it is evident from a recent study~\cite{xu2020robust} that the adversarially trained models tend to present strong class-wise performance disparities, even if the training examples are uniformly distributed over different classes. Imagine that if the training data distribution is highly imbalanced, these properties of adversarial training can be greatly exaggerated and make it extremely difficult to be applied in practice. Therefore, it is important but challenging to answers aforementioned questions.

As the initial effort to study the imbalanced problem in adversarial training, in this work, we first investigate the performance of existing adversarial training under imbalanced settings. As a preliminary study shown in Section~\ref{subsec:pre1}, we apply both natural training and PGD adversarial training~\cite{madry2017towards} on multiple imbalanced image datasets constructed from the CIFAR10 dataset~\cite{krizhevsky2009learning} using the ResNet18 architecture~\cite{he2016deep} and evaluate trained models' performance on class-balanced test datasets. From the preliminary results, we observe that, compared to naturally trained models, adversarially trained models always present very low standard accuracy and robust accuracy\footnote{In this work, we denote \emph{standard accuracy} as model’s accuracy on the input samples without perturbations and \emph{robust accuracy} as model’s accuracy on the input samples which are adversarially perturbed. Without clear clarification, we consider the perturbation is constrained by $l_\infty$-norm 8/255.} on under-represented classes. For example, a naturally trained model can achieve around 40\% and 60\% standard accuracy on under-represented classes ``frog'' and ``truck'' separately, while an adversarially trained model gets both 0\% standard \& robust accuracy on these two classes. This observation suggests that adversarial training is more sensitive to imbalanced data distribution than natural training. Thus, when applying adversarial training in practice, imbalance learning strategies should always be considered for help.

As a result, we explore the potential solutions which can handle the imbalance issues for adversarial training. In this work, we focus on studying the behavior of the \textit{reweighting} strategy~\cite{he2013imbalanced} and leave other strategies such as resampling~\cite{estabrooks2004multiple} for one future work.
In Section~\ref{subsec:pre2}, we apply the reweighting strategy to existing adversarial training with varied weights assigning to one under-represented class and evaluate trained models' performance. From the results, we observe that, in adversarial training, increasing weights for an under-represented class can substantially improve the standard \& robust accuracy on this class, but drastically hurt the model's performance on the well-represented class. For example, the robust accuracy of the adversarially trained model on the under-represented class ``horse'' can be greatly improved when setting a relatively large weight, like 200, to its examples, but the model's robust accuracy on the well-represented class ``cat'' is dropped to even lower than the class ``horse'' and, hence, the overall robust performance of the model is also decreased. These facts indicate that the performance of adversarially trained models is very sensitive to the reweighting manipulations and it could be very hard to figure out an eligible reweighting strategy which is optimal for all classes.

It is also worth noting that this phenomenon is absent in natural training under the same settings. In natural training, from the results in Section~\ref{subsec:pre2}, we find that upweighting the under-represented class increases model's standard accuracy on this class but only slightly hurts the accuracy on other classes, even when adopting a large weight for under-represent class. 
To further investigate the possible reasons leading to different behaviors of the reweighing strategy in natural and adversarial training, we visualize their learned features via t-SNE~\cite{van2008visualizing}. As shown in Figure~\ref{fig:pre_binary_tsne}, we observe that features learned by the adversarially trained model of different classes tend to mix together while they are well separated for the naturally trained model. This observation motivates us to theoretically show that when the given data distribution has poor data separability, upweighting under-represented classes will hurt the model's performance on well-represented classes.  Motivated by this theoretical understanding, we propose a novel algorithm \emph{Separable Reweighted Adversarial Training (SRAT)} to facilitate the reweighting strategy in imbalanced adversarial training by enhancing the separability of learned features. Through extensive experiments, we validate the effectiveness of SRAT.

\section{Preliminary Study}\label{sec:preliminary}

\subsection{The Behavior of Adversarial Training}\label{subsec:pre1}

In this subsection, we conduct preliminary studies to examine the performance of PGD adversarial training~\cite{madry2017towards}, under an imbalanced training dataset which is resampled from CIFAR10 dataset~\cite{krizhevsky2009learning}. Following previous imbalanced learning works~\cite{cui2019class,cao2019learning}, we consider to construct an imbalanced training dataset where each of the first 5 classes (well-represented classes) has 5,000 training examples, and each of the last 5 classes (under-represented classes) has 50 training examples. 
Figure~\ref{fig:pre_step_100} shows the performance of naturally and adversarially trained models under a ResNet18~\cite{he2016deep} architecture. 
From the figure, we can observe that, comparing with natural training, PGD adversarial training will result in a larger performance gap between well-represented classes and under-represented classes. For example, in natural training, the ratio between the average standard accuracy of well-represented classes (brown) and under-represented classes (violet) is about 2:1, while in adversarial training, this ratio expands to 16:1. 
Moreover, for adversarial training, although it can achieve good standard \& robust accuracy on well-represented classes, it has extremely poor performance on under-represented classes. 
There are 3 out of the 5 under-represented classes with 0\% standard \& robust accuracy. 
As a conclusion, the performance of adversarial training is easier to be affected by imbalanced distribution than natural training and suffers more on under-represented classes.
In Appendix~\ref{app_sec:pre_com1}, we provide more implementation details of this experiment, as well as additional results of the same experiment under other imbalanced settings. The results in Appendix~\ref{app_sec:pre_com1} further support our findings.

\begin{figure}[t]
\centering
\begin{subfigure}[b]{0.3\textwidth}
\centering
\includegraphics[width=1.72in]{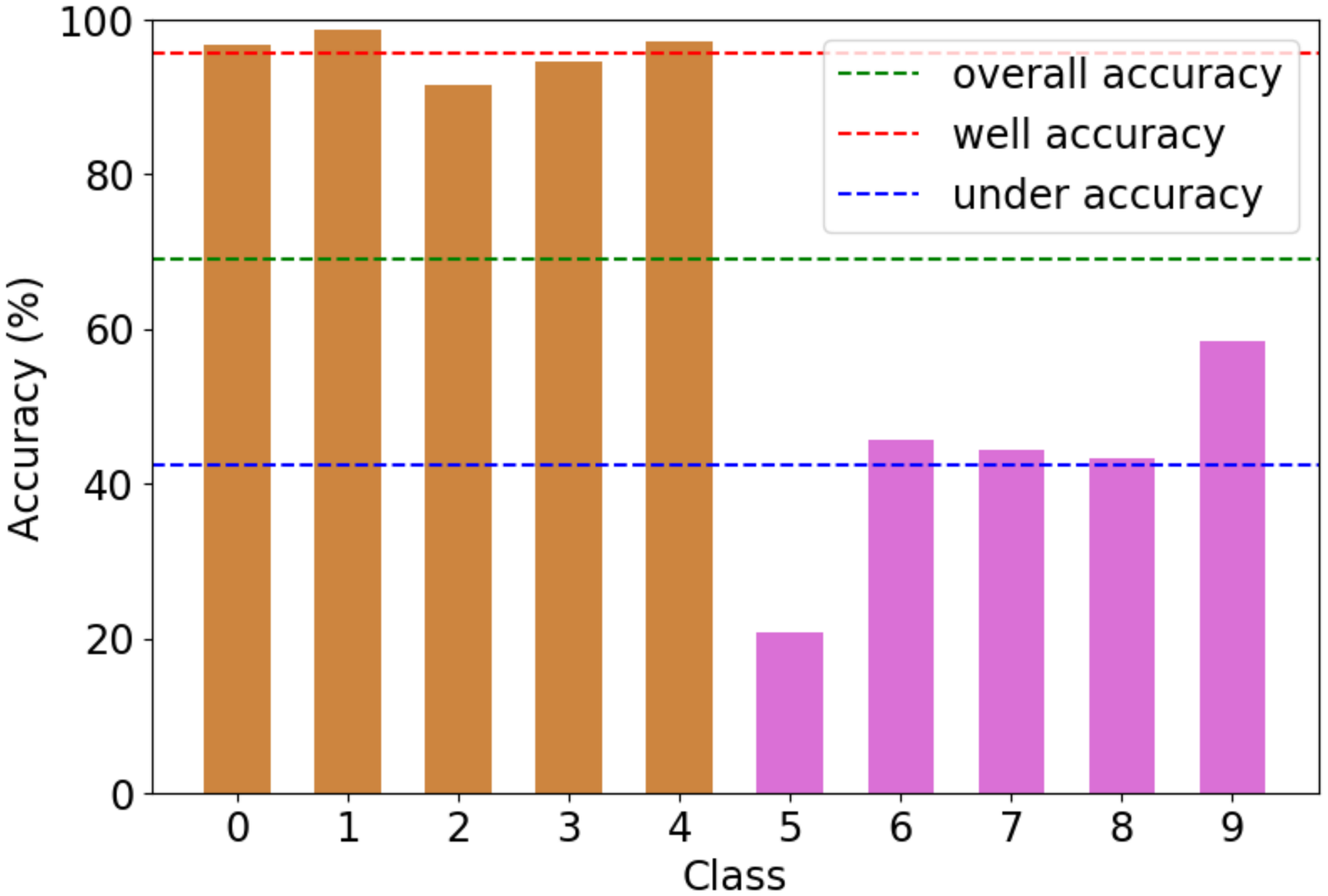}
\caption{Natural Training Standard Acc}
\label{fig:pre_step_100_nature_standard}
\end{subfigure}
\hspace{0.12in}
\begin{subfigure}[b]{0.3\textwidth}
\centering
\includegraphics[width=1.72in]{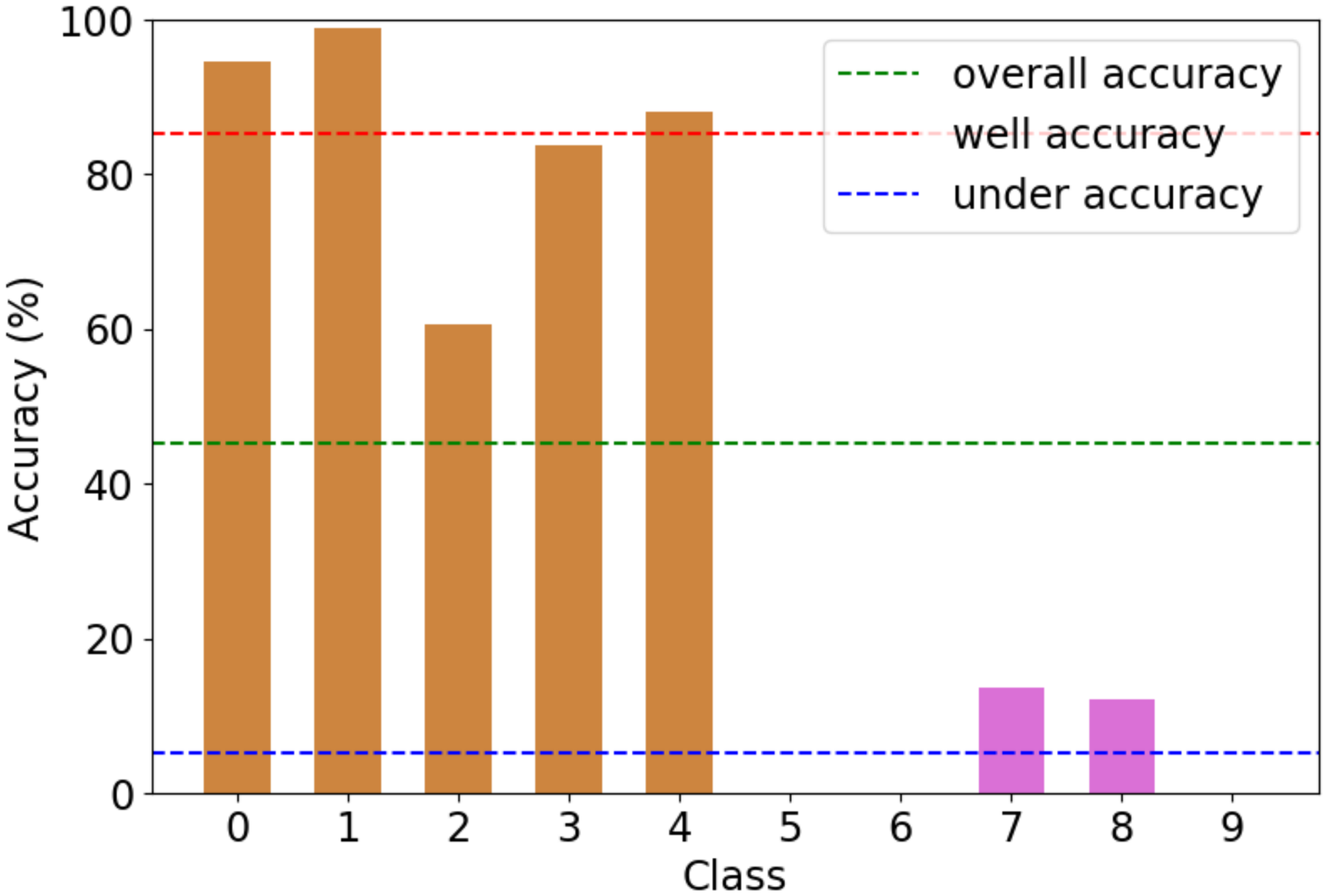}
\caption{Adv. Training Standard Acc.}
\label{fig:pre_step_100_adv_standard}
\end{subfigure}
\hspace{0.12in}
\begin{subfigure}[b]{0.3\textwidth}
\centering
\includegraphics[width=1.72in]{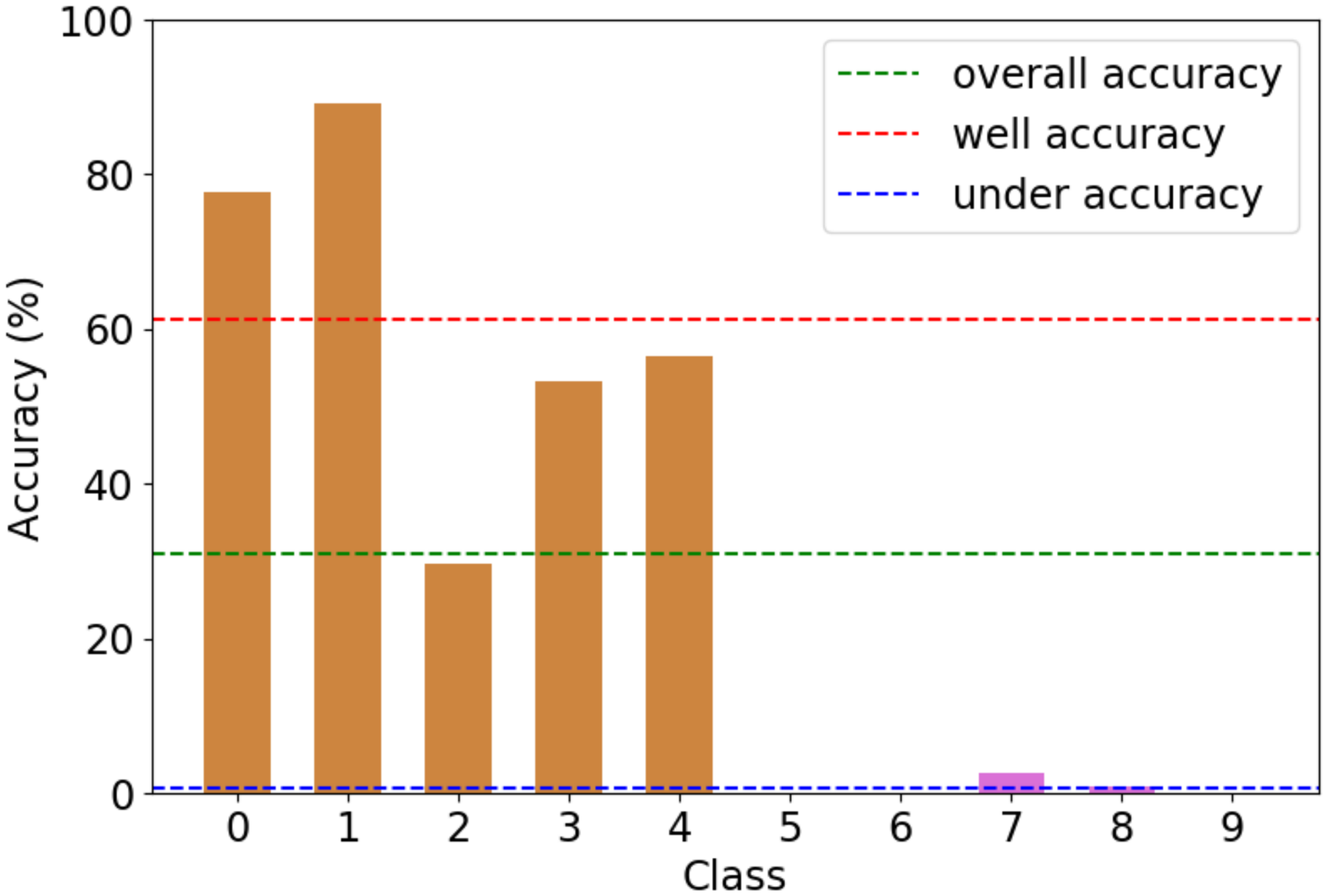}
\caption{Adv. Training Robust Acc.}
\label{fig:pre_step_100_adv_robust}
\end{subfigure}
\caption{Class-wise performance of natural \& adversarial training under an imbalanced CIFAR10.}
\label{fig:pre_step_100}
\end{figure}

\subsection{The Reweighting Strategy in Natural Training v.s. in Adversarial Training}\label{subsec:pre2}

The preliminary study in Section~\ref{subsec:pre1} demonstrates that it is highly demanding to adjust the original adversarial training methods to accommodate class-imbalanced data. Therefore, in this subsection, we investigate the effectiveness of existing imbalanced learning strategies in natural training when adopted in adversarial training. In this paper, we focus on the reweighting strategy~\cite{he2013imbalanced} as the initial effort to study this problem and leave other methods such as resampling~\cite{chawla2002smote} for future investigation.
In this subsection, we conduct experiments under a binary classification problem, where the training dataset contains two classes that are randomly selected from CIFAR10 dataset, with each class having 5,000 and 50 training examples respectively. Under this training dataset, we arrange multiple trails of (reweighted) natural training and (reweighted) adversarial training, with the weight ratio between the under-represented class and well-represented class ranging from 1:1 to 200:1.

\begin{figure}[h]
\begin{subfigure}[b]{0.31\textwidth}
\centering
\includegraphics[width=1.72in]{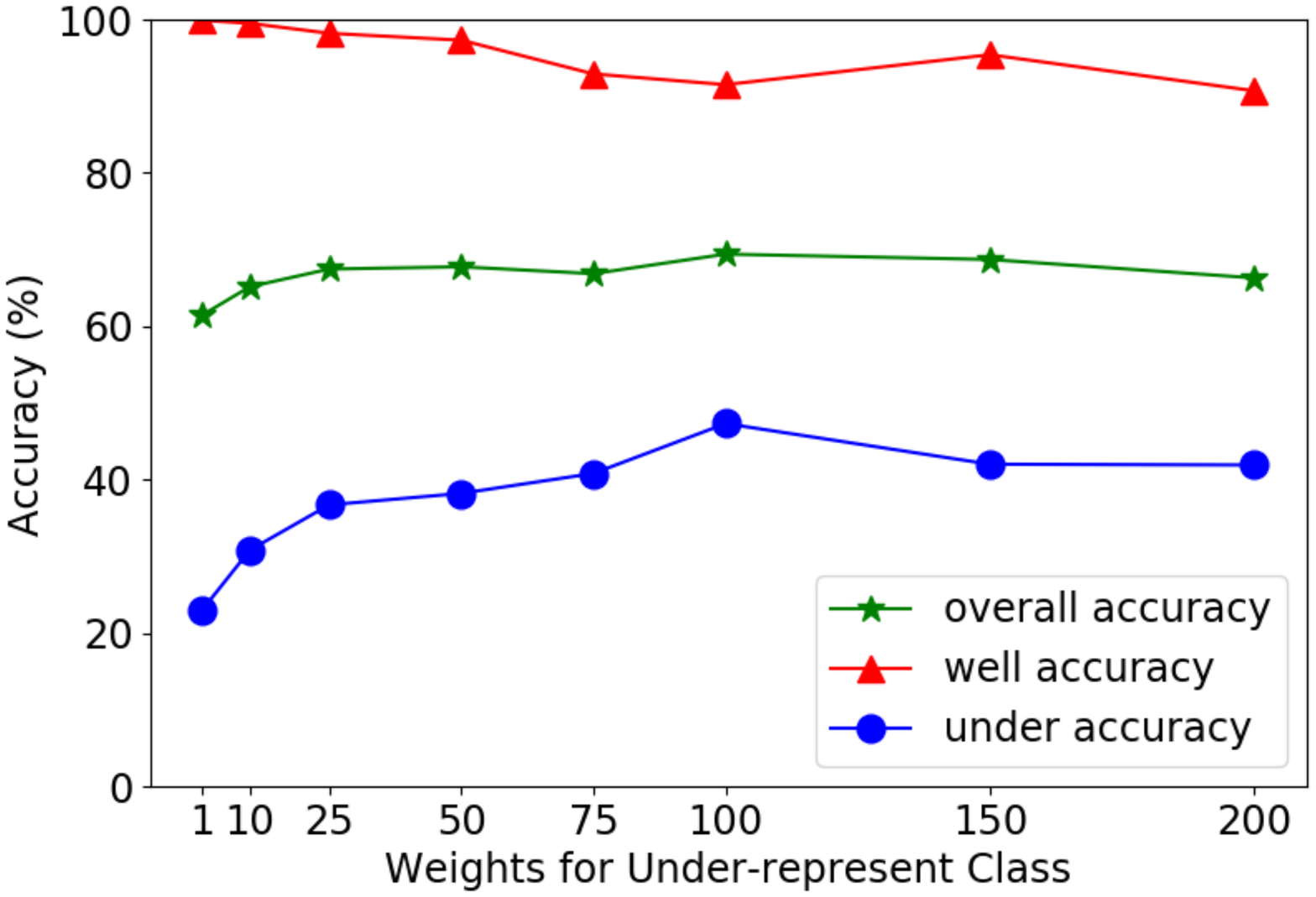}
\caption{Natural Training Standard Acc.}
\label{fig:pre_binary_3_7_nature_standard}
\end{subfigure}
\hspace{0.12in}
\begin{subfigure}[b]{0.31\textwidth}
\centering
\includegraphics[width=1.72in]{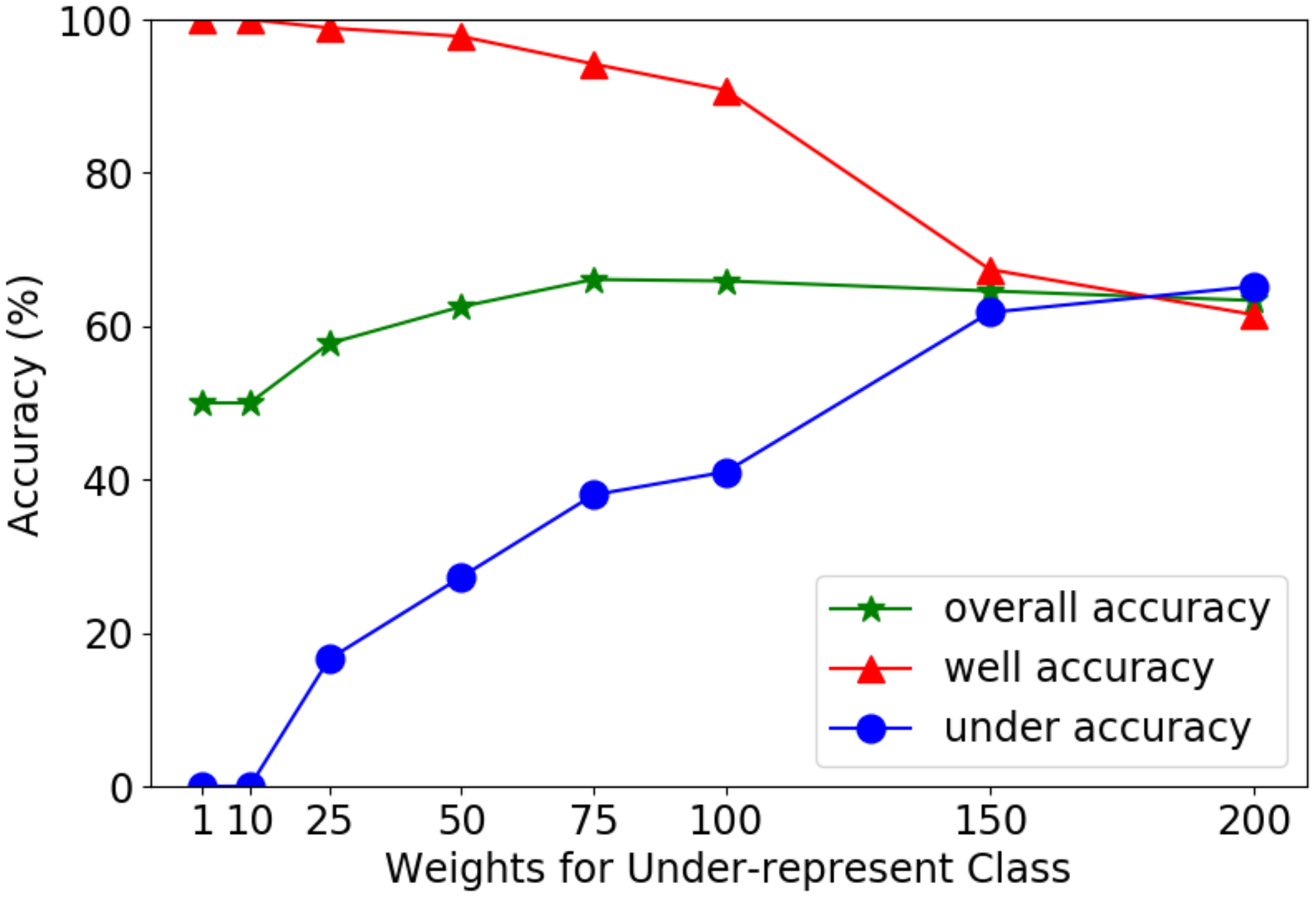}
\caption{Adv. Training Standard Acc.}
\label{fig:pre_binary_3_7_adv_standard}
\end{subfigure}
\hspace{0.12in}
\begin{subfigure}[b]{0.31\textwidth}
\centering
\includegraphics[width=1.72in]{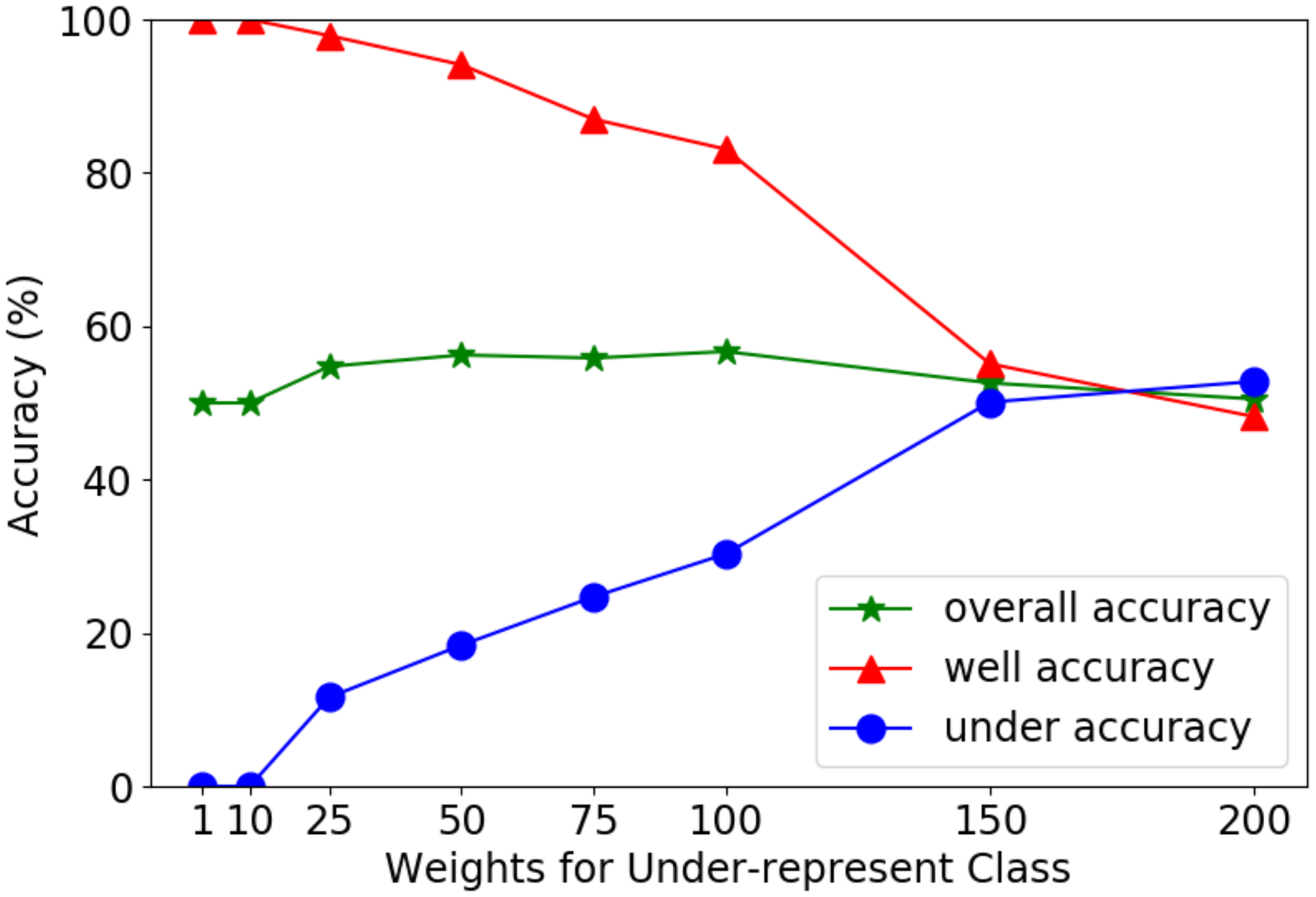}
\caption{Adv. Training Robust Acc.}
\label{fig:pre_binary_3_7_adv_robust}
\end{subfigure}
\caption{Class-wise performance of reweighted natural \& adversarial training in binary classification.}
\label{fig:pre_binary_3_7}
\end{figure}

Figure~\ref{fig:pre_binary_3_7} shows the experimental results with data from the classes ``horse'' and ``cat''. As demonstrated in Figure~\ref{fig:pre_binary_3_7}, increasing the weight of the under-represented class will drastically increase the model's performance of the under-represented class, while also immensely decreasing the performance of the well-represented class. For example, when increasing the weight ratio between two classes from 1:1 to 150:1, the under-represented class's standard accuracy can be improved from 0\% to $\sim 60\%$ and its robust accuracy from $0\%$ to $\sim50\%$. However, the standard \& robust accuracy of the well-represented class is also drastically decreasing. For instance, the well-represented class's standard accuracy drops from 100\% to 60\%, and its robust accuracy drops from 100\% to 50\%. These results illustrate that adversarial training's performance can be significantly affected by the reweighting strategy. As a result, the reweighting strategy in this setting can hardly help improve the overall performance no matter which weight ratio is chosen, because the model's performance always presents a strong tension between these two classes.
As a comparison, for the naturally trained models (Figure~\ref{fig:pre_binary_3_7_nature_standard}), increasing the weights for the under-represented examples will only slightly decrease the performance on the well-represented class. More experiments using different binary imbalanced datasets are reported in Appendix~\ref{app_sec:pre_com2}, where we have similar observations.

\section{Theoretical Analysis}\label{sec:theory}

\begin{figure}[t]
\centering
\begin{subfigure}[b]{0.45\textwidth}
\centering
\includegraphics[width=1.6in]{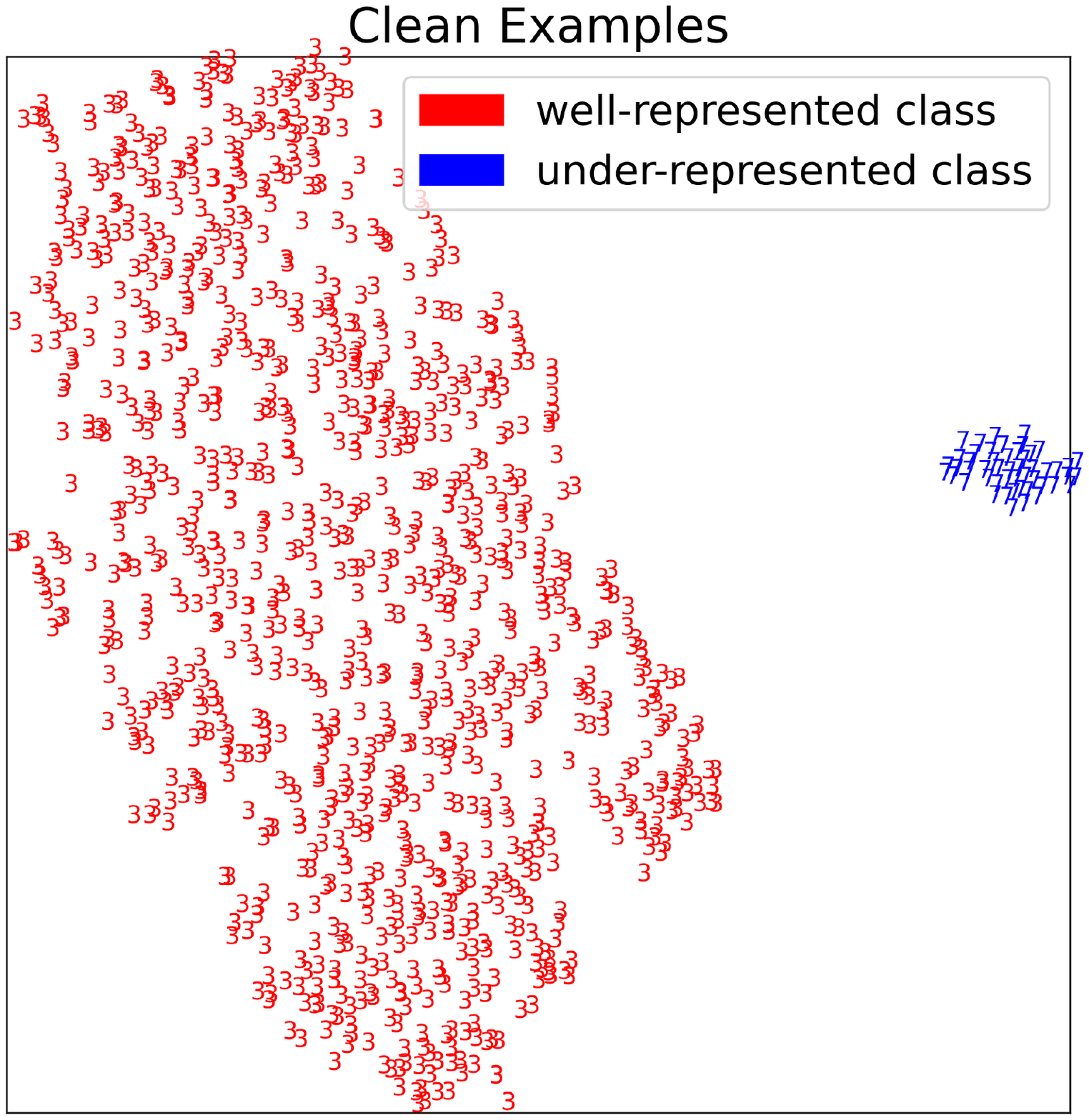}
\caption{Natural Training.}
\label{fig:preliminary_a}
\end{subfigure}
\begin{subfigure}[b]{0.45\textwidth}
\centering
\includegraphics[width=1.6in]{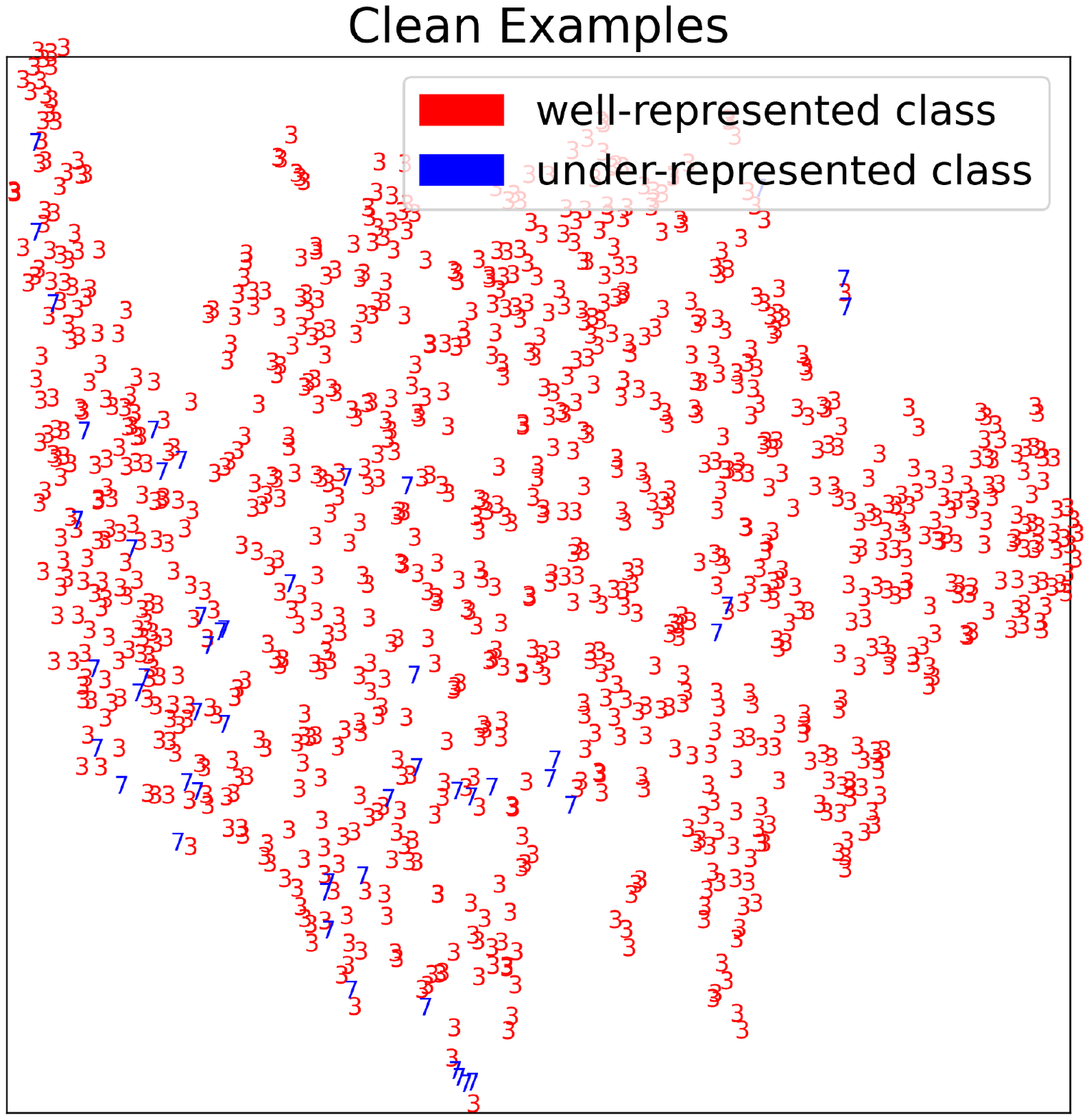}
\caption{Adversarial Training.}
\label{fig:preliminary_b}
\end{subfigure}
\caption{t-SNE visualization of penultimate layer features.}
\label{fig:pre_binary_tsne}
\end{figure}

In Section~\ref{subsec:pre2}, we observe that in natural training, the reweighting strategy can only make a small impact on the two classes' performance. This phenomenon has been extensively studied by recent works~\cite{byrd2019effect, xu2021understanding}, which investigate the decision boundaries of perfectly fitted DNNs. In particular, they consider the case where the data is linearly (or nonlinearly) separable and study the behavior of linear (or nonlinear) models optimized by reweighted SGD algorithms. Interestingly, they conclude that over the process of training, these models' decision boundaries will eventually converge to weight-agnostic solutions. For example, a linear classifier optimized by SGD on a linearly separable data will converge to the solution of the \textit{hard-margin support vector machine}~\cite{noble2006support}. In other words, as long as the data can be well separated, reweighting will not make huge influence on the finally trained models, which is consistent with what we observed above. 

Although these studies only focus on natural training, their interpretations and conclusions motivate our hypothesis in adversarial training. For adversarial training, we conjecture that it is because the models separate the data poorly, thus, their performance is highly sensitive to the reweighting strategy. As a direct validation of this hypothesis, in Figure~\ref{fig:pre_binary_tsne}, we visualize the learned (penultimate layer) features of the imbalanced training examples used in the binary classification problem in Section~\ref{subsec:pre2}. We find that adversarially trained models do present obviously poorer separability on the learned features. This suggests that, compared to naturally trained models, adversarially trained models have a weaker ability to separate training data and could potentially make themselves sensitive to reweighting. Next, we will theoretically analyze the impact of reweighting on linear models which are optimized under poorly separable data. 
Since our empirical study shows that adversarially trained models usually poorly separate the data (see Figure~\ref{fig:pre_binary_tsne}), the analysis can hopefully shed light on the behavior of reweighting in adversarial trained models in practice.

\textbf{Binary Classification Problem.} 
To construct the theoretical study, we focus on a binary classification problem, with a Gaussian mixture distribution $\mathcal{D}$ which is defined as:
\begin{align}
\label{eq:data_dist}
\begin{split}
    & y \sim \{-1, +1\},~~~
     x \sim  
    \begin{cases}
      \mathcal{N}(\mu, \sigma^2I), & \text{if $y= + 1$}\\
      \mathcal{N}(-\mu, \sigma^2I), & \text{if $y= - 1$}\\
    \end{cases} 
    \text{ \;  and } \mu = ( \overbrace{\eta,...,\eta}^\text{dim = d}),
\end{split}
\end{align}
where the two classes' centers $(\pm\mu \in \R ^d)$ with each dimension has mean value $\pm\eta$ ($\eta > 0$), variance $\sigma^2$. Formally, we define the data \textbf{\textit{separability}} as $S = \eta / \sigma^2$. Intuitively, if the separability term $S$ is larger, it suggests that two classes have farther distance or data examples of each class are more concentrated, so these two classes can be well separated. Previous works~\cite{byrd2019effect} also closely studied this term to describe data separability. 
Besides, we particularly define the imbalanced training dataset satisfying the condition $\text{Pr.}(y=+1) = K \cdot \text{Pr.}(y=-1)$ and $K>1$ which indicates the imbalance ratio between the two classes. During test, we assume that two classes have the equal probability to appear. Under data distribution $\mathcal{D}$, we will discuss the performance of linear classifiers $f(x) = \text{sign}(w^Tx -b)$ where $w$ and $b$ are the weight and bias term of model $f$. If a reweighting strategy is involved, we define the model will upweight the under-represented class ``-1'' by $\rho$. In the following lemma, we first derive the solution of the optimized linear classifier $f$ training on this imbalanced dataset. Then we will extend the result of Lemma~\ref{lemma1} to analyze the impact of data separability on the performance of model $f$. 

\begin{restatable}[]{lem}{lemm}
\label{lemma1}
Under the data distribution $\mathcal{D}$ as defined in Eq.~(\ref{eq:data_dist}), with an imbalanced ratio $K$ and a reweight ratio $\rho$, the optimal classifier which minimizes the (reweighted) empirical risk:
\begin{align}
\begin{split}
f^* = \argmin_f \bigg( &\text{Pr.}(f(x)\neq y | y = -1)\cdot \text{Pr.}(y = -1) \cdot \rho \\  
+ \; &\text{Pr.}(f(x)\neq y | y = +1)\cdot \text{Pr.}(y = +1) \bigg) 
\end{split}
\end{align}
has the solution: $w = \textbf{1}$ and $b = \frac{1}{2} \log (\frac{\rho}{K}) \frac{d\sigma^2}{\eta} = \frac{1}{2} \log (\frac{\rho}{K}) \frac{d}{S}$. 
\end{restatable}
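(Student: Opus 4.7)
The plan is to recognize Lemma~\ref{lemma1} as a Bayes-optimality statement for a two-Gaussian mixture with shared isotropic covariance, and to use the likelihood-ratio test to pin down the optimal linear classifier in closed form. First I would rewrite the two conditional error probabilities using the standard normal CDF: for a candidate $f(x)=\mathrm{sign}(w^Tx-b)$, the variable $w^Tx$ is $\mathcal{N}(w^T\mu,\sigma^2\|w\|^2)$ under $y=+1$ and $\mathcal{N}(-w^T\mu,\sigma^2\|w\|^2)$ under $y=-1$, which yields
\[
\mathrm{Pr}(f(x)\neq y\mid y=+1)=\Phi\!\left(\tfrac{b-w^T\mu}{\sigma\|w\|}\right),\qquad \mathrm{Pr}(f(x)\neq y\mid y=-1)=\Phi\!\left(-\tfrac{b+w^T\mu}{\sigma\|w\|}\right).
\]
Plugging these into the reweighted objective together with the class priors $\mathrm{Pr}(y=+1)=K/(K+1)$ and $\mathrm{Pr}(y=-1)=1/(K+1)$ reduces the problem to minimizing a scalar function of $(w,b)$ built from two Gaussian CDFs.

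Next I would cut the dimensionality. Two observations suffice: the objective is scale-invariant in $(w,b)$, since replacing $(w,b)$ by $(cw,cb)$ for $c>0$ leaves both the decision rule and the loss unchanged; and the joint law of $x$ is invariant under permutations of the coordinates, because $\mu=\eta\mathbf{1}$ and the covariance is $\sigma^2 I$. Together these imply that the optimal $w$ is a positive multiple of $\mathbf{1}$, so I may fix $w=\mathbf{1}$ without loss of generality and reduce to a one-dimensional optimization in $b$. With $w=\mathbf{1}$ one has $w^T\mu=d\eta$ and $\|w\|^2=d$, and the stationarity condition in $b$ reads $\rho\,\phi\!\left(\tfrac{b+d\eta}{\sqrt{d}\,\sigma}\right)=K\,\phi\!\left(\tfrac{b-d\eta}{\sqrt{d}\,\sigma}\right)$ once the common factor $1/(K+1)$ cancels. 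Taking logarithms and using the difference-of-squares identity $A^2-B^2=(A-B)(A+B)$ turns this into a linear equation in $b$, which solves to give the claimed closed form in terms of $\log(\rho/K)$, $\eta$, $\sigma^2$, and the separability $S=\eta/\sigma^2$.

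The step I expect to be the main obstacle is the reduction $w\propto\mathbf{1}$, since a direct first-order analysis in $d$ variables is messy. The cleanest route, which I would take if the symmetry-plus-scale-invariance argument feels insufficiently rigorous, is to invoke the classical likelihood-ratio derivation: because both class-conditional densities are Gaussian with a common covariance $\sigma^2 I$, the Bayes-optimal rule for the reweighted 0/1 loss is automatically affine in $x$ with normal direction $\mu_+-\mu_-=2\mu\propto\mathbf{1}$, so the minimizer over all linear classifiers coincides with the global Bayes rule. The threshold then falls out of the log-likelihood-ratio condition $\log\!\big(p(x\mid{+1})/p(x\mid{-1})\big)=\log(\rho/K)$ with essentially no further calculation, bypassing the $d$-dimensional gradient argument entirely.
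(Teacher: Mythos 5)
Your overall route is sound and, for the direction of $w$, arguably cleaner than the paper's: the paper establishes $w=\mathbf{1}$ by a coordinate-exchange contradiction and then does the same one-dimensional stationarity computation in $b$ that you propose, whereas your weighted likelihood-ratio (Bayes-optimality) argument gets $w\propto\mu\propto\mathbf{1}$ among all classifiers at once. The genuine problem is your last step: the stationarity condition you write, $\rho\,\phi\!\left(\frac{b+d\eta}{\sqrt{d}\,\sigma}\right)=K\,\phi\!\left(\frac{b-d\eta}{\sqrt{d}\,\sigma}\right)$ with the correct normalization $\sigma\|w\|=\sqrt{d}\,\sigma$, does \emph{not} solve to the claimed closed form. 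Taking logarithms gives $\log(\rho/K)=\frac{(b+d\eta)^2-(b-d\eta)^2}{2d\sigma^2}=\frac{2b\eta}{\sigma^2}$, i.e.\ $b=\frac{1}{2}\log(\frac{\rho}{K})\frac{\sigma^2}{\eta}$, with no factor of $d$. Your likelihood-ratio shortcut gives the same thing: the log-likelihood ratio is $2\eta\,\mathbf{1}^Tx/\sigma^2$, so the threshold is $\frac{\sigma^2}{2\eta}\log(\frac{\rho}{K})$. So the derivation you outline proves a statement with $b=\frac{1}{2}\log(\frac{\rho}{K})\frac{1}{S}$, not the lemma's $b=\frac{1}{2}\log(\frac{\rho}{K})\frac{d}{S}$, and the assertion that it "gives the claimed closed form" is false as written.

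The factor $d$ in the lemma comes from the paper's own computation, which standardizes $\sum_{i=1}^{d}\mathcal{N}(\pm\eta,\sigma^2)-b$ by $d\sigma$ rather than $\sqrt{d}\,\sigma$ (i.e.\ it treats the sum of $d$ independent coordinates as having standard deviation $d\sigma$). To match the statement you would have to adopt that normalization explicitly; with the variance computed correctly, your proof terminates at a different constant, and you should at least flag the discrepancy rather than claim agreement. A secondary point: permutation symmetry plus scale invariance alone does not force a minimizer to be proportional to $\mathbf{1}$ (symmetric objectives can have asymmetric minimizers), so that heuristic needs the Bayes/likelihood-ratio argument you give as backup — which, again, is a legitimate and tighter alternative to the paper's exchange argument for the $w=\mathbf{1}$ step.
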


The proof of Lemma~\ref{lemma1} can be found at Appendix~\ref{app_sec:lemma1}. Note that the final optimized classifier has a weight vector equal to $\bf 1$ and its bias term $b$ only depends on $K$, $\rho$ and the data separability $S$. In the following, our first theorem is focused on one special setting when $\rho = 1$, which is the original ERM model without reweighting. Specifically, Theorem~\ref{theorem1} calculates and compares the model's performance under data distributions: $\mathcal{D}_1$ (with a higher separability $S_1$) and $\mathcal{D}_2$ (with a lower separability $S_2$). From Theorem~\ref{theorem1}, we aim to compare the behavior of linear models when they can poorly separate data (like adversarial trained models) or they can well separate data (like naturally trained models).

\begin{restatable}[]{thm}{theoryA}
\label{theorem1}
Under two data distributions $(x^{(1)}, y^{(1)}) \in \mathcal{D}_1$ and $(x^{(2)}, y^{(2)}) \in \mathcal{D}_2$ with the separability $S_1 > S_2$, let $f_1^*$ and $f_2^*$ be the optimal non-reweighted classifiers ($\rho=1$) under $\mathcal{D}_1$ and $\mathcal{D}_2$, respectively. Given the imbalance ratio $K$ is large enough, we have:
\begin{align}
\begin{split}
&\text{Pr.}(f_1^*(x^{(1)})\neq y^{(1)} | y^{(1)} = -1) - 
\text{Pr.}(f_1^*(x^{(1)})\neq y^{(1)} | y^{(1)} = +1) \\ <~ 
&\text{Pr.}(f_2^*(x^{(2)})\neq y^{(2)} | y^{(2)} = -1) - 
\text{Pr.}(f_2^*(x^{(2)})\neq y^{(2)} | y^{(2)} = +1).
\end{split}
\end{align}
\end{restatable}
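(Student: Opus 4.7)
My strategy is to reduce the two class-conditional errors to explicit one-dimensional Gaussian tail probabilities and then show that the gap between them is strictly decreasing in the separability $S$.

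First, I invoke Lemma~\ref{lemma1} with $\rho = 1$ to obtain, for each $\mathcal{D}_i$, an optimal classifier of the explicit form $w = \mathbf{1}$ and $b_i = -\tfrac{1}{2}(\log K)\, d / S_i$. Because $w^\top x = \sum_{j=1}^{d} x_j$ is a sum of $d$ independent Gaussian coordinates, under $\mathcal{D}_i$ it has distribution $\mathcal{N}(\pm d\eta_i, d\sigma_i^2)$ conditional on $y = \pm 1$. After introducing $a := \sqrt{d}\,\eta/\sigma$ and $c := \log K$, the two class-conditional errors collapse to
\begin{align*}
\Pr(f^*(x) \neq y \mid y = +1) = \Phi\!\left(-\tfrac{cd}{2a} - a\right), \qquad
\Pr(f^*(x) \neq y \mid y = -1) = \Phi\!\left(\tfrac{cd}{2a} - a\right).
\end{align*}
Since $a$ is strictly increasing in $S = \eta/\sigma^2$ (irrespective of whether $\sigma$ or $\eta$ is held fixed when varying $S$), it suffices to show that $h(a) := \Phi(\tfrac{cd}{2a} - a) - \Phi(-\tfrac{cd}{2a} - a)$ is strictly decreasing in $a$.

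Next, a direct derivative computation gives
\begin{align*}
h'(a) = -\phi(\alpha)\!\left(\tfrac{cd}{2a^2} + 1\right) \,-\, \phi(\beta)\!\left(\tfrac{cd}{2a^2} - 1\right),
\end{align*}
where $\alpha = \tfrac{cd}{2a} - a$ and $\beta = -\tfrac{cd}{2a} - a$. The algebraic identity $\beta^2 - \alpha^2 = 2cd > 0$ immediately yields $\phi(\alpha) > \phi(\beta)$. Once $K$ is large enough that $cd/(2a^2) \ge 1$, both bracketed factors become nonnegative and $h'(a) < 0$ follows without further work; this is precisely where the ``$K$ large enough'' hypothesis is used.

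Finally, because $a(S)$ is strictly increasing in $S$ and $h$ is strictly decreasing in $a$, the hypothesis $S_1 > S_2$ forces $h(S_1) < h(S_2)$, which is exactly the claimed inequality between the error gaps under $\mathcal{D}_1$ and $\mathcal{D}_2$. The main obstacle I anticipate is the sign analysis of $h'$: when $cd/(2a^2) < 1$, the two Gaussian-density terms multiply coefficients of opposite sign, so without a largeness assumption on $K$ one must invoke the sharper dominance estimate combining $\phi(\alpha) > \phi(\beta)$ with $\tfrac{cd}{2a^2} + 1 > 1 - \tfrac{cd}{2a^2}$; the $K$-large regime is simply the cleanest way to dispatch this case.
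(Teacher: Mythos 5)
Your proof is correct and reaches the paper's conclusion by a genuinely different final argument. Like the paper, you first invoke Lemma~\ref{lemma1} with $\rho=1$ to reduce both class-conditional errors to one-dimensional Gaussian tails (your normalization by $\sqrt{d}\,\sigma$ is in fact the correct one; the paper's appendix divides by $d\sigma$, which is harmless for the argument). The divergence is in how the comparison is made. The paper argues in two separate steps: it orders the class ``$-1$'' errors directly (using $b_1>b_2$ and $\sigma_1<\sigma_2$), and then orders the class ``$+1$'' errors in the opposite direction, which is exactly where it needs $\log K>2\eta^2/(\sigma_1\sigma_2)$. You instead treat the error gap as a single function $h(a)=\Phi\!\left(\tfrac{cd}{2a}-a\right)-\Phi\!\left(-\tfrac{cd}{2a}-a\right)$ of the signal-to-noise ratio $a=\sqrt{d}\,\eta/\sigma$ and show $h'(a)<0$ via the identity $\beta^2-\alpha^2=2cd$, then conclude from $a_1>a_2$. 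This buys a cleaner, and in fact slightly stronger, result: as your closing remark observes, $\phi(\alpha)>\phi(\beta)$ combined with $\tfrac{cd}{2a^2}+1>1-\tfrac{cd}{2a^2}$ gives $h'(a)<0$ for \emph{every} $K>1$, so the gap inequality does not require large $K$ at all, whereas the paper's Step~2 genuinely does (the ``$+1$'' errors alone are ordered only for large $K$). Two small caveats: (i) your large-$K$ shortcut needs $\tfrac{cd}{2a^2}\ge 1$ on the whole interval $[a_2,a_1]$, i.e.\ at the larger value $a_1$ — harmless under ``$K$ large enough,'' but worth stating; (ii) your parenthetical that $a$ is increasing in $S$ ``irrespective of whether $\sigma$ or $\eta$ is held fixed'' is only true when one of them is held fixed, since $S=\eta/\sigma^2$ does not determine $a=\sqrt{d}\,\eta/\sigma$ when both vary; however, this is the same implicit normalization the paper buries in its ``rescale without loss of generality'' step, so it is a shared caveat of the theorem's formulation rather than a gap specific to your argument.
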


The proof of Theorem~\ref{theorem1} is provided at Appendix~\ref{app_sec:theorem1}. Intuitively, Theorem~\ref{theorem1} suggests that when the data separability $S$ is low (such as $\mathcal{D}_2$), the optimized classifier (without reweighting) can intrinsically have a larger error difference between the under-represented class ``-1'' and the well-represented class ``+1''. Similar to the observation in Section~\ref{subsec:pre1} and Figure~\ref{fig:pre_binary_tsne}, adversarially trained models also present a weak ability to separate data, and it also presents a strong performance gap between the well-represented class and under-represented class. Conclusively, Theorem~\ref{theorem1} indicates that the poor ability to separate the training data can be one important reason which leads to the strong performance gap of adversarially trained models.

Next, we consider the case when the reweighting strategy is applied. Similar to Theorem~\ref{theorem1}, we also calculate the models' classwise error under $\mathcal{D}_1$ and $\mathcal{D}_2$ with different levels of separability. In particular, Theorem~\ref{theorem2} focuses on the well-represented class ``+1'' and calculates its error increase when upweighting the under-represented class ``-1'' by $\rho$. Through the analysis in Theorem~\ref{theorem2}, we compare the impact of upweighting the under-represented class on the performance of well-represented class.

\begin{restatable}[]{thm}{theoryB}
\label{theorem2}
Under two data distributions $(x^{(1)}, y^{(1)}) \in \mathcal{D}_1$ and $(x^{(2)}, y^{(2)}) \in \mathcal{D}_2$ with different separability $S_1 > S_2$, let $f_1^*$ and $f_2^*$ be the optimal non-reweighted classifiers ($\rho=1$) under $\mathcal{D}_1$ and $\mathcal{D}_2$ respectively, and let ${f'_1}^*$ and ${f'_2}^*$ be the optimal reweighted classifiers under $\mathcal{D}_1$ and $\mathcal{D}_2$ given the optimal reweighting ratio ($\rho = K$).  Given the imbalance ratio $K$ is large enough, we have:
\begin{align}
\begin{split}
&\text{Pr.}({f'_1}^*(x^{(1)})\neq y^{(1)} | y^{(1)} = +1) - 
\text{Pr.}(f_1^*(x^{(1)})\neq y^{(1)} | y^{(1)} = +1) \\ <~ 
&\text{Pr.}({f'_2}^*(x^{(2)})\neq y^{(2)} | y^{(2)} = +1) - 
\text{Pr.}(f_2^*(x^{(2)})\neq y^{(2)} | y^{(2)} = +1).
\end{split}
\end{align}
\end{restatable}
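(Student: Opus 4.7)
The plan is to reduce the claimed inequality to a monotonicity statement for a difference of Gaussian tail probabilities in the separability parameter. First I would invoke Lemma~\ref{lemma1} to write the four optimal classifiers in closed form: all share the weight vector $w=\mathbf{1}$, with biases $b_i = -\frac{\log K}{2}\cdot\frac{d}{S_i}$ for the non-reweighted $f_i^*$ (taking $\rho=1$) and $b'_i = 0$ for the reweighted ${f'_i}^*$ (taking $\rho=K$). Conditional on $y^{(i)}=+1$, the projection $\mathbf{1}^{\top} x^{(i)}$ is distributed as $\mathcal{N}(d\eta_i,\; d\sigma_i^2)$, so the class-$+1$ error of any classifier $\mathrm{sign}(\mathbf{1}^\top x - b)$ equals $\Phi\!\left((b - d\eta_i)/(\sqrt{d}\,\sigma_i)\right)$. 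Setting $\alpha_i := \sqrt{d}\,\eta_i/\sigma_i$ and $c := (d\log K)/2$, direct substitution expresses each side of the target inequality as
\[
\Delta_i \;=\; \Phi(-\alpha_i)\;-\;\Phi\!\left(-\alpha_i - c/\alpha_i\right),
\]
which is strictly positive because $K>1$. Since $S_i = \eta_i/\sigma_i^2$, under the natural convention that $\sigma_i$ is held fixed (so that separability controls the signal-to-noise ratio $\alpha_i$ monotonically), $S_1 > S_2$ translates into $\alpha_1 > \alpha_2$, and the theorem becomes the assertion that $g(\alpha) := \Phi(-\alpha) - \Phi(-\alpha - c/\alpha)$ is strictly decreasing on $(0,\infty)$.

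For the monotonicity I would differentiate to obtain
\[
g'(\alpha) \;=\; -\phi(\alpha) \;+\; \left(1 - c/\alpha^2\right)\phi(\alpha + c/\alpha),
\]
and split into two regimes. If $\alpha^2 \le c$, then $1 - c/\alpha^2 \le 0$, so $g'(\alpha) < 0$ immediately. If $\alpha^2 > c$, I would apply the Gaussian ratio identity $\phi(\alpha + c/\alpha)/\phi(\alpha) = \exp(-c - c^2/(2\alpha^2))$ and reduce the inequality $g'(\alpha) < 0$ to $(1 - c/\alpha^2)\exp(-c - c^2/(2\alpha^2)) < 1$, which holds because each factor on the left lies strictly in $(0,1)$. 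Strict monotonicity then gives $g(\alpha_1) < g(\alpha_2)$, i.e., $\Delta_1 < \Delta_2$.

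The main obstacle is precisely this case split: because $c/\alpha^2$ can be either smaller or larger than $1$, the coefficient multiplying $\phi(\alpha + c/\alpha)$ in $g'(\alpha)$ may change sign, so no single uniform bound handles both regimes. This is also where the ``$K$ large enough'' hypothesis plays its role --- it keeps $c = (d\log K)/2$ comfortably positive (ruling out the trivial $K=1$ case where $\Delta_i\equiv 0$), and for large $K$ the tail $\Phi(-\alpha - c/\alpha)$ is exponentially small, so $\Delta_i$ is essentially governed by $\Phi(-\alpha_i)$ alone, which is unambiguously decreasing in the separability. This matches and reinforces the formal case-split argument and yields the desired strict inequality.
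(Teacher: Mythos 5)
Your proof is correct, but it takes a genuinely different route from the paper's. The paper never analyzes the error \emph{difference} directly: it combines two term-by-term comparisons --- the inequality in Eq.~(\ref{eq:eq13}) from the proof of Theorem~\ref{theorem1}, which shows the non-reweighted class ``+1'' errors satisfy $\text{Pr.}(f_1^*(x^{(1)})\neq y^{(1)} \mid y^{(1)}=+1) > \text{Pr.}(f_2^*(x^{(2)})\neq y^{(2)} \mid y^{(2)}=+1)$ only under the large-$K$ condition $\log K > 2\eta^2/(\sigma_1\sigma_2)$, and the observation that with $\rho=K$ the optimal bias is $0$, so the reweighted errors are ordered the opposite way --- and then subtracts. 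You instead write both sides as $g(\alpha_i)=\Phi(-\alpha_i)-\Phi(-\alpha_i-c/\alpha_i)$ and prove the single monotonicity claim that $g$ is strictly decreasing in $\alpha$ for any fixed $c>0$, via the sign of $g'$, the Gaussian density ratio, and the case split $\alpha^2\le c$ versus $\alpha^2>c$; this is a valid calculation and yields the stated inequality. What your route buys: it holds for every $K>1$, so the ``$K$ large enough'' hypothesis is not actually needed --- note that your closing interpretation of that hypothesis is slightly off, since in the paper it is required precisely for the ordering of the non-reweighted errors inherited from Theorem~\ref{theorem1}, not merely to keep $c$ positive --- making your argument strictly stronger and self-contained. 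What the paper's route buys is brevity: it reuses the computation already done for Theorem~\ref{theorem1} and needs no further calculus. Two minor caveats, neither fatal: you normalize by fixing $\sigma_i$ and varying $\eta_i$ whereas the paper fixes $\eta_i$ and varies $\sigma_i$ --- both are the same kind of WLOG, since only the signal-to-noise ratio $\eta_i/\sigma_i$ enters the class-conditional errors --- and your $\sqrt{d}\,\sigma_i$ scaling of the projected standard deviation differs from the $d\sigma_i$ convention used in the paper's proof of Lemma~\ref{lemma1}, but this discrepancy only rescales the constant $c$ and leaves your argument intact.
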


The detail the proof of Theorem~\ref{theorem2} at Appendix~\ref{app_sec:theorem2}. The theorem shows that, when the data distribution has poorer data separability, such as $\mathcal{D}_2$, upweighting the under-represented class can cause greater hurt on the performance of the well-represented class. It is also consistent with our empirical findings about adversarial training models. Since the adversarially trained models poorly separate the data (Figure~\ref{fig:pre_binary_tsne}), upweighting the under-represented class always drastically decreases the performance of well-represented class (Section~\ref{subsec:pre2}). Through the discussions in both Theorem~\ref{theorem1} and Theorem~\ref{theorem2}, we can conclude that the poor separability can be one important reason which makes adversarial training and its reweighted variants extremely difficult to achieve good performance under imbalance data distribution. Therefore, in the next section, we explore potential solutions which can facilitate the reweighting strategy in adversarial training.

\section{Separable Reweighted Adversarial Training (SRAT)}\label{sec:methodology}

The observations from both preliminary studies and theoretical understandings indicate that more separable data will advance the reweighting strategy in adversarial training under imbalanced scenarios. Thus, in this section, we present a framework, Separable Reweighted Adversarial Training (SRAT), that enables the effectiveness of the reweighting strategy in adversarial training under imbalanced scenarios by increasing the separability in the learned latent feature space. 

\subsection{Reweighted Adversarial Training}

Given an input example $(\mathbf{x}, y)$, adversarial training~\cite{madry2017towards} aims to obtain a robust model $f_\theta$ that can make the same prediction $y$ for an adversarial example $\mathbf{x}'$, generated by applying an adversarially perturbation on $\mathbf{x}$. The adversarially perturbations are typically bounded by a small value $\epsilon$ under $L_p$-norm, i.e., $\Vert \mathbf{x}' - \mathbf{x} \Vert_p \leq \epsilon$. More formally, adversarial training can be formulated as solving a min-max optimization problem, where a DNN model is trained on minimizing the prediction error on adversarial examples generated by iteratively maximizing some loss function.

As indicated in Section~\ref{subsec:pre1}, adversarial training cannot be applied in imbalanced scenarios directly, as it presents very low performance on under-represented classes. To tackle this problem, a natural idea is to integrate existing imbalanced learning strategies proposed in natural training, such as reweighting, into adversarial training to improve the trained model's performance on those under-represented classes. Hence, the reweighted adversarial training can be defined as
\begin{align}
    \min_{\theta} \frac{1}{n} \sum_{i = 1}^n \max_{\Vert \mathbf{x}_i' - \mathbf{x}_i \Vert_p \leq \epsilon} w_i \mathcal{L}(f_{\theta}(\mathbf{x}'_i), y_i),
    \label{eq:reweight_adv_training}
\end{align}
where $w_i$ is a reweighting value assigned for each input sample $(\mathbf{x}_i, y_i)$ based on the example size of the class $(\mathbf{x}_i, y_i)$ belongs to or some properties of $(\mathbf{x}_i, y_i)$. %\jt{we need to cite papers about these reweighting methods} 
In most existing adversarial training methods~\cite{madry2017towards,zhang2019theoretically,wang2019improving}, the cross entropy (CE) loss is adopted as the loss function $\mathcal{L}(\cdot, \cdot)$. However, the CE loss could be suboptimal in imbalanced settings and some new loss functions designed for imbalanced settings specifically, such as Focal loss~\cite{lin2017focal} and LDAM loss~\cite{cao2019learning}, have been prove superiority in natural training. Hence, besides CE loss, Focal loss and LDAM loss can also be adopted as the loss function $\mathcal{L}(\cdot, \cdot)$ in Eq.~(\ref{eq:reweight_adv_training}).

\subsection{Increasing Feature Separability}

Our preliminary study indicates that only reweighted adversarial training cannot work well under imbalanced scenarios. Moreover, the reweighting strategy behaves very differently between natural training and adversarial training. Meanwhile, our theoretical analysis suggests that the poor separability of the feature space produced by the adversarially trained model can be one reason to understand these observations. Hence, in order to facilitate the reweighting strategy in adversarial training under imbalanced scenarios, we equip a feature separation loss with our SRAT method. We aim to enforce the learned feature space as separable as possible. More specifically, the goal of the feature separation loss is to make (1) the learned features of examples from the same class well clustered, and (2) the features of examples from different classes well separated. By achieving this goal, the model is able to learn more discriminative features for each class. Correspondingly adjusting the decision boundary via the reweighting strategy to fit under-represented classes' examples more will not hurt well-represented classes drastically. The feature separation loss is formally defined as:
\begin{align}
\begin{split}
    \mathcal{L}_{sep}(\mathbf{x}'_i) = -\frac{1}{\vert P(i) \vert} \sum_{p \in P(i)} \log \frac{\exp (\mathbf{z}'_i \cdot \mathbf{z}'_p / \tau)}{\sum_{a \in A(i)} \exp (\mathbf{z}'_i \cdot \mathbf{z}'_a / \tau)},
\label{eq:separation_loss}
\end{split}
\end{align}
where $\mathbf{z}'_i$ is the feature representation of the adversarial example $\mathbf{x}'_i$ of $\mathbf{x}_i$, $\tau \in \mathcal{R}^+$ is a scalar temperature parameter, $P(i)$ denotes the set of input examples belonging to the same class with $\mathbf{x}_i$ and $A(i)$ indicates the set of all input examples excepts $\mathbf{x}'_i$. When minimizing the feature separation loss during training, the learned features of examples from the same class will tend to aggregate together in the latent feature space, and, hence, result in a more separable latent feature space. Our proposed feature separation loss $\mathcal{L}_{sep}(\cdot)$ is inspired by the supervised contrastive loss proposed in~\cite{khosla2020supervised}. The main difference is, instead of applying data augmentation techniques to generate two different views of each data example and feeding the model with augmented data examples, our feature separation loss directly takes the adversarial example $\mathbf{x}'_i$ of each data example $\mathbf{x}_i$ as input.

\subsection{Training Schedule}

By combining the feature separation loss with the reweighted adversarial training, the final object function for Separable reweighted Adversarial Training (SRAT) can be defined as:
\begin{align}
\label{eq:objective}
\min_{\theta} \frac{1}{n} \sum_{i = 1}^n \max_{\Vert \mathbf{x}_i' - \mathbf{x}_i \Vert_p \leq \epsilon} w_i \mathcal{L}(f_{\theta}(\mathbf{x}'_i), y_i) + \lambda \mathcal{L}_{sep}(\mathbf{x}'_i),
\end{align}
where we use a hyper-parameter $\lambda$ to balance the contributions from the reweighted adversarial training and the feature separation loss.

In practice, in order to better take advantage of the reweighting strategy in our SRAT method, we adopt a deferred reweighting training schedule~\cite{cao2019learning}. Specifically, before annealing the learning rate, our SRAT method first trains a model guided by Eq.~(\ref{eq:objective}) without introducing the reweighting strategy, i.e., setting $w_i = 1$ for every input example $\mathbf{x}'_i$, and then applies reweighting into model training process with a smaller learning rate. Our SRAT method enables to learn more separable feature space, thus comparing with applying the reweighting strategy from the beginning of training, this deferred re-balancing training schedule enables the reweighting strategy to obtain more benefits from our SRAT method, and as a result, it can boost the performance of our SRAT method with the help of the reweighting strategy. The detailed training algorithm for SRAT is shown in Appendix~\ref{app_sec:algorithm}.

\section{Experiment}\label{sec:experiment}

In this section, we perform comprehensive experiments to validate the effectiveness of our proposed SRAT method. We first compare our method with several representative imbalanced learning methods in adversarial training under various imbalanced scenarios and then conduct ablation study to understand our method more deeply. 

\subsection{Experimental Settings}\label{subsec:datasets}

\textbf{Datasets.} We conduct experiments on multiple imbalanced training datasets artificially created from two benchmark image datasets CIFAR10~\cite{krizhevsky2009learning} and SVHN~\cite{netzer2011reading} with diverse imbalanced distributions. Specifically, we consider two types of imbalance types: Exponential (Exp) imbalance~\cite{cui2019class} and Step imbalance~\cite{buda2018systematic}. For Exp imbalance, the number of training examples of each class will be reduced according to an exponential function $n=n_i\tau^i$, where $i$ is the class index, $n_i$ is the number of training examples in the original CIFAR10/SVHN training dataset for class $i$ and $\tau \in (0,1)$. We categorize five most frequent classes in the constructed imbalanced training dataset as well-represented classes and the remaining five classes as under-represented classes. For Step imbalance, we follow the same process adopted in Section~\ref{subsec:pre1} to construct imbalanced training datasets based on CIFAR10 and SVHN, separately. Moreover, in both imbalanced types, we denote \emph{imbalance ratio} $K$ as the ratio between training example sizes of the most frequent and least frequent class. In our experiments, we construct four different imbalanced datasets, named as ``Step-100", ``Step-10", ``Exp-100" and ``Exp-10", by adopting different imbalanced types (Step or Exp) with different imbalanced ratios ($K=100$ or $K=10$) to train models, and evaluate model's performance on the original uniformly distributed test datasets of CIFAR10 and SVHN correspondingly. More detailed information about imbalanced training sets used in our experiments can be found in Appendix~\ref{app_sec:dataset}.

\textbf{Baseline methods.} We implement several representative and state-of-the-art imbalanced learning methods (or their combinations) into adversarial training as baseline methods. These methods include: (1) Focal loss (Focal); (2) LDAM loss (LDAM); (3) Class-balanced reweighting (CB-Reweight)~\cite{cui2019class}, where each example is reweighted proportionally by the inverse of the effective number\footnote{The effective number is defined as the volume of examples and can be calculated by $(1-\beta^{n_i})/(1-\beta)$, where $\beta \in [0, 1)$ is a hyperparameter and $n_i$ denotes the number of examples of class $i$.} of its class; (4) Class-balanced Focal loss (CB-Focal)~\cite{cui2019class}, a combination of Class-balanced method~\cite{cui2019class} and Focal loss~\cite{lin2017focal}, where well-classified examples will be down-weighted while hard-classified examples will be up-weighted controlled by their corresponding effective numbers; (5) deferred reweighted CE loss (DRCB-CE), where a deferred reweighting training schedule is applied based on the CE loss; (6) deferred reweighted Class-balanced Focal loss (DRCB-Focal), where a deferred reweighting training schedule is applied based on the CB-Focal loss; (7) deferred reweighted Class-balanced LDAM loss (DRCB-LDAM)~\cite{cao2019learning}, where a deferred reweighting training schedule is applied based on the CB-LDAM loss. In addition, we also include the original PGD adversarial training method using cross entropy loss (CE) in our experiments.

\textbf{Our proposed methods.} We evaluate three variants of our proposed SRAT method with different implementations of the prediction loss $\mathcal{L}(\cdot, \cdot)$ in Eq.~(\ref{eq:reweight_adv_training}), i.e., CE loss, Focal loss and LDAM loss. The variant utilizing CE loss is denoted as SRAT-CE, and, similarly, other two variants are denoted as SRAT-Focal and SRAT-LDAM, respectively. For all these three variants, Class-balanced method~\cite{cui2019class} is adopted to set reweighting values within the deferred reweighting training schedule.  

\textbf{Implementation details.} We implement all methods used in our experiments based on a Pytorch library DeepRobust~\cite{li2020deeprobust}. For CIFAR10 based imbalanced datasets, the adversarial examples used in training are calculated by PGD-10, with a perturbation budget $\epsilon=8/255$, and step size $\gamma=2/255$. For robustness evaluation, we report robust accuracy under $l_{\infty}$-norm $8/255$ attacks generated by PGD-20 on Resnet-18~\cite{he2016deep} models. For SVHN based imbalanced datasets, the setting is similar with CIFAR10 based datasets, excepts we set step size $\gamma$ to $1/255$ in both training and test phases, as suggested in~\cite{wu2020adversarial}. For the deferred reweighting training schedule used in our methods and some baseline methods, we set the number of the training epochs to 200 and the initial learning rate to 0.1, and then decay the learning rate at epoch 160 and 180 with the ratio 0.01. The reweighting strategy will be applied starting from epoch 160.

\subsection{Performance Comparison}

Table~\ref{tab:step_main_result} and~\ref{tab:longtail_main_result} show the performance comparison on various imbalanced CIFAR10 datasets with different imbalance types and imbalance ratios. In these two tables, we use bold values to denote the highest accuracy among all methods and use the underline values to indicate our SRAT variants which achieve the highest accuracy among their corresponding baseline methods utilizing the same loss function for making predictions. Due to the limited space, we report the performance comparison on SVHN based imbalanced datasets in Appendix~\ref{app_sec:svhn_results}. 

From Table~\ref{tab:step_main_result} and Table~\ref{tab:longtail_main_result}, we can make the following observations. First, compared to baseline methods, our SRAT methods can obtain improved performance in terms of both overall standard \& robust accuracy under almost all imbalanced settings. More importantly, our SRT methods make significantly improvement on those under-represented classes, especially under the extremely imbalanced setting. For example, on the Step imbalanced dataset with imbalance ratio $K=100$, our SRAT-Focal method improves the standard accuracy on under-represented classes from 21.81\% achieved by the best baseline method utilizing Focal loss to 51.83\% and robust accuracy from 3.24\% to 15.89\%. These results demonstrate that our proposed SRAT method is able to obtain more robustness under imbalanced settings. Second, the performance gap among three variants SRAT-CE, SRAT-Focal and SRAT-LDAM are mainly caused by the gap between the loss functions in these methods. As shown in Table~\ref{tab:step_main_result} and~\ref{tab:longtail_main_result}, DRCB-LDAM typically performs better than DRCE-CE and DRCB-Focal, and similarly, SRAT-LDAM outperforms SRAT-CE and SRAT-Focal under corresponding imbalanced settings. 

\begin{table}[t]
\small
\centering
\caption{Performance comparison on imbalanced CIFAR10 datasets (Imbalanced Type: Step)}
\label{tab:step_main_result}
\begin{tabular}{c|c|c|c|c|c|c|c|c}
\hline
Imbalance Ratio & \multicolumn{4}{c}{10} & \multicolumn{4}{|c}{100} \\
\hline
Imbalance Ratio & \multicolumn{2}{c}{Standard Accuracy} & \multicolumn{2}{|c}{Robust Accuracy} & \multicolumn{2}{|c}{Standard Accuracy} & \multicolumn{2}{|c}{Robust Accuracy} \\
\hline
Method & Overall & Under & Overall & Under & Overall & Under & Overall & Under \\
\hline
CE  & 63.26 & 40.62 & 36.96 & 14.23 & 47.29 & 9.03 & 30.39 & 1.62 \\
Focal & 63.57 & 41.17 & 36.89 & 14.25 & 47.36 & 9.03 & 30.12 & 1.45 \\
LDAM & 57.08 & 31.09 & 37.18 & 12.44 & 42.49 & 0.85 & 30.80 & 0.05 \\
CB-Reweight & 73.30 & 74.80 & 41.34 & 42.15 & 37.68 & 19.64 & 25.58 & 10.33 \\
CB-Focal & 73.47 & 73.69 & 41.19 & 41.02 & 15.44 & 0.00 & 14.46 & 0.00 \\
DRCB-CE & 75.89 & 70.55 & 39.93 & 33.33 & 53.40 & 22.86 & 28.31 & 3.35 \\
DRCB-Focal & 74.61 & 67.06 & 37.91 & 29.50 & 52.75 & 21.81 & 27.78 & 3.24 \\
DRCB-LDAM & 72.95 & 75.42 & 45.23 & 44.98 & 61.60 & 50.69 & 31.37 & 16.25 \\
\hline
\hline
SRAT-CE & \underline{\textbf{76.32}} & 73.20 & \underline{41.71} & 37.86 & \underline{59.10} & \underline{40.24} & 30.02 & \underline{11.72} \\
SRAT-Focal & \underline{75.41} & \underline{74.91} & \underline{42.05} & \underline{41.28} & \underline{62.93} & \underline{51.83} & 28.38 & \underline{15.89} \\
SRAT-LDAM & \underline{73.99} & \underline{\textbf{76.63}} & \underline{\textbf{45.60}} & \underline{\textbf{45.96}} & \underline{\textbf{63.13}} & \underline{\textbf{52.73}} & \underline{\textbf{33.51}} & \underline{\textbf{18.89}} \\
\hline
\end{tabular}
\end{table}

\begin{table}[t]
\small
\centering
\caption{Performance comparison on imbalanced CIFAR10 datasets (Imbalanced Type: Exp). }
\label{tab:longtail_main_result}
\begin{tabular}{c|c|c|c|c|c|c|c|c}
\hline
Imbalance Ratio & \multicolumn{4}{c}{10} & \multicolumn{4}{|c}{100} \\
\hline
Metric & \multicolumn{2}{c}{Standard Accuracy} & \multicolumn{2}{|c}{Robust Accuracy} & \multicolumn{2}{|c}{Standard Accuracy} & \multicolumn{2}{|c}{Robust Accuracy} \\
\hline
Method & Overall & Under & Overall & Under & Overall & Under & Overall & Under \\
\hline
CE & 71.95 & 64.09 & 37.94 & 26.79 & 48.40 & 23.04 & 26.94 & 6.17 \\
Focal & 72.06 & 63.99 & 37.62 & 26.27 & 49.16 & 23.69 & 26.84 & 5.88 \\
LDAM & 67.39 & 58.01 & 41.35 & 28.65 & 48.39 & 25.69 & 29.51 & 8.95 \\
CB-Reweight & 75.17 & 76.87 & 41.02 & 41.67 & 57.49 & 56.47 & 29.01 & 26.53 \\
CB-Focal & 74.73 & 76.67 & 38.86 & 42.41 & 50.35 & 60.05 & 27.15 & \textbf{33.56} \\
DRCB-CE & 76.25 & 75.83 & 40.02 & 37.93 & 57.30 & 37.90 & 26.97 & 10.57 \\
DRCB-Focal & 75.36 & 72.72 & 37.76 & 33.83 & 54.76 & 31.79 & 25.24 & 7.81 \\
DRCB-LDAM & 73.92 & 78.53 & 46.29 & 48.81 & 62.65 & 57.19 & 31.66 & 22.11 \\
\hline
\hline
SRAT-CE & \underline{\textbf{76.94}} & \underline{79.50} & \underline{41.50} & \underline{43.08} & \underline{\textbf{64.93}} & \underline{64.34} & \underline{29.68} & 25.42 \\
SRAT-Focal & 75.26 & \underline{\textbf{80.52}} & \underline{42.37} & \underline{47.22} & \underline{62.57} & \underline{64.88} & \underline{30.34} & 28.66 \\
SRAT-LDAM & \underline{74.63} & \underline{79.82} & \underline{\textbf{46.72}} & \underline{\textbf{50.38}} & \underline{63.11} & \underline{\textbf{65.60}} & \underline{\textbf{34.22}} & \underline{32.55} \\
\hline
\end{tabular}
\end{table}

\subsection{Ablation Study}

In this subsection, we provide ablation study to understand our SRAT method more comprehensively.

\begin{figure}[b]
\begin{subfigure}[b]{0.32\textwidth}
\centering
\includegraphics[width=1.4in]{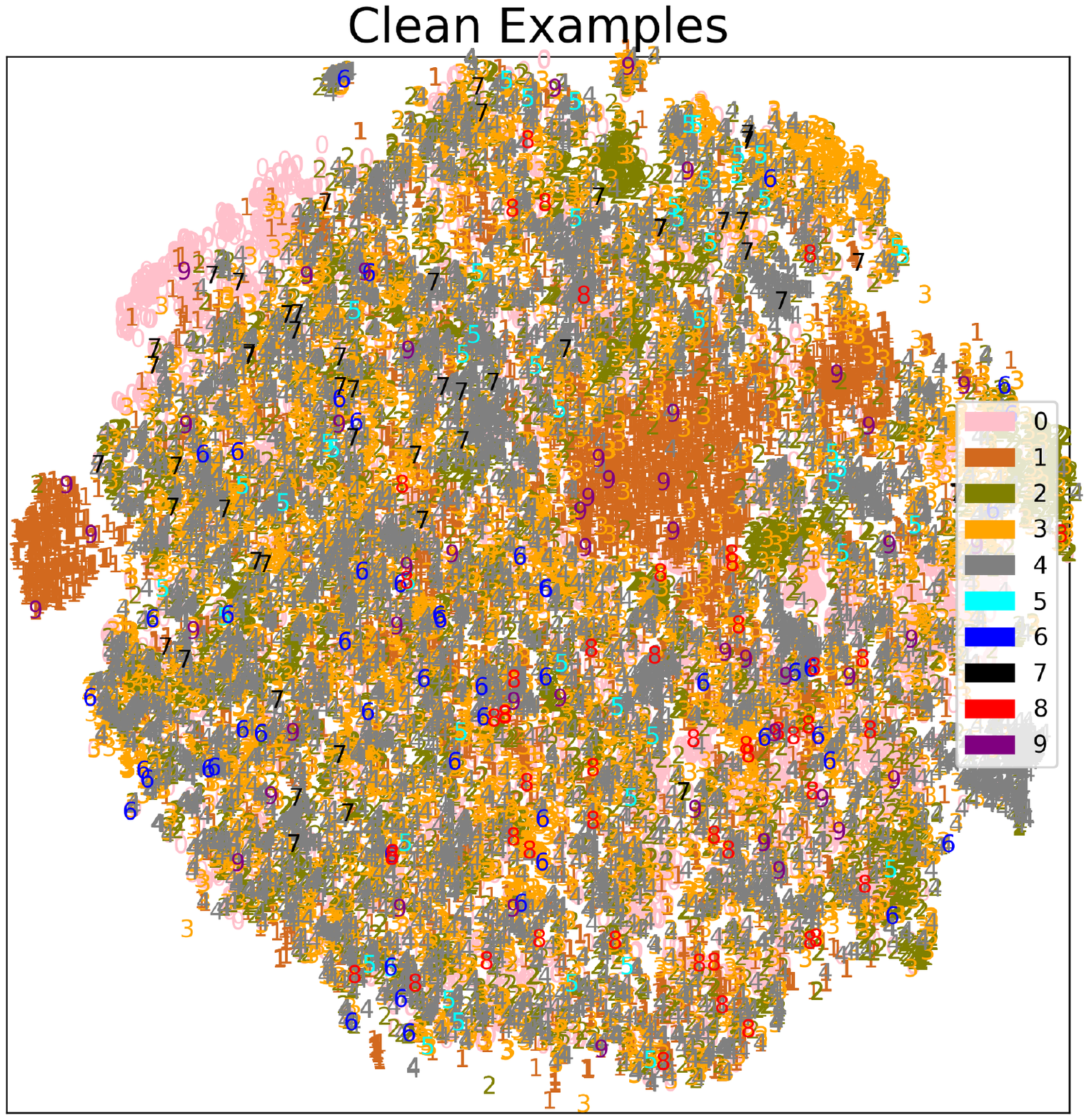}
\caption{CE.}
\label{fig:tsne_erm}
\end{subfigure}
\hspace{0.12in}
\begin{subfigure}[b]{0.3\textwidth}
\centering
\includegraphics[width=1.4in]{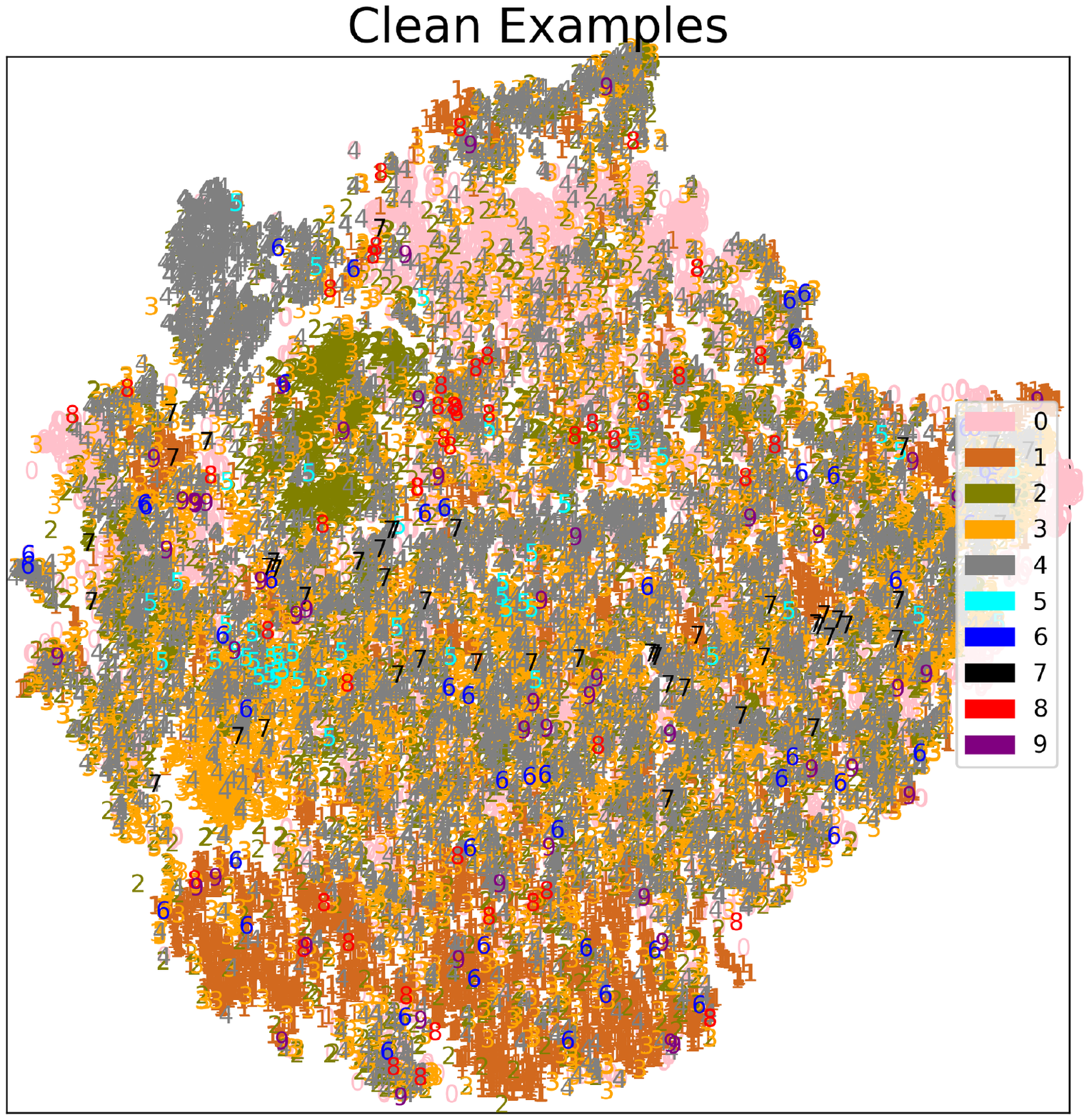}
\caption{DRCB-LDAM.}
\label{fig:tsne_ldam}
\end{subfigure}
\hspace{0.12in}
\begin{subfigure}[b]{0.3\textwidth}
\centering
\includegraphics[width=1.4in]{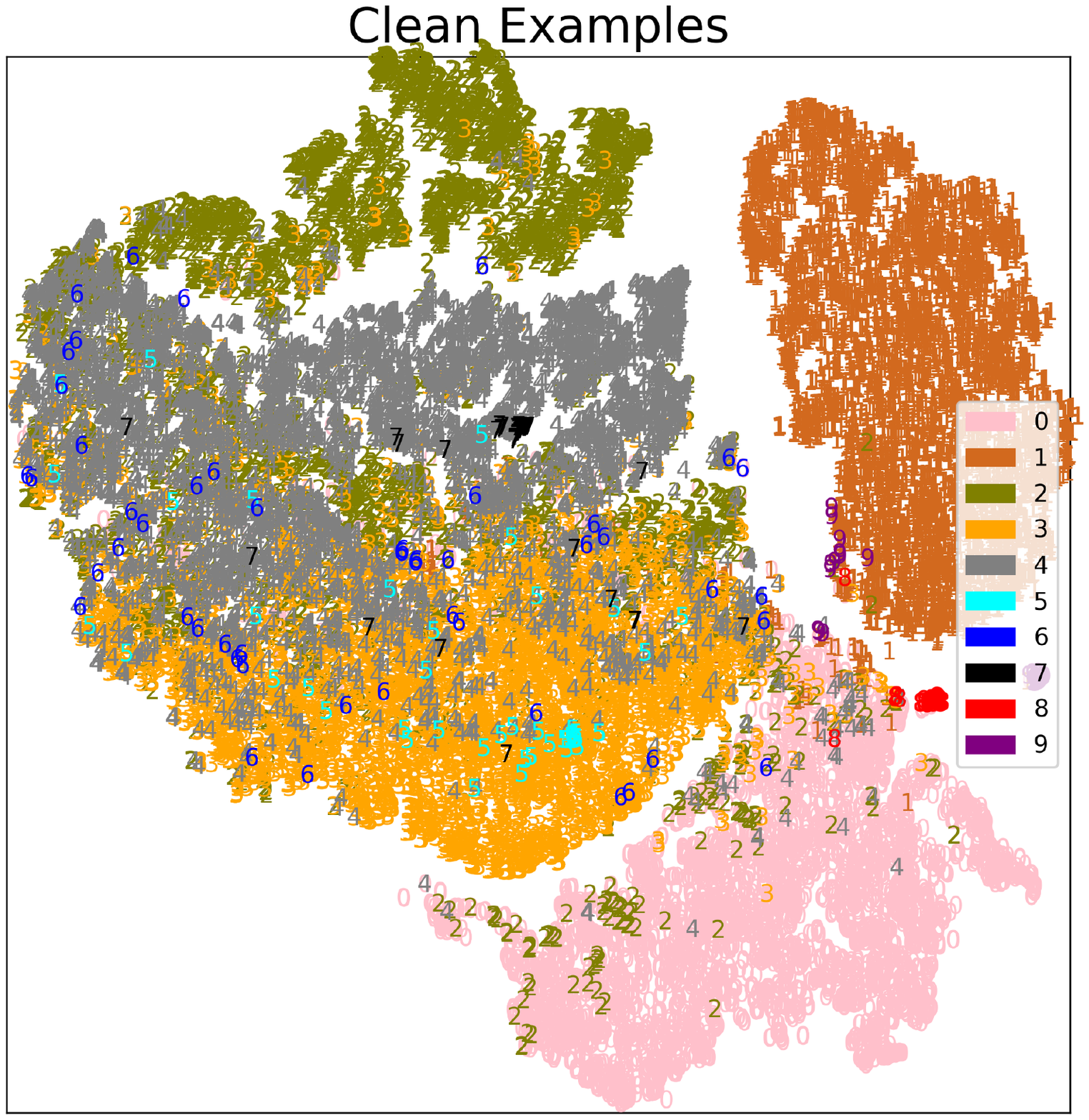}
\caption{SRAT-LDAM.}
\label{fig:tsne_pm}
\end{subfigure}
\caption{t-SNE feature visualization of training examples learned by SRAT and two baseline methods using imbalanced training datasets ``Step-100".}
\label{fig:tsne}
\end{figure}

\textbf{Feature space visualization.} In order to facilitate the reweighting strategy in adversarial training under the imbalanced setting, we present a feature separation loss in our SRAT method. The main goal of the feature separation loss is to enforce the learned feature space as much separated as possible. For checking whether the feature separation loss can work as expected, we apply t-SNE~\cite{van2008visualizing} to visualize the latent feature space learned by our SRAT-LDAM method in Figure~\ref{fig:tsne}. As a comparison, we also provide the visualization of feature space learned by the original PGD adversarial training method (CE) and DRCB-LDAM method.

As shown in Figure~\ref{fig:tsne}, the feature space learned by our SRAT-LDAM method is more separable than two baseline methods. This observation demonstrates that, with our proposed feature separation loss, the adversarially trained model is able to learn much better features and thus our SRAT method can achieve superiority performance.

\textbf{Impact of reweighting values.} As in all SRAT variants, we adopt the Class-balanced method~\cite{cui2019class} to assign different weights to different classes based on their effective number. To explore how the assigned weights impact the performance of our proposed SRAT method, we conduct experiments on a Step-imbalanced CIFAR10 dataset with imbalance ratio $K=100$ to see the change of model's performance using different reweighting values. In our experiments, we assign five well-represented classes with weight 1 and change the weight for remaining five under-represented classes from 10 to 200. The experimental results are shown in Figure~\ref{fig:diff_weight}. Here, we use an approximation integer 78 to denote the weight calculated by the Class-balanced method when the imbalance ratio equals 100. 

From Figure~\ref{fig:diff_weight}, we can obverse that, for all SRAT variants, the model's standard accuracy is increased with the increase of the weights assigning to under-represented classes. However, the robust accuracy for these three methods do not synchronize with the change of their standard accuracy. When increasing the weights for under-represented classes, robust accuracy of SRAT-LDAM is almost unchanged and robust accuracy of SRAT-CE and SRAT-Focal even has slight decrease. As a trade-off, using a relative large weights, such as 78 or 100, in our SRAT method can obtain satisfactory performance on both standard \& robust accuracy, where the former is calculated by the Class-balanced method and the latter equals the imbalance ratio $K$. 

\begin{figure}[t]
\centering
\includegraphics[width=2.2in]{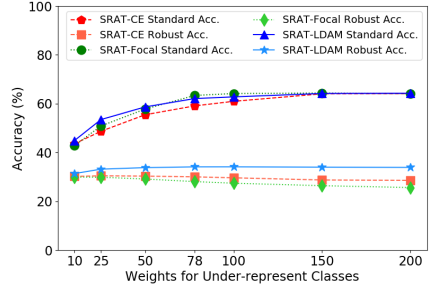}
\caption{The impact of reweighting values using an imbalanced training dataset ``Step-100".}
\label{fig:diff_weight}
\end{figure}

\begin{figure}[t]
\centering
\begin{subfigure}[b]{0.45\textwidth}
\centering
\includegraphics[width=2.2in]{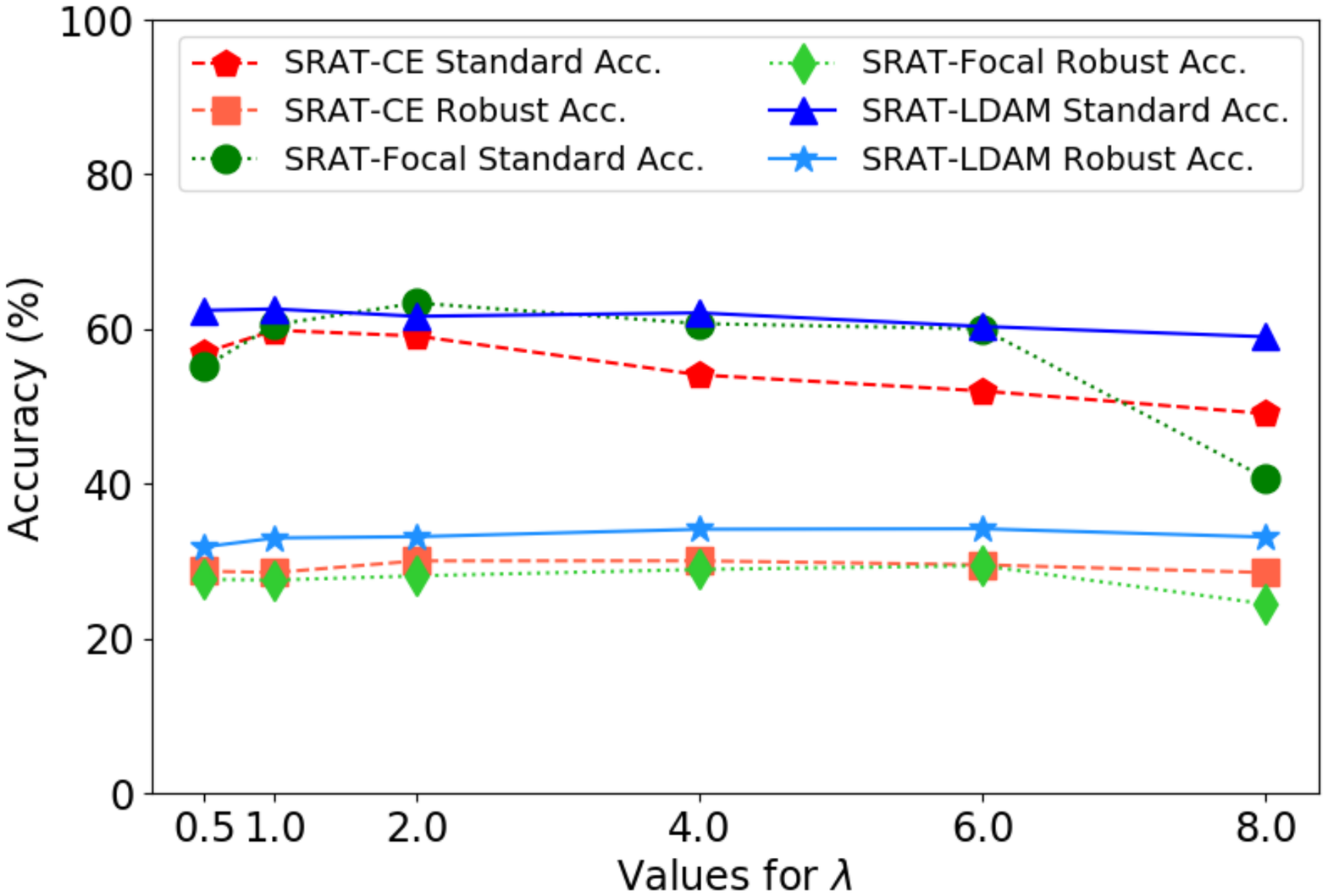}
\caption{Step-100.}
\label{fig:diff_lambda_a}
\end{subfigure}
\hspace{0.2in}
\begin{subfigure}[b]{0.45\textwidth}
\centering
\includegraphics[width=2.2in]{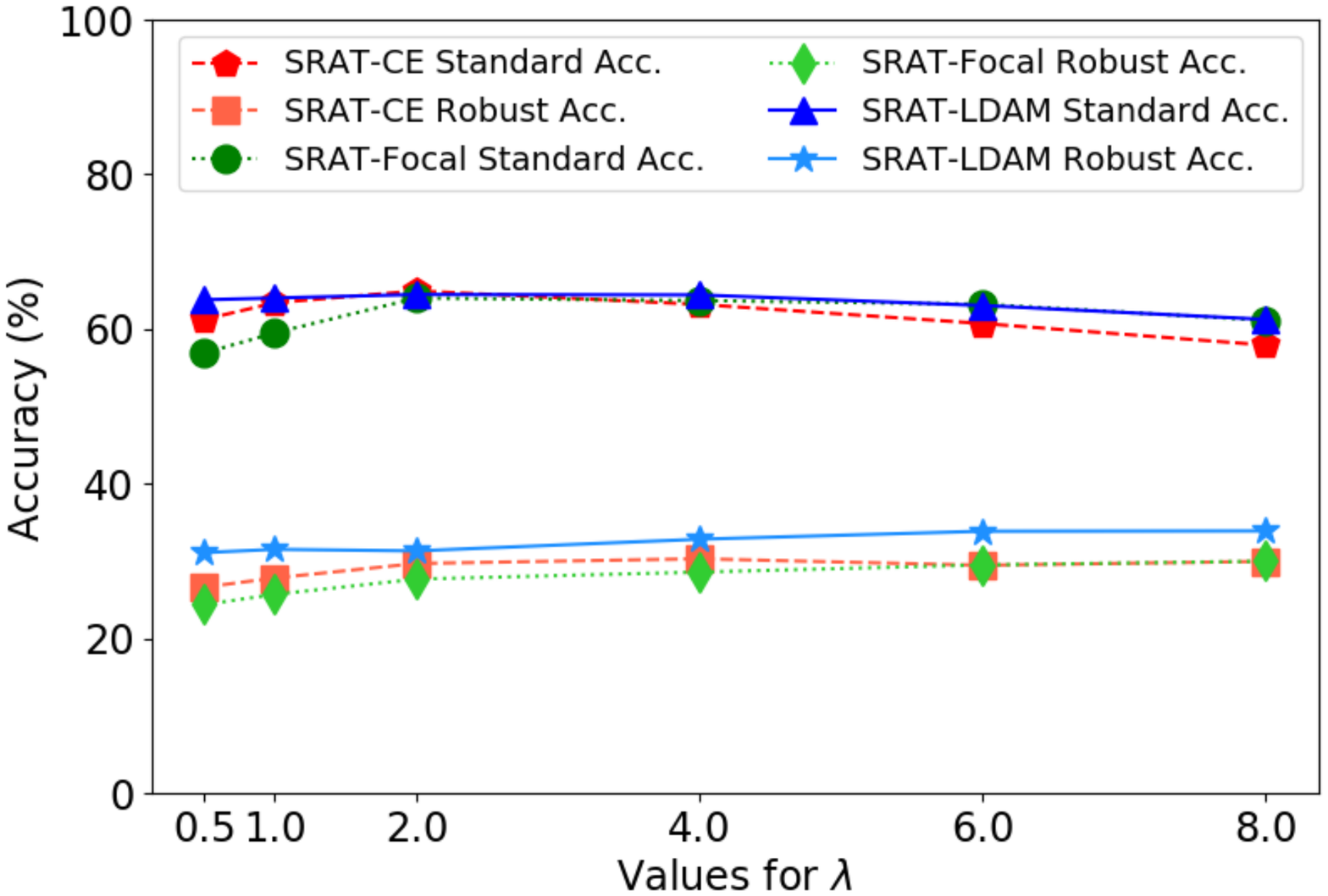}
\caption{Exp-100.}
\label{fig:diff_lambda_b}
\end{subfigure}
\caption{The impact of the hyper-parameter $\lambda$ using imbalanced training datasets ``Step-100" and ``Exp-100".}
\label{fig:diff_lambda}
\end{figure}

\textbf{Impact of hyper-parameter $\lambda$.} In our proposed SRAT method, the contributions of  feature separation loss and prediction loss are controlled by a hyper-parameter $\gamma$. In this part, we study how this hyper-parameter affects the performance of our SRAT method. In our experiments, we evaluate the models' performance of all SRAT variants with different values of $\lambda$ used in training process on both Step-imbalanced CIFAR10 dataset and Exp-imbalanced CIFAR10 dataset with imbalance ratio $K=100$.

As shown in Figure~\ref{fig:diff_lambda}, the performance of all SRAT variants are not very sensitive with the choice of $\lambda$. However, a large value of $\lambda$, such as 8, may hurt the model's performance.

\section{Related Work}\label{sec:related_work}

\textbf{Adversarial Robustness.} The vulnerability of DNN models to adversarial examples has been verified by many existing successful attack methods~\cite{goodfellow2014explaining,carlini2017towards,madry2017towards}. To improve model robustness against adversarial attacks, various defense methods have been proposed~\cite{goodfellow2014explaining,madry2017towards,raghunathan2018certified,cohen2019certified}. Among them, adversarial training has been proven to be one of the most effective defense methods~\cite{athalye2018obfuscated}. Adversarial training can be formulated as solving a min-max optimization problem where the outer minimization process enforces the model to be robust to adversarial examples, generated by the inner maximization process via some existing attacking methods like PGD~\cite{madry2017towards}. Based on adversarial training, several variants, such as TRADES~\cite{zhang2019theoretically}, MART~\cite{wang2019improving} and FAT~\cite{zhang2020attacks}, have been presented to improve the model's performance further. More details about adversarial robustness can be found in recent surveys~\cite{chakraborty2018adversarial,xu2020adversarial}. Since almost all studies of adversarial training are focused on balanced datasets, it's worthwhile to investigate the performance of adversarial training methods on imbalanced training datasets. 

\textbf{Imbalanced Learning.} Most existing works of imbalanced training can be roughly classified into two categories, i.e., re-sampling and reweighting. \emph{Re-sampling} methods aim to reduce the level of imbalance through either over-sampling data examples from under-represented classes~\cite{buda2018systematic,byrd2019effect} or under-sampling data examples from well-represented classes~\cite{japkowicz2002class,drummond2003c4,he2009learning,yen2009cluster}. \emph{reweighting} methods allocate different weights for different classes or even different data examples. For example, Focal loss~\cite{lin2017focal} enlarges the weights of wrongly-classified examples while reducing the weights of well-classified examples in the standard cross entropy loss; and LDAM loss~\cite{cao2019learning} regularizes the under-represented classes more strongly than the over-represented classes to attain good generalization performance on under-represented classes. More information about imbalanced learning can be found in recent surveys~\cite{he2013imbalanced,johnson2019survey}. The majority of existing methods focused on the nature training scenario and their trained models will be crashed when facing adversarial attacks~\cite{szegedy2013intriguing,goodfellow2014explaining}. Hence, in this paper, we develop a novel method that can defend adversarial attacks and achieve well-pleasing performance under the imbalance setting.

\section{Conclusion}\label{sec:conclusion}

In this work, we first empirically investigate the behavior of adversarial training under imbalanced settings and explore the potential solutions to assist adversarial training in tackling the imbalanced issues. As neither adversarial training method itself nor adversarial training with reweighting strategy can work well under imbalanced scenarios, we further  theoretically verify that the poor data separability is one key reason causing the failure of adversarial training based methods under imbalanced scenarios. Based on our findings, we propose the Separable Reweighted Adversarial Training (SRAT) framework to facilitate the reweighting strategy in imbalanced adversarial training by enhancing the separability of learned features. Through extensive experiments, we validate the effectiveness of SRAT. In the future, we plan to examine how other types of defense methods perform under imbalanced scenarios and how other types of balanced learning strategies in natural training behavior under adversarial training. 

\bibliographystyle{plain}
\bibliography{ref}

\appendix

\newpage

\section{Appendix}

\subsection{The Behavior of Adversarial Training}\label{app_sec:pre_com1}

In order to examine the performance of PGD adversarial training under imbalanced scenarios, we adversarially train ResNet18~\cite{he2016deep} models on multiple imbalanced training datasets based on CIFAR10 dataset~\cite{krizhevsky2009learning}. Similar with observations we discussed in Section~\ref{subsec:pre1}, as shown in Figure~\ref{fig:pre_step_10}, Figure~\ref{fig:pre_exp_100} and Figure~\ref{fig:pre_exp_10}, adversarial training produces larger performance gap between well-represented classes and under-represented classes than natural training. Especially, in all imbalanced scenarios, adversarially trained models obtain very low robust accuracy on under-represented classes, which proves again that adversarial training cannot be applied in practical imbalanced scenarios directly.  

\begin{figure}[h]
\begin{subfigure}[b]{0.31\textwidth}
\centering
\includegraphics[width=1.72in]{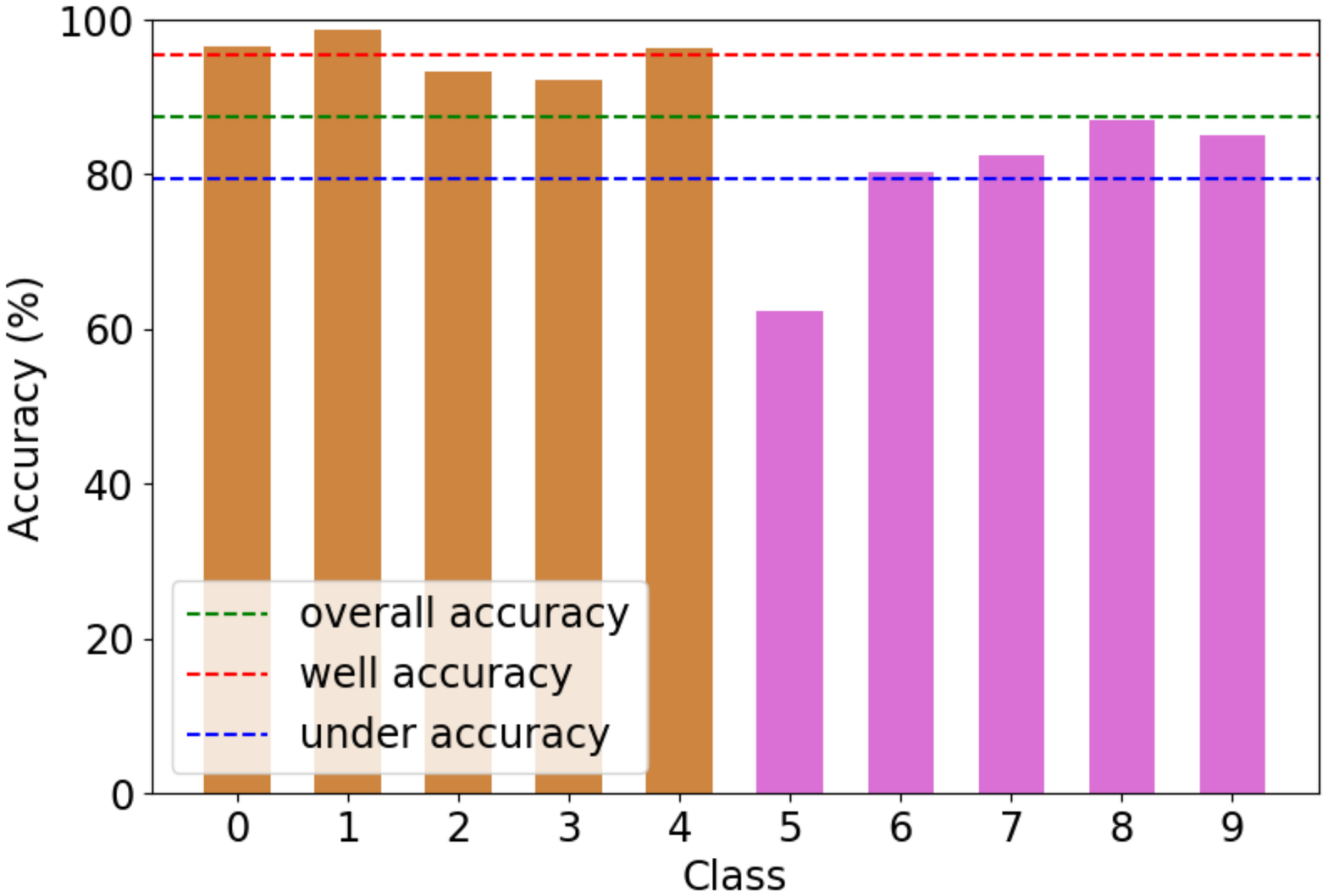}
\caption{Natural Training Standard Acc.}
\label{fig:pre_step_10_nature_standard}
\end{subfigure}
\hspace{0.12in}
\begin{subfigure}[b]{0.31\textwidth}
\centering
\includegraphics[width=1.72in]{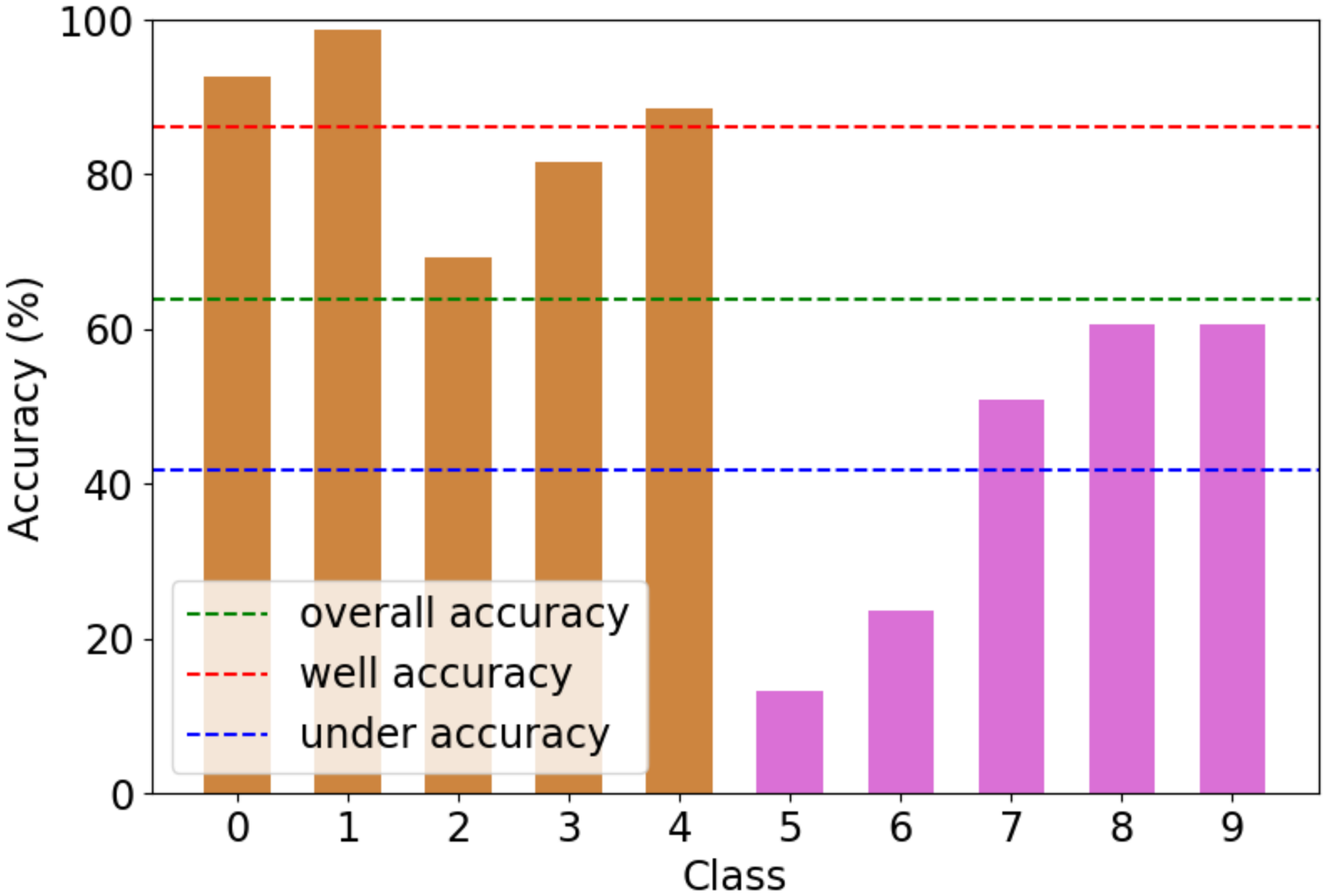}
\caption{Adv. Training Standard Acc.}
\label{fig:pre_step_10_adv_standard}
\end{subfigure}
\hspace{0.12in}
\begin{subfigure}[b]{0.31\textwidth}
\centering
\includegraphics[width=1.72in]{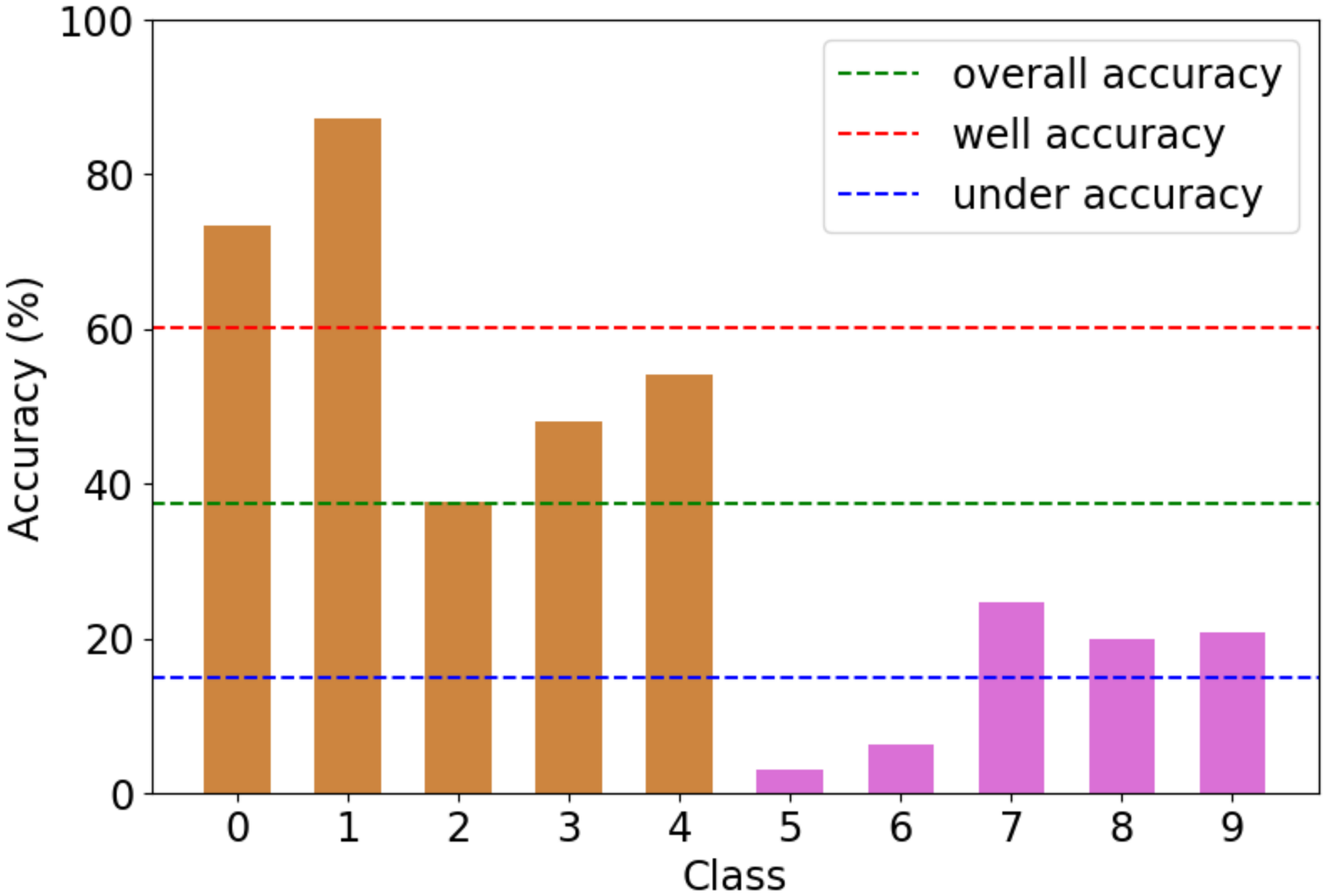}
\caption{Adv. Training Robust Acc.}
\label{fig:pre_step_10_adv_robust}
\end{subfigure}
\caption{Class-wise performance of natural \& adversarial training under an imbalanced CIFAR10 dataset ``Step-10".}
\label{fig:pre_step_10}
\end{figure}

\begin{figure}[h]
\begin{subfigure}[b]{0.31\textwidth}
\centering
\includegraphics[width=1.72in]{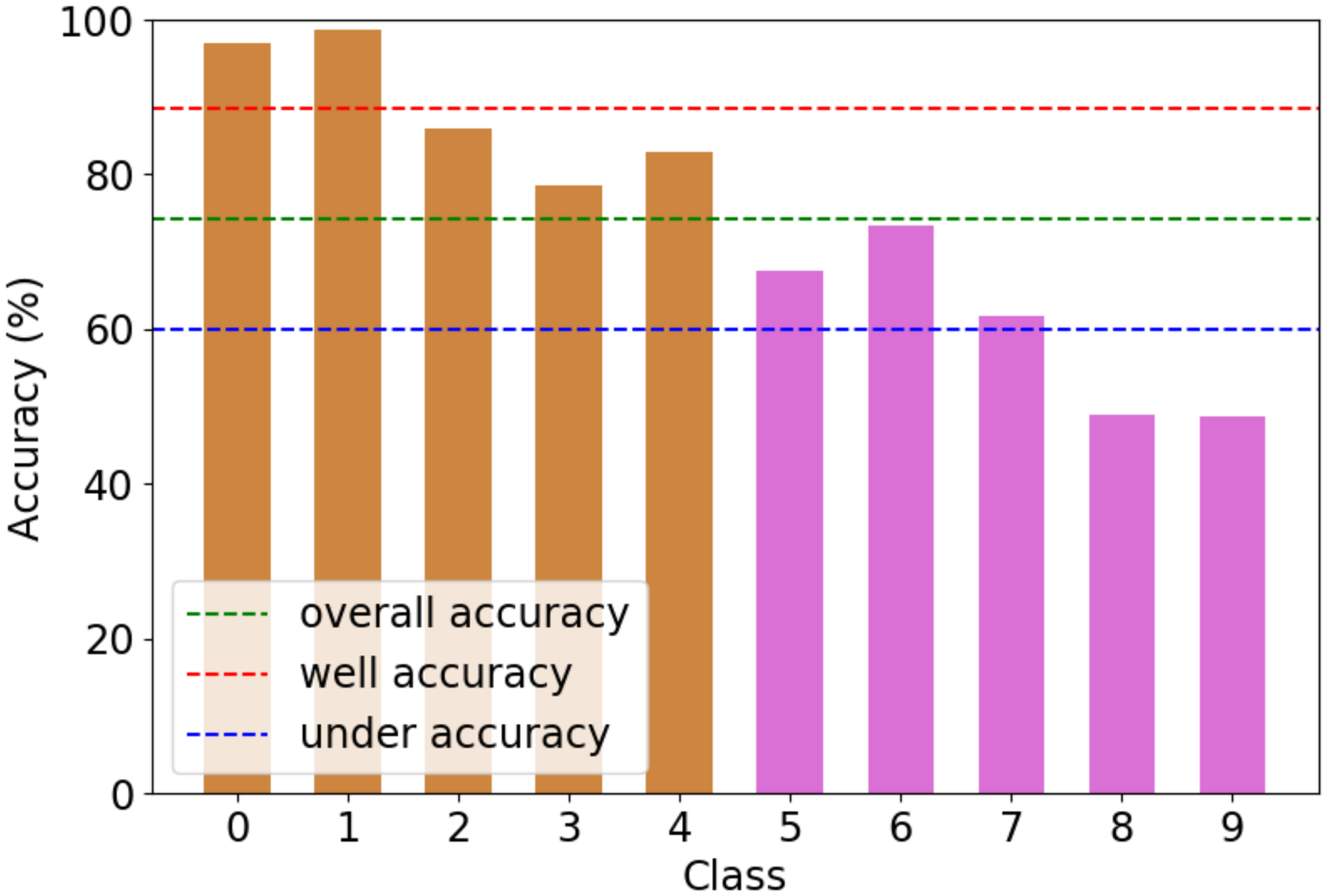}
\caption{Natural Training Standard Acc.}
\label{fig:pre_exp_100_nature_standard}
\end{subfigure}
\hspace{0.12in}
\begin{subfigure}[b]{0.31\textwidth}
\centering
\includegraphics[width=1.72in]{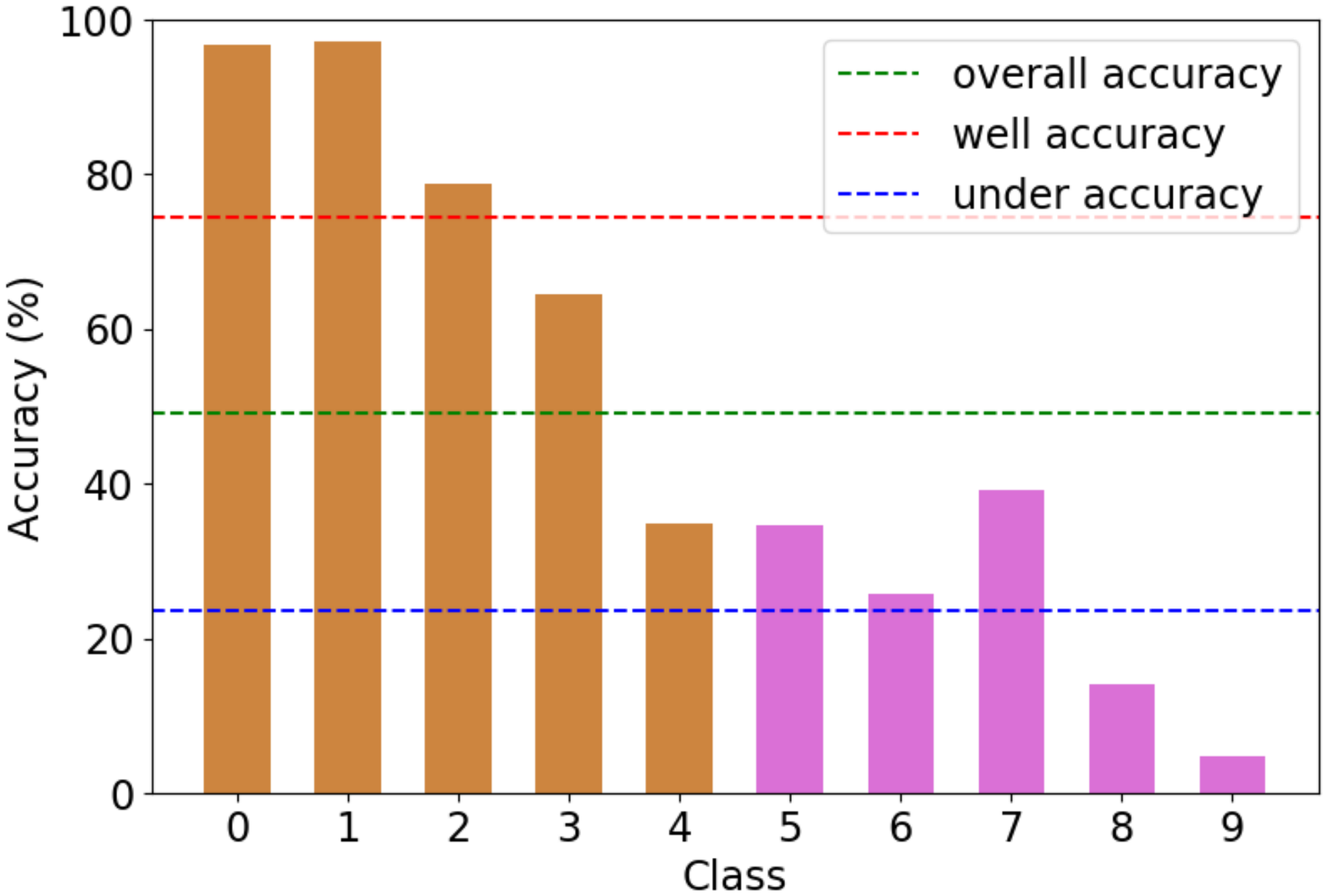}
\caption{Adv. Training Standard Acc.}
\label{fig:pre_exp_100_adv_standard}
\end{subfigure}
\hspace{0.12in}
\begin{subfigure}[b]{0.31\textwidth}
\centering
\includegraphics[width=1.72in]{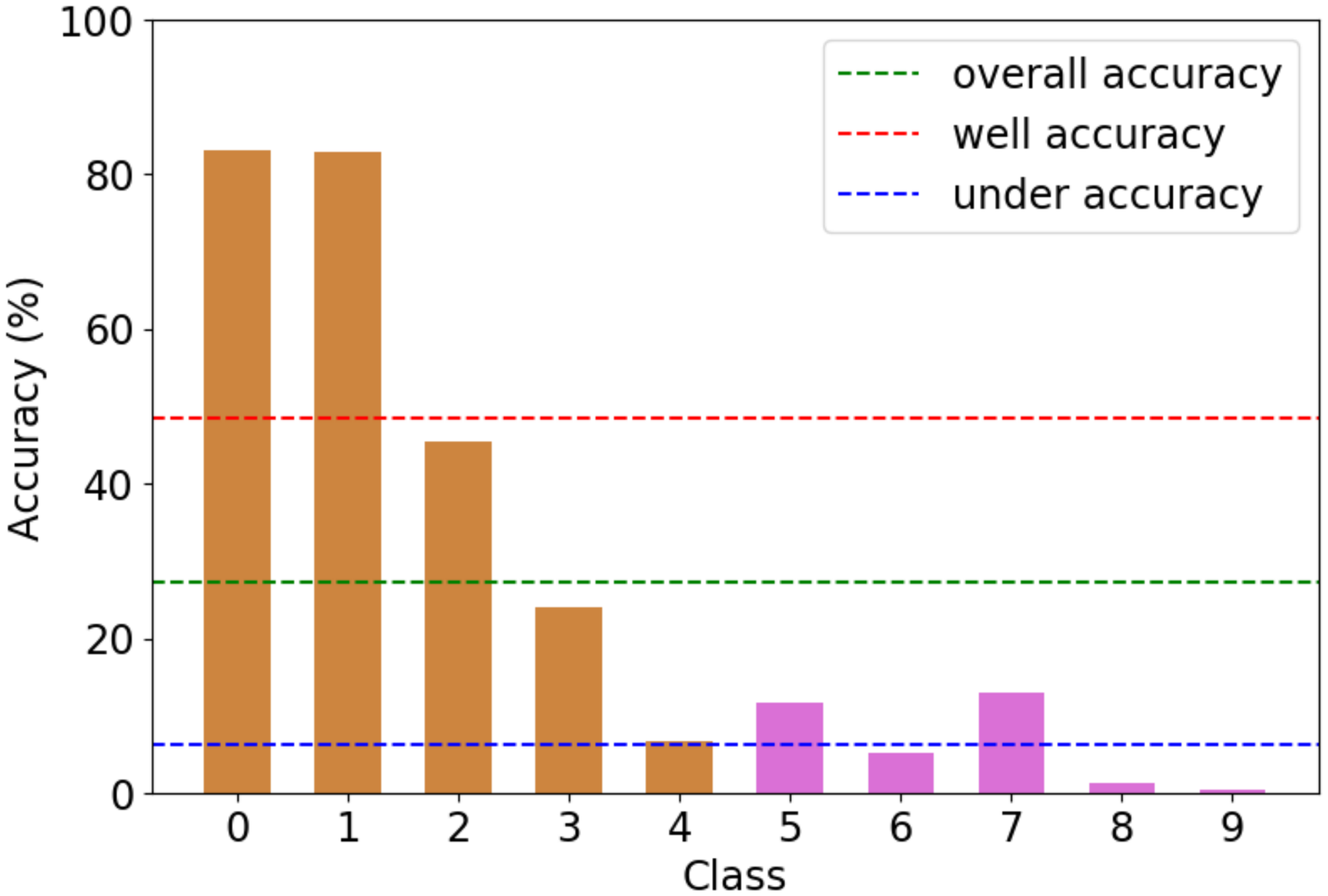}
\caption{Adv. Training Robust Acc.}
\label{fig:pre_exp_100_adv_robust}
\end{subfigure}
\caption{Class-wise performance of natural \& adversarial training under an imbalanced CIFAR10 dataset ``Exp-100".}
\label{fig:pre_exp_100}
\end{figure}

\begin{figure}[h]
\begin{subfigure}[b]{0.31\textwidth}
\centering
\includegraphics[width=1.72in]{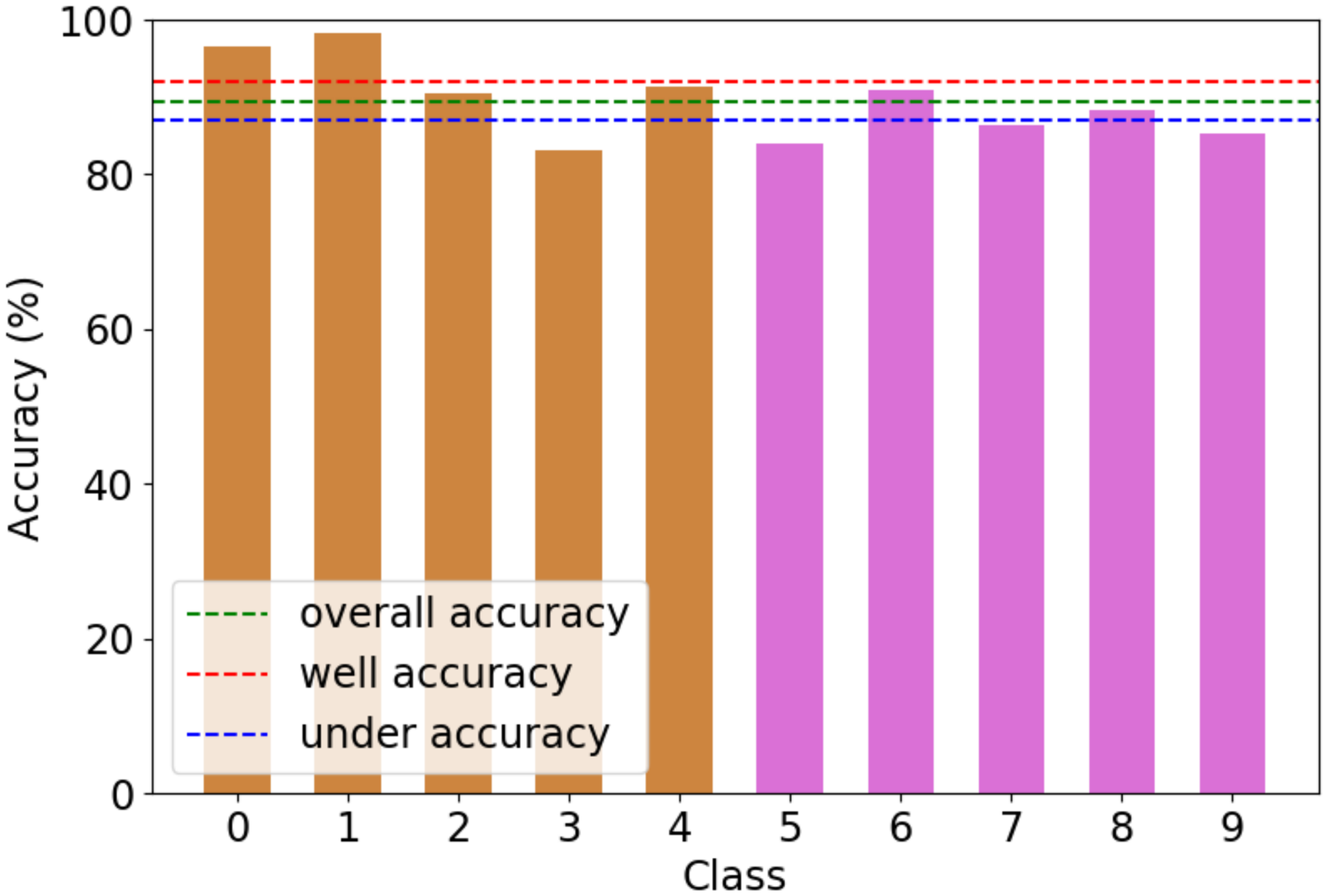}
\caption{Natural Training Standard Acc.}
\label{fig:pre_exp_10_nature_standard}
\end{subfigure}
\hspace{0.12in}
\begin{subfigure}[b]{0.31\textwidth}
\centering
\includegraphics[width=1.72in]{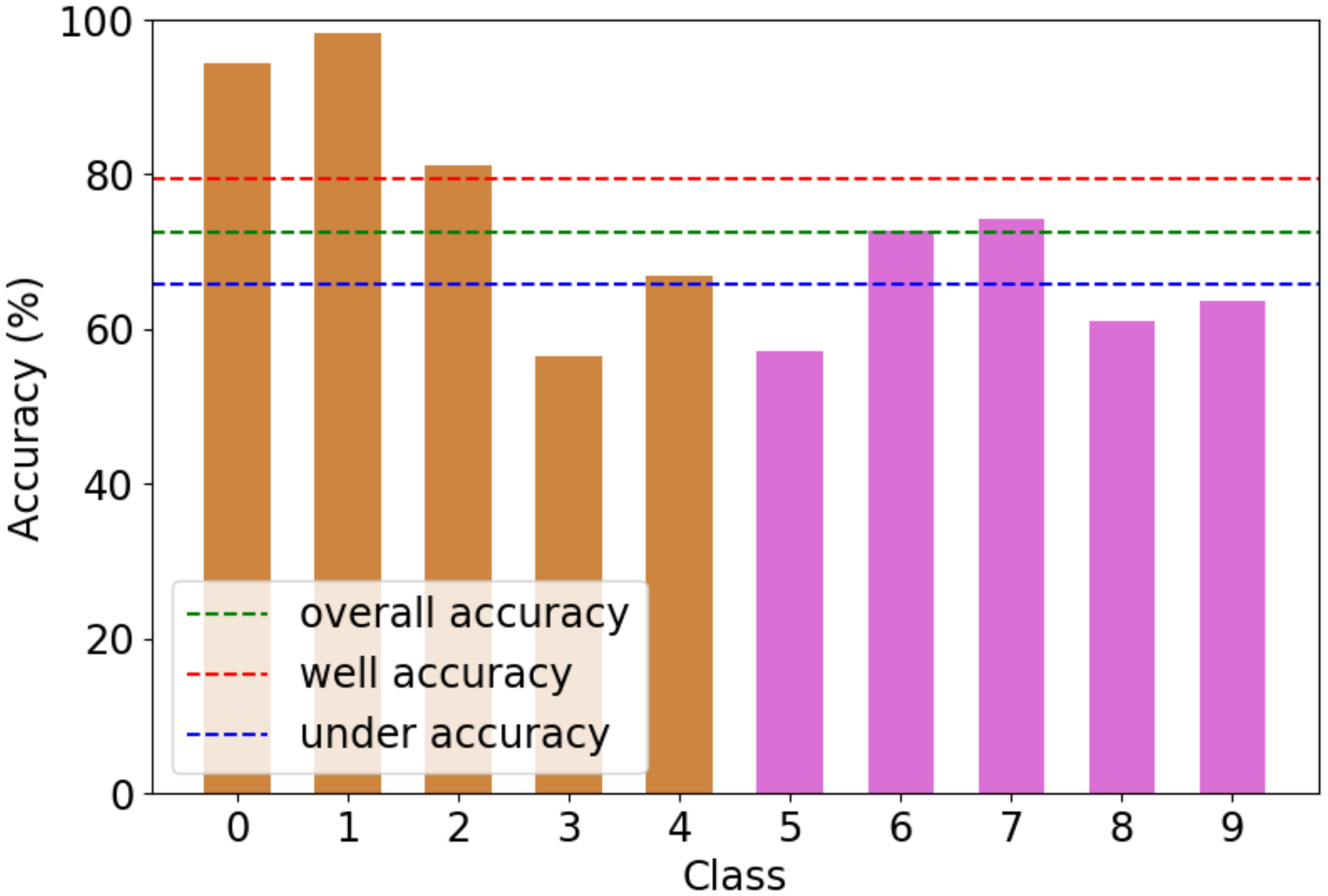}
\caption{Adv. Training Standard Acc.}
\label{fig:pre_exp_10_adv_standard}
\end{subfigure}
\hspace{0.12in}
\begin{subfigure}[b]{0.31\textwidth}
\centering
\includegraphics[width=1.72in]{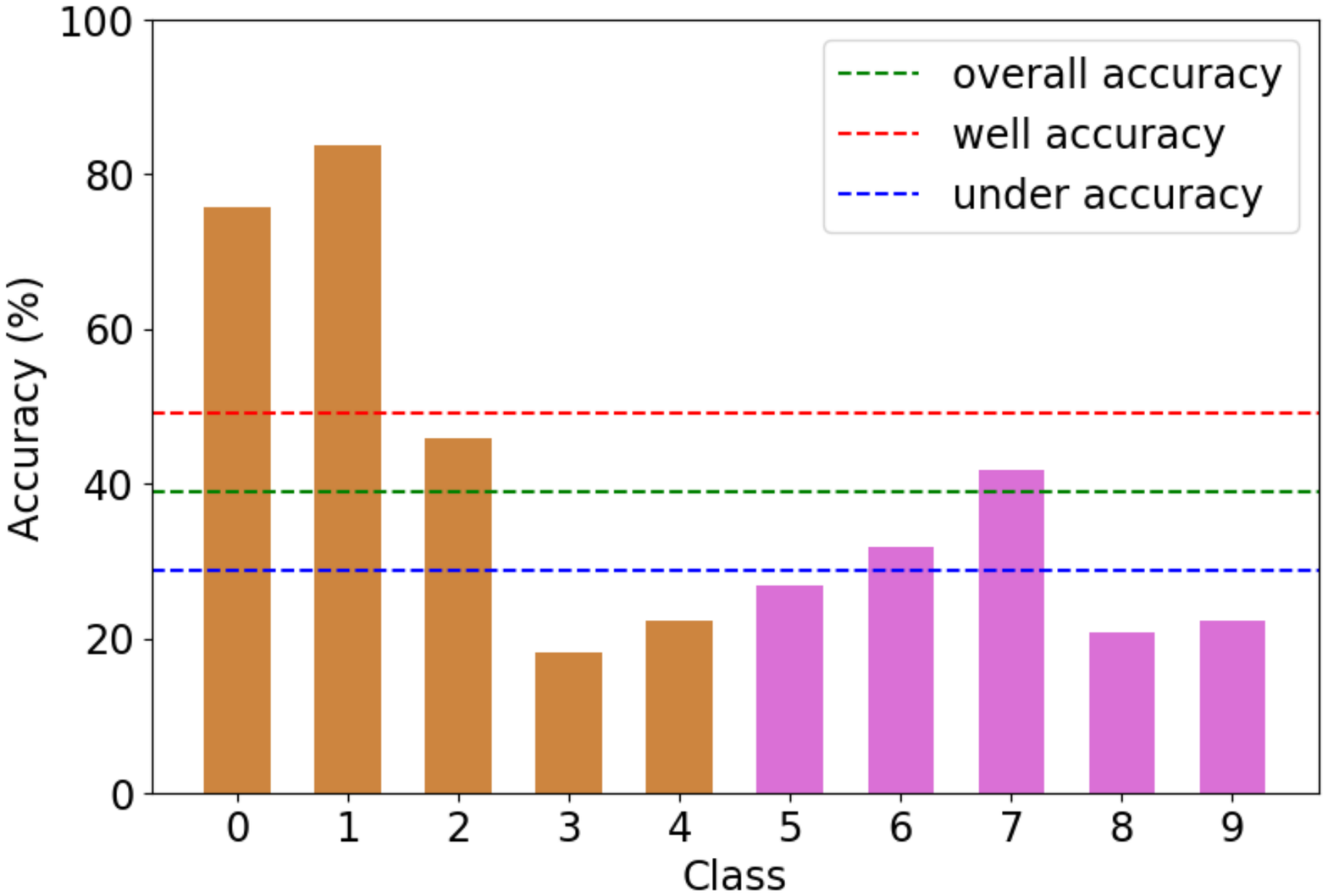}
\caption{Adv. Training Robust Acc.}
\label{fig:pre_exp_10_adv_robust}
\end{subfigure}
\caption{Class-wise performance of natural \& adversarial training under an imbalanced CIFAR10 dataset ``Exp-10".}
\label{fig:pre_exp_10}
\end{figure}

\subsection{Reweighting Strategy in Natural Training v.s. in Adversarial Training}\label{app_sec:pre_com2}

For exploring whether the reweighting strategy can help adversarial training deal with imbalanced issues, we evaluate performance of adversarial trained models using diverse binary imbalanced training datasets with different weights assigning to under-represented class. As shown in Figure~\ref{fig:pre_binary_1_9}, Figure~\ref{fig:pre_binary_2_6}, Figure~\ref{fig:pre_binary_5_4}, for adversarially trained models, increasing the weights assigning to under-represented class will improve models' performance on under-represented class. However, as the same time, the models' performance on well-represented class will be drastically decreased. As a comparison, adopting larger weights in naturally trained models will also improve models' performance on under-represented class but only result in slight drop in performance on well-represented class. In other words, the reweighting strategy proposed in natural training to handle imbalanced problem may only provide limited help in adversarial training, and, hence, new techniques are needed for adversarial training under imbalanced scenarios.

\begin{figure}[h]
\begin{subfigure}[b]{0.31\textwidth}
\centering
\includegraphics[width=1.72in]{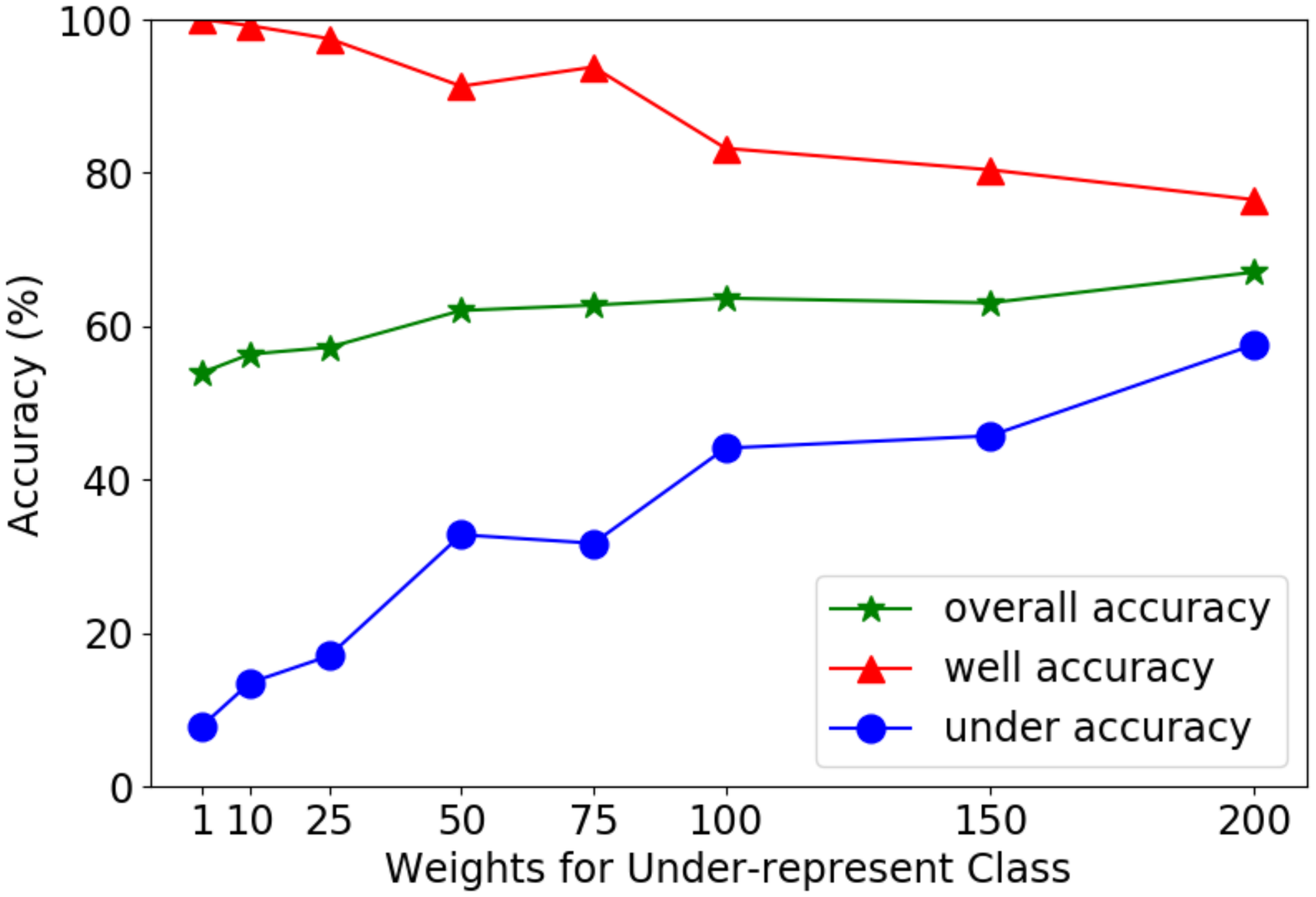}
\caption{Natural Training Standard Acc.}
\label{fig:pre_binary_1_9_nature_standard}
\end{subfigure}
\hspace{0.12in}
\begin{subfigure}[b]{0.3\textwidth}
\centering
\includegraphics[width=1.72in]{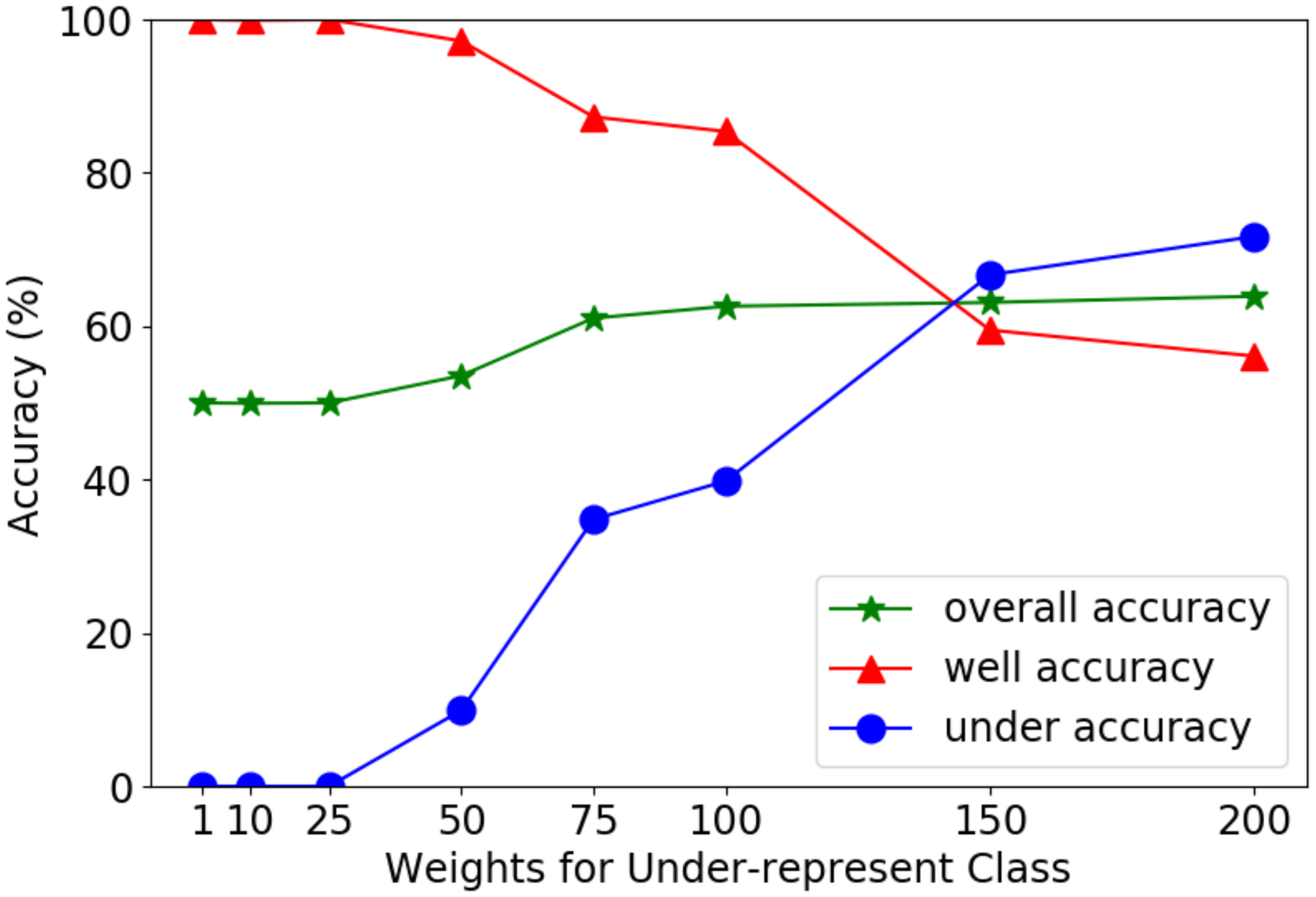}
\caption{Adv. Training Standard Acc.}
\label{fig:pre_binary_1_9_adv_standard}
\end{subfigure}
\hspace{0.12in}
\begin{subfigure}[b]{0.3\textwidth}
\centering
\includegraphics[width=1.72in]{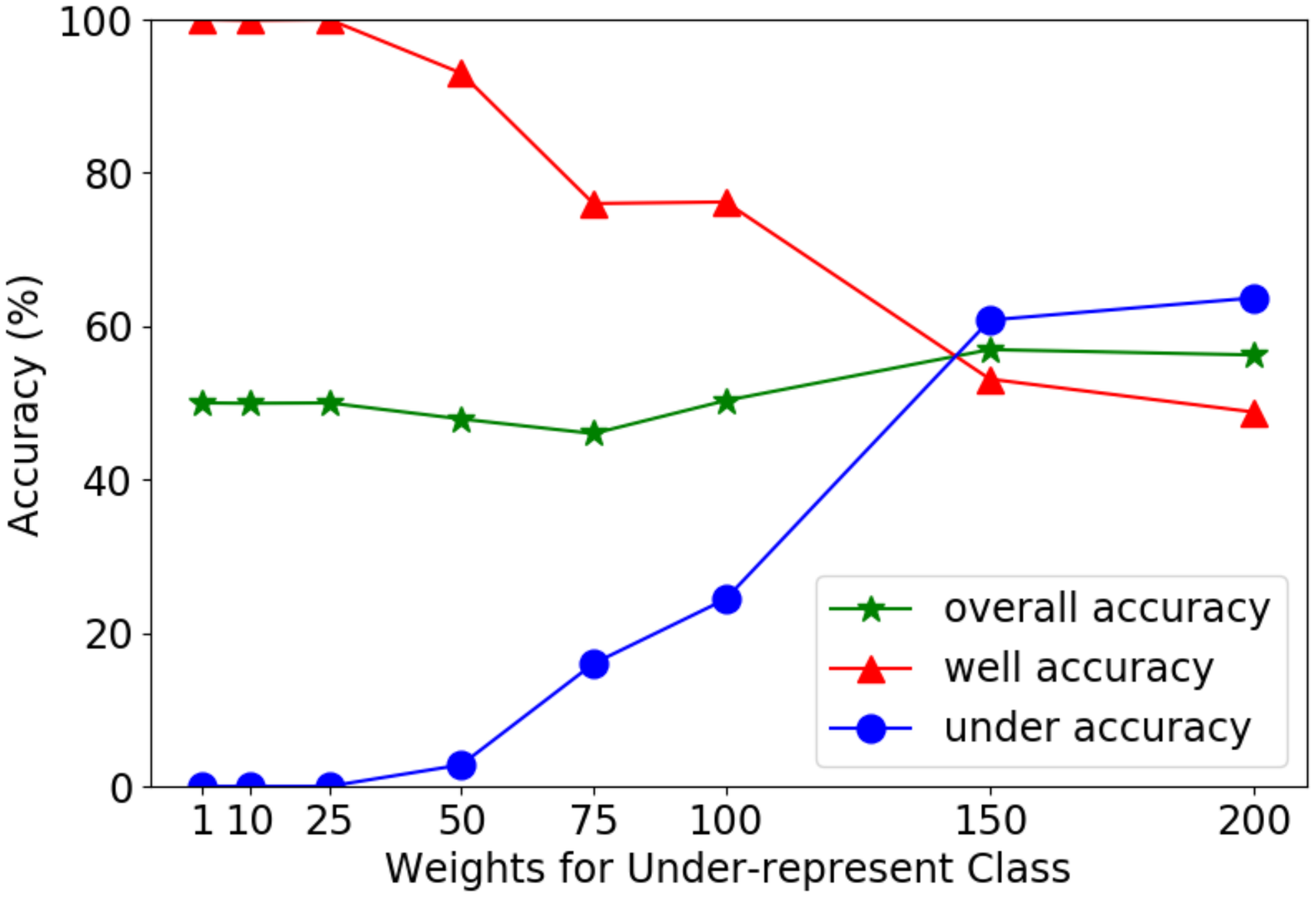}
\caption{Adv. Training Robust Acc.}
\label{fig:pre_binary_1_9_adv_robust}
\end{subfigure}
\caption{Class-wise performance of reweighted natural \& adversarial training in binary classification. (``auto'' as well-represented class and ``truck'' as under-represented class).}
\label{fig:pre_binary_1_9}
\end{figure}

\begin{figure}[h]
\begin{subfigure}[b]{0.31\textwidth}
\centering
\includegraphics[width=1.72in]{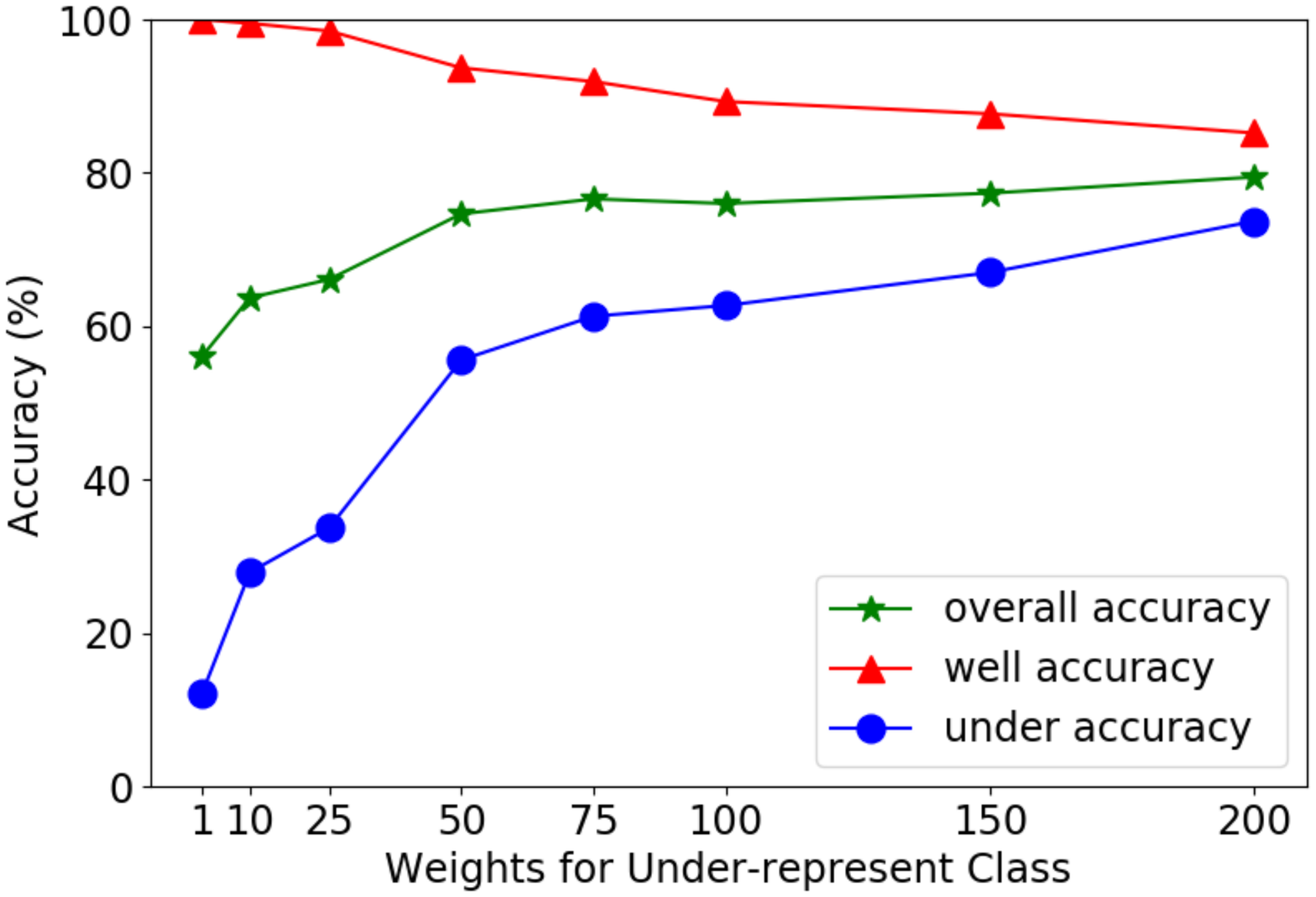}
\caption{Natural Training Standard Acc.}
\label{fig:pre_binary_2_6_nature_standard}
\end{subfigure}
\hspace{0.12in}
\begin{subfigure}[b]{0.3\textwidth}
\centering
\includegraphics[width=1.72in]{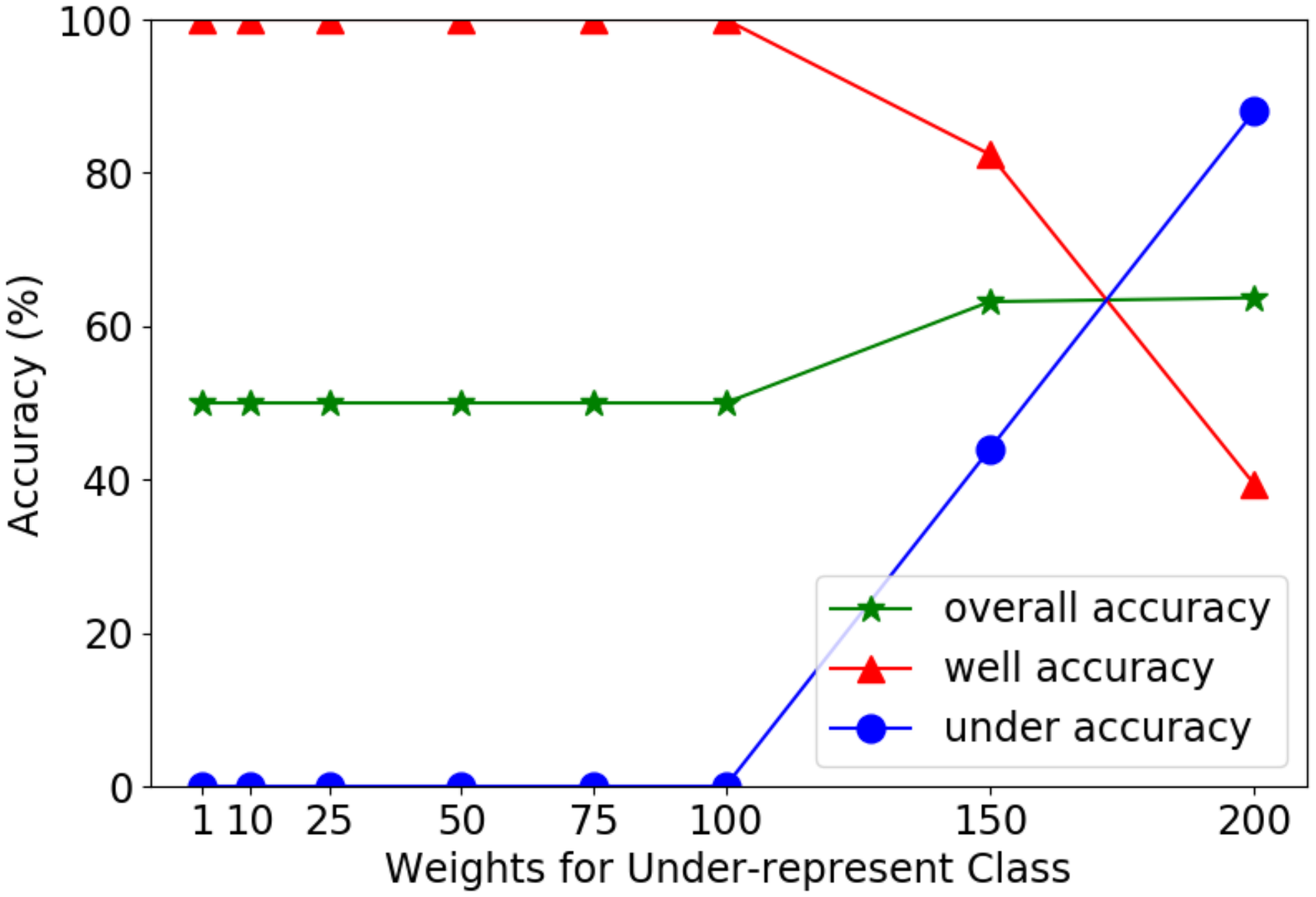}
\caption{Adv. Training Standard Acc.}
\label{fig:pre_binary_2_6_adv_standard}
\end{subfigure}
\hspace{0.12in}
\begin{subfigure}[b]{0.3\textwidth}
\centering
\includegraphics[width=1.72in]{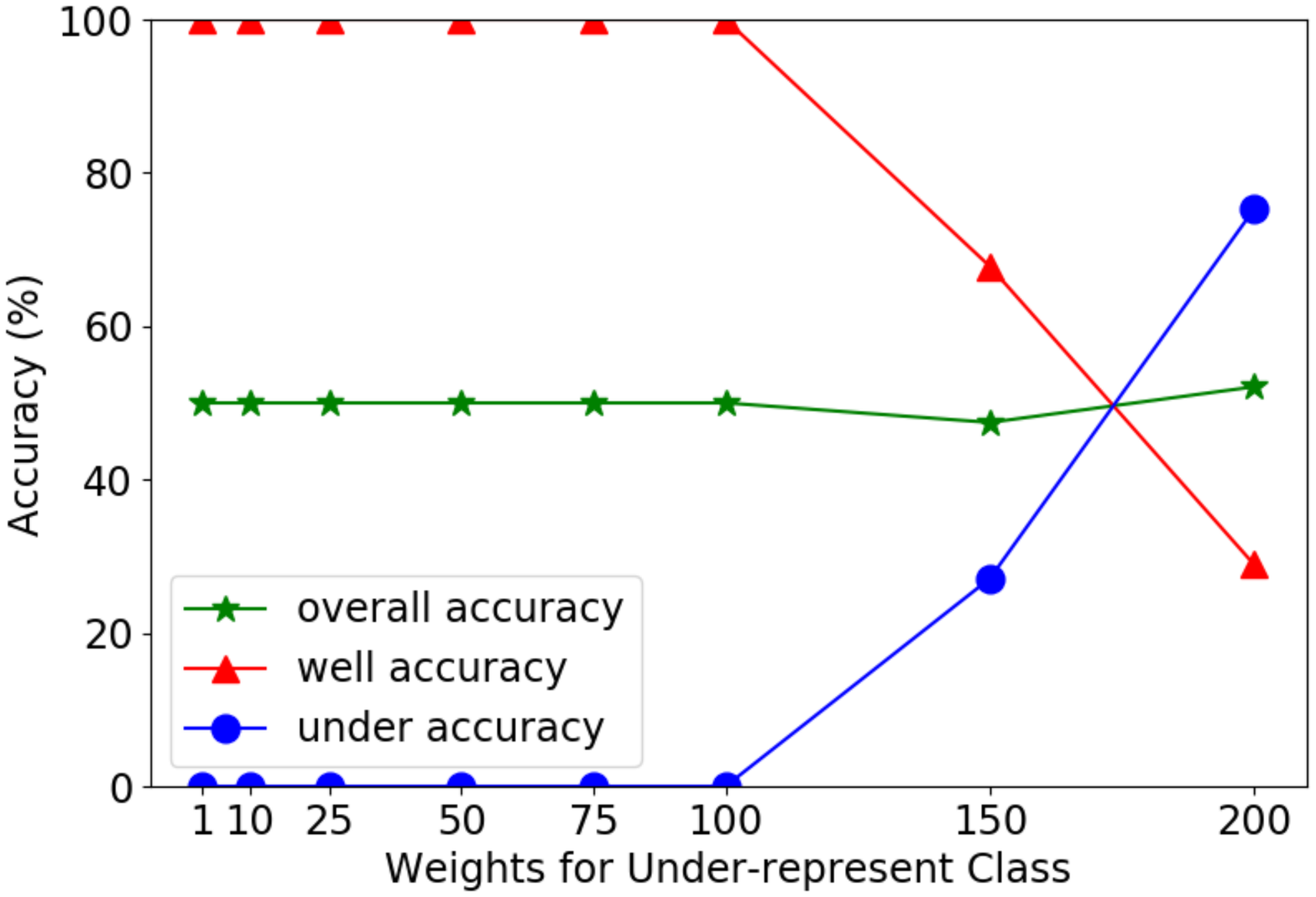}
\caption{Adv. Training Robust Acc.}
\label{fig:pre_binary_2_6_adv_robust}
\end{subfigure}
\caption{Class-wise performance of reweighted natural \& adversarial training in binary classification. (``bird'' as well-represented class and ``frog'' as under-represented class).}
\label{fig:pre_binary_2_6}
\end{figure}

\begin{figure}[h]
\begin{subfigure}[b]{0.31\textwidth}
\centering
\includegraphics[width=1.72in]{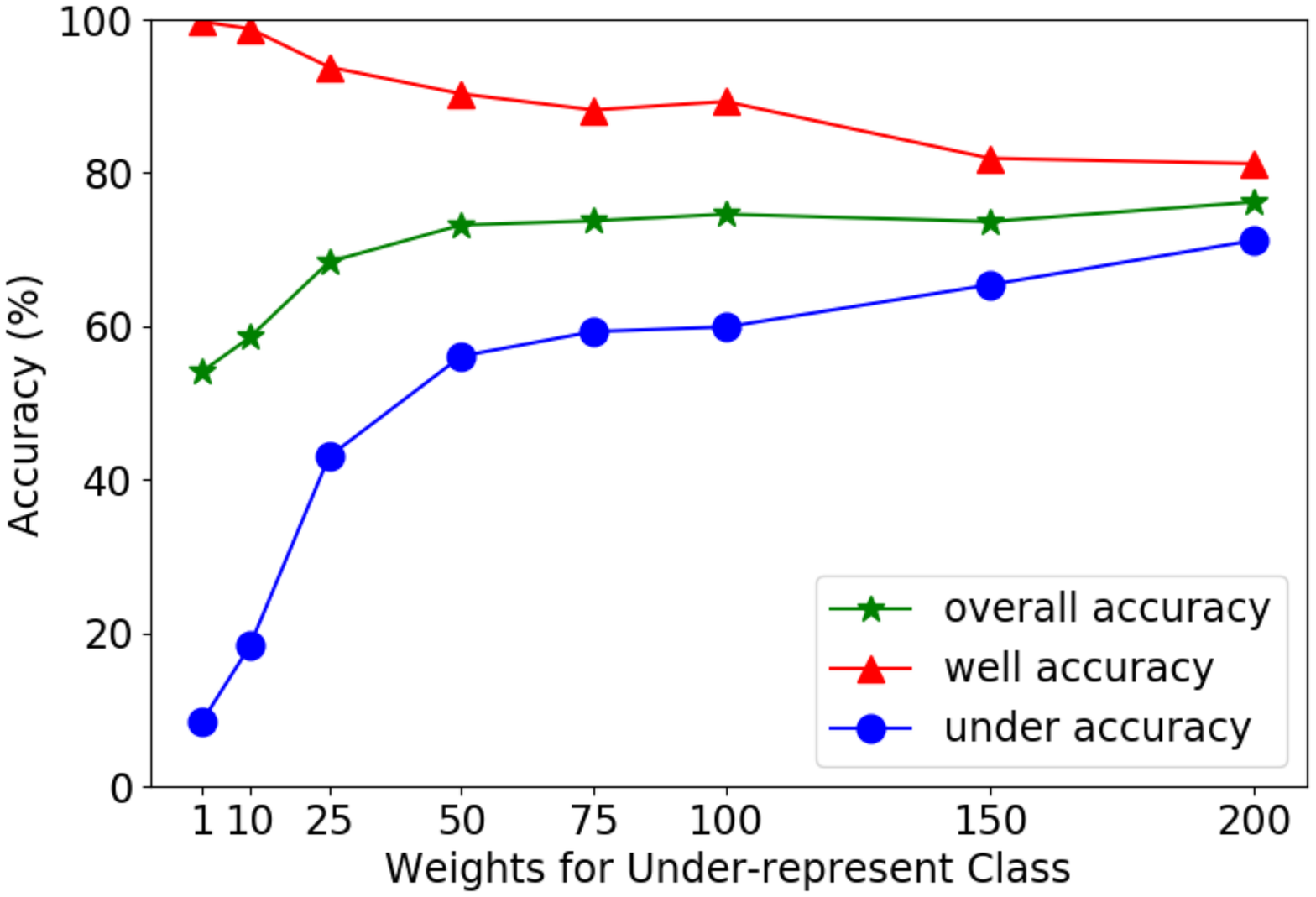}
\caption{Natural Training Standard Acc.}
\label{fig:pre_binary_5_4_nature_standard}
\end{subfigure}
\hspace{0.12in}
\begin{subfigure}[b]{0.3\textwidth}
\centering
\includegraphics[width=1.72in]{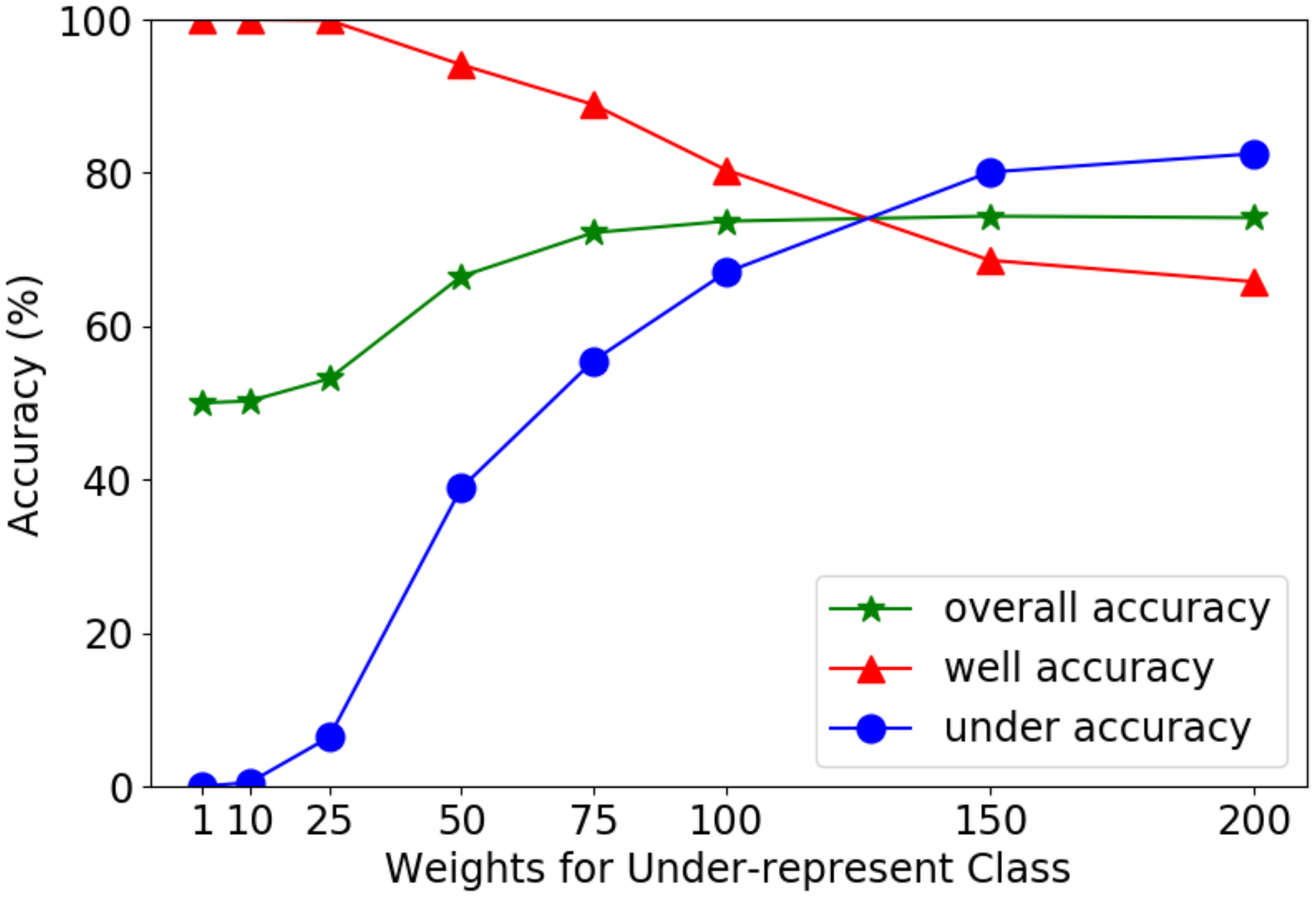}
\caption{Adv. Training Standard Acc.}
\label{fig:pre_binary_5_4_adv_standard}
\end{subfigure}
\hspace{0.12in}
\begin{subfigure}[b]{0.3\textwidth}
\centering
\includegraphics[width=1.72in]{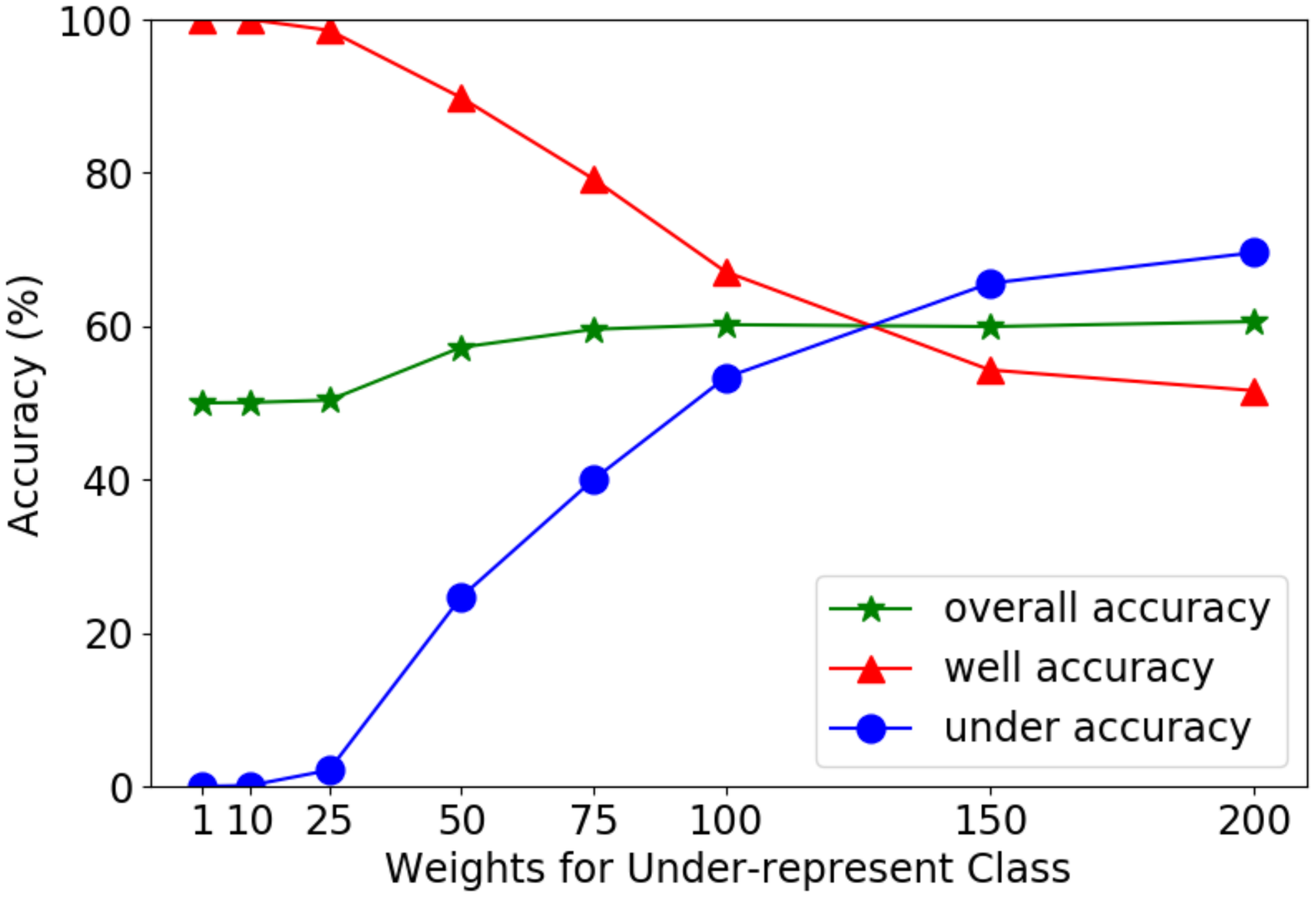}
\caption{Adv. Training Robust Acc.}
\label{fig:pre_binary_5_4_adv_robust}
\end{subfigure}
\caption{Class-wise performance of reweighted natural \& adversarial training in binary classification. (``dog'' as well-represented class and ``deer'' as under-represented class).}
\label{fig:pre_binary_5_4}
\end{figure}

%\vspace{-0.2cm}

\subsection{Proofs of the Theorems in Section~\ref{sec:theory}}

\subsubsection{Proof of Lemma~\ref{lemma1}}\label{app_sec:lemma1}

\lemm*
\begin{proof}[Proof of Lemma~\ref{lemma1}] We will first prove that the optimal model $f^*$ has parameters $w_1=w_2=\dots=w_d$ (or $w = \textbf{1}$) by contradiction. We define $G = \{1,2,\dots, d\}$ and make the following assumption: for the optimal $w$ and $b$, we assume if there exist $w_i < w_j$ for $i\neq j$ and $i,j \in G$. Then we obtain the following standard errors for two classes of this classifier $f$ with weight $w$:
\begin{align}
\small
\label{eq:weigh1}
\begin{split}
\text{Pr.}(f^*(x)\neq y | y = -1) &= \text{Pr.}\{\sum_{k\neq i, k\neq j} w_k \gN(-\eta, \sigma^2)  -b + w_i \gN(-\eta, \sigma^2) + w_j \gN(-\eta, \sigma^2) > 0 \}, \\
\text{Pr.}(f^*(x)\neq y | y = +1) &= \text{Pr.}\{\sum_{k\neq i, k\neq j} w_k \gN(+\eta, \sigma^2) -b+ w_i \gN(+\eta, \sigma^2) + w_j \gN(+\eta, \sigma^2) < 0 \}.
\end{split}
\end{align}
However, if we define a new classier $\tilde{f}$ whose weight $\tilde{w}$ uses $w_j$ to replace $w_i$, we obtain the errors for the new classifier:

\begin{align}
\small
\label{eq:weigh2}
\begin{split}
\text{Pr.}(\tilde{f}(x)\neq y | y = -1) &= \text{Pr.}\{\sum_{k\neq i, k\neq j} w_k \gN(-\eta, \sigma^2) -b + w_j \gN(-\eta, \sigma^2) + w_j \gN(-\eta, \sigma^2) > 0 \}, \\
\text{Pr.}(\tilde{f}(x)\neq y | y = +1)&= \text{Pr.}\{\sum_{k\neq i, k\neq j} w_k \gN(+\eta, \sigma^2) -b + w_j \gN(+\eta, \sigma^2) + w_j \gN(+\eta, \sigma^2) < 0 \}.    
\end{split}
\end{align}
By comparing the errors in Eq.~(\ref{eq:weigh1}) and Eq.~(\ref{eq:weigh2}), it can imply the classifier $\tilde{f}$ has smaller error in each class. Therefore, it contradicts with the assumption that $f$ is the optimal classifier with smallest error. Thus, we conclude for an optimal linear classifier in natural training, it must satisfies $w_1=w_2=\dots=w_d$ (or $w = \bf 1$) if we do not consider the scale of $w$. 

Next, we calculate the optimal bias term $b$ given $w = \textbf{1}$, where we find an optimal $b$ can minimize the (reweighted) empirical risk:
\begin{align*}
\small
\begin{split}
&\text{Error}_\text{train}(f^*)\\
    = & \; \text{Pr.}(f^*(x)\neq y | y = -1)\cdot \text{Pr.}(y=-1) \cdot \rho + \text{Pr.}(f^*(x)\neq y | y = +1)\cdot \text{Pr.}(y=+1) \\
    \propto & \; \text{Pr.}(f^*(x)\neq y | y = -1)\cdot \rho + \text{Pr.}(f^*(x)\neq y | y = +1)\cdot K \\
    = & \; \rho \cdot \text{Pr.}(\sum_{i=1}^{d} \gN(-\eta, \sigma^2)-b>0) + K \cdot\text{Pr.}(\sum_{i=1}^{d} \gN(\eta, \sigma^2)-b<0)\\
    = & \; \rho \cdot \text{Pr.}(\gN(0,1)< -\frac{b + d\eta}{d\sigma}) +  K \cdot \text{Pr.}(\gN(0,1)< \frac{b - d\eta}{d\sigma}),
\end{split}
\end{align*}
and we take the derivative with respect to $b$:
\begin{align*}
\small
    \frac{\partial \text{Error}_\text{train}}{\partial b} = \frac{\rho}{\sqrt{2\pi}} \cdot (-\frac{1}{d\sigma}) \exp(-\frac{1}{2} (-\frac{b+d\eta}{d\sigma})^2)+ \frac{K}{\sqrt{2\pi}} \cdot (\frac{1}{d\sigma}) \exp(-\frac{1}{2} (\frac{b-d\eta}{d\sigma})^2).
\end{align*}
When $\partial \text{Error}_\text{train}/\partial b = 0$, we can calculate the optimal $b$ which gives the minimum value of the empirical error, and we have:
\begin{align*}
\small
    b  = \frac{1}{2} \log (\frac{\rho}{K}) \frac{d\sigma^2}{\eta} = \frac{1}{2} \log (\frac{\rho}{K}) \frac{d}{S}.
\end{align*}
\end{proof}

\subsubsection{Proof of Theorem~\ref{theorem1}}\label{app_sec:theorem1}

\theoryA*

\begin{proof}[Proof of Theorem~\ref{theorem1}]
Without loss of generality, for distribution $\mathcal{D}_1$, $\mathcal{D}_2$ with different mean-variance pairs $(\pm \eta_1, \sigma^2_1)$ and $(\pm \eta_2, \sigma^2_2)$, we can only consider the case $\eta_1 = \eta_2$ and $\sigma_1^2 < \sigma_2^2$. Otherwise, we can simply rescale one of them to match the mean vector of the other and will not impact the results. Under this definition, the optimal classifier $f_1^*$ and $f_2^*$ has weight vector $w_1 = w_2 = \textbf{1}$ and bias term $b_1, b_2$, with the value as demonstrated in Lemma~\ref{lemma1}. Next, we will prove the Theorem~\ref{theorem1} by 2 steps.

\textit{Step 1.} For the error of class ``-1'', we have:
\begin{align*}\small
    \text{Pr.}(f_1^*(x^{(1)})\neq y^{(1)} | y^{(1)} = -1) &= \text{Pr.}(\sum_{i=1}^{d} \gN (-\eta, \sigma_1^2) - b_1>0)\\
    &< \text{Pr.}(\sum_{i=1}^{d} \gN (-\eta, \sigma_1^2) - b_2>0)~~~~(\text{because } S_1 > S_2)\\
    &< \text{Pr.}(\sum_{i=1}^{d} \gN (-\eta, \sigma_2^2) - b_2>0)~~~~(\text{because } \sigma^2_1 < \sigma^2_2)\\
    &=\text{Pr.}(f_2^*(x^{(2)})\neq y^{(2)} | y^{(2)} = -1).
\end{align*}

\textit{Step 2.} For the error of class ``+1'', we have:
\begin{align}
\label{eq:eq11}
\begin{split}
\small
    \text{Pr.}(f_1^*(x^{(1)})\neq y^{(1)} | y^{(1)} = +1) 
    &= \text{Pr.}(\sum_{i=1}^{d} \gN(\eta, \sigma^2_1) - b_1 < 0)\\
    &= \text{Pr.}(\gN(0,1) < \frac{b_1 - d\eta}{d\sigma_1})\\
    & = \text{Pr.}(\gN(0,1)< \frac{-\log(K)\cdot\sigma_1}{2\eta} - \frac{\eta}{\sigma_1}),
\end{split}
\end{align}
and similarly, 
\begin{align}\label{eq:eq12}
\small
    \text{Pr.}(f_2^*(x^{(2)})\neq y^{(2)} | y^{(2)} = +1) 
    = \text{Pr.}(\gN(0,1)< \frac{-\log(K) \cdot \sigma_2}{2\eta} - \frac{\eta}{\sigma_2}).
\end{align}
Note that when $K$ is large enough, i.e., $\log(K)>\frac{2\cdot \eta^2}{\sigma_1\cdot \sigma_2}$, we can get the Z-score in Eq.~(\ref{eq:eq11}) is larger than Eq.~(\ref{eq:eq12}). As a result, we have:
\begin{align}\label{eq:eq13}
\small
     \text{Pr.}(f_1^*(x^{(1)})\neq y^{(1)} | y^{(1)} = +1) > \text{Pr.}(f_2^*(x^{(2)})\neq y^{(2)} | y^{(2)} = +1).
\end{align}
By combining \textit{Step 1} and \textit{Step 2}, we can get the inequality in Theorem~\ref{theorem1}.

\end{proof}

\subsubsection{Proof of Theorem~\ref{theorem2}}\label{app_sec:theorem2}

\theoryB*

\begin{proof}[Proof of Theorem~\ref{theorem2}]
We first show that under both distribution $\mathcal{D}_1$ and $\mathcal{D}_2$, the optimal reweighting ratio $\rho$ is equal to the imbalance ratio $K$. Based on the results in Eq.~(\ref{eq:weigh1}) and calculated model parameters $w$ and $b$, we have the test error (given the model trained by reweight value $\rho$):
\begin{align*}
\small
\begin{split}
&\text{Error}_\text{test}(f^*)\\
    = & \; \text{Pr.}(f^*(x)\neq y | y = -1)\cdot \text{Pr.}(y=-1) + \text{Pr.}(f^*(x)\neq y | y = +1)\cdot \text{Pr.}(y=+1) \\
    \propto & \; \text{Pr.}(\gN(0,1)< -\frac{b + d\eta}{d\sigma}) +  \text{Pr.}(\gN(0,1)< \frac{b - d\eta}{d\sigma})\\
    = & \; \text{Pr.} (\gN(0,1)<-\frac{1}{2}\log(\frac{\rho}{K}) - \frac{\sigma}{\eta}) + \text{Pr.} (\gN(0,1)<\frac{1}{2}\log(\frac{\rho}{K}) - \frac{\sigma}{\eta}).
\end{split}
\end{align*}
The value of taking the minimum when its derivative with respect to $\rho$ is equal to $0$, where we can get $\rho = K$ and the bias term $b = 0$.
Note that the variance values have the relation: $\sigma_1^2 <\sigma_2^2$. Therefore,  it is easy to get that:
\vspace{-0.2cm}
\begin{align}\label{eq:eq17}
\small
\begin{split}
    \text{Pr.}({f'_1}^*(x^{(1)})\neq y^{(1)} | y^{(1)} = +1) &= \text{Pr.}(\sum_{i=1}^{d} \gN (\eta, \sigma_1^2)<0)  \\ <  \text{Pr.}(\sum_{i=1}^{d} \gN (\eta, \sigma_2^2)<0) & =   \text{Pr.}({f'_2}^*(x^{(2)})\neq y^{(2)} | y^{(2)} = +1).
\end{split}
\end{align}
\vspace{-0.2cm}
Combining the results in Eq.~(\ref{eq:eq13}) and~(\ref{eq:eq17}), we have proved the inequality in Theorem~\ref{theorem2}.

\end{proof}

\subsection{Algorithm of SRAT}\label{app_sec:algorithm}

The algorithm of our proposed SRAT framework is shown in Algorithm~\ref{alg:srat}. Specifically, in each training iteration, we first generate adversarial examples using PGD for examples in the current batch (Line 5). If the current training iteration does not reach a predefined starting reweighting epoch $T_d$, we will assign same weights, i.e., $w_i = 1$ for all adversarial examples $\mathbf{x}_i$ in the current batch (Line 6). Otherwise, the reweighting strategy will be adopted in the final loss function (Line 15), where a specific weight $w_i$ will be assigned for each adversarial example $\mathbf{x}_i$ if its corresponding clean example $\mathbf{x}_i$ comes from an under-represented class.

\begin{algorithm}[h]
\caption{Separable Reweighted Adversarial Training (SRAT).} 
\label{alg:srat}
\begin{algorithmic}[1]
\REQUIRE imbalanced training dataset $D = \{(\mathbf{x}_i, y_i)\}_{i=1}^{n}$, number of total training epochs $T$, starting reweighting epoch $T_d$, batch size $N$, number of batches $M$, learning rate $\gamma$
\ENSURE An adversarially robust model $f_\theta$
\STATE Initialize the model parameters $\theta$ randomly;
\FOR{epoch $=1, \dots, T_d - 1$} 
\FOR{mini-batch $=1, \dots, M$}
\STATE Sample a mini-batch $\mathcal{B}=\{(\mathbf{x}_i, y_i) \}_{i=1}^{N}$ from $D$;
\STATE Generate adversarial example $\mathbf{x}'_i$ for each $\mathbf{x}'_i \in \mathcal{B}$;
\STATE $\mathcal{L}(f_\theta) = \frac{1}{N} \sum_{i = 1}^N \max_{\Vert \mathbf{x}_i' - \mathbf{x}_i \Vert_p \leq \epsilon} \mathcal{L}(f_{\theta}(\mathbf{x}'_i), y_i) + \lambda \mathcal{L}_{sep}(\mathbf{x}'_i)$
\STATE $\theta \gets \theta - \gamma \nabla_{\mathbf{\theta}} \mathcal{L}(f_\theta)$
\ENDFOR
\STATE Optional: $\gamma \gets \gamma/\kappa$
\ENDFOR
\FOR{epoch $=T_d, \dots, T$}
\FOR{mini-batch $=1, \dots, M$}
\STATE Sample a mini-batch $\mathcal{B}=\{(\mathbf{x}_i, y_i) \}_{i=1}^{N}$ from $D$;
\STATE Generate adversarial example $\mathbf{x}'_i$ for each $\mathbf{x}'_i \in \mathcal{B}$;
\STATE $\mathcal{L}(f_\theta) = \frac{1}{N} \sum_{i = 1}^N \max_{\Vert \mathbf{x}_i' - \mathbf{x}_i \Vert_p \leq \epsilon} w_i \mathcal{L}(f_{\theta}(\mathbf{x}'_i), y_i) + \lambda \mathcal{L}_{sep}(\mathbf{x}'_i)$
\STATE $\theta \gets \theta - \gamma \nabla_{\mathbf{\theta}} \mathcal{L}(f_\theta)$
\ENDFOR
\STATE Optional: $\gamma \gets \gamma/\kappa$
\ENDFOR
\end{algorithmic}
\end{algorithm}

\subsection{Data Distribution of Imbalanced Training Datasets}\label{app_sec:dataset}

In our experiments, we construct multiple imbalanced training datasets to simulate various kinds of imbalanced scenarios by combining different imbalance types (i.e., Exp and Step) with different imbalanced ratios (i.e., $K=10$ and $K=100$). Figure~\ref{fig:data_distribution_cifar10} and Figure~\ref{fig:data_distribution_svhn} show the data distribution of all ten-classes imbalanced training datasets used in our preliminary studies and experiments based on CIFAR10~\cite{krizhevsky2009learning} and SVHN~\cite{netzer2011reading} datasets, respectively.

\begin{figure}[h]
\centering
\begin{subfigure}[b]{0.235\textwidth}
\centering
\includegraphics[width=1.3in]{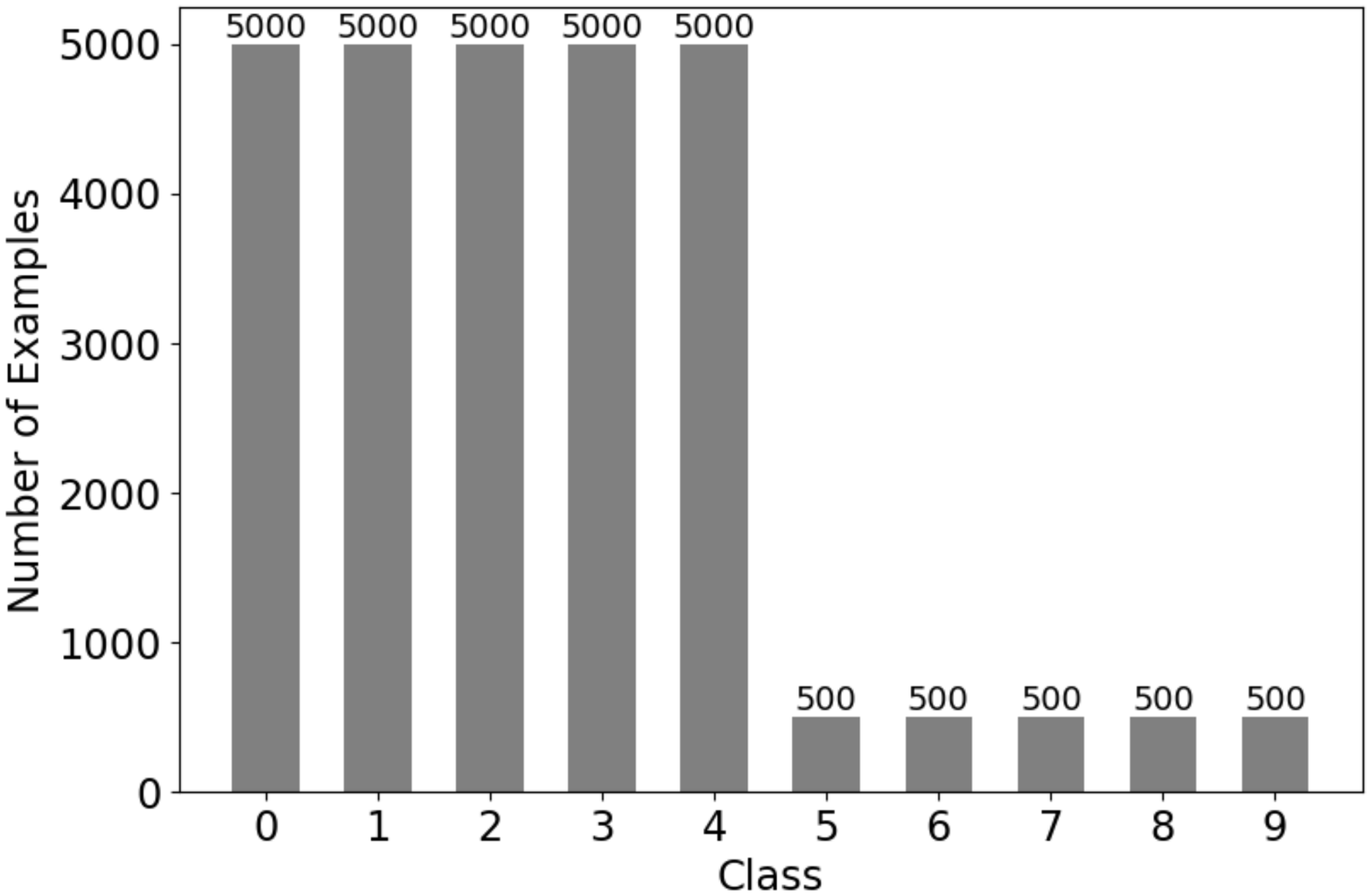}
\caption{Step-10}
\label{fig:data_distribution_cifar10_step_10}
\end{subfigure}
\hspace{0.03in}
\begin{subfigure}[b]{0.235\textwidth}
\centering
\includegraphics[width=1.3in]{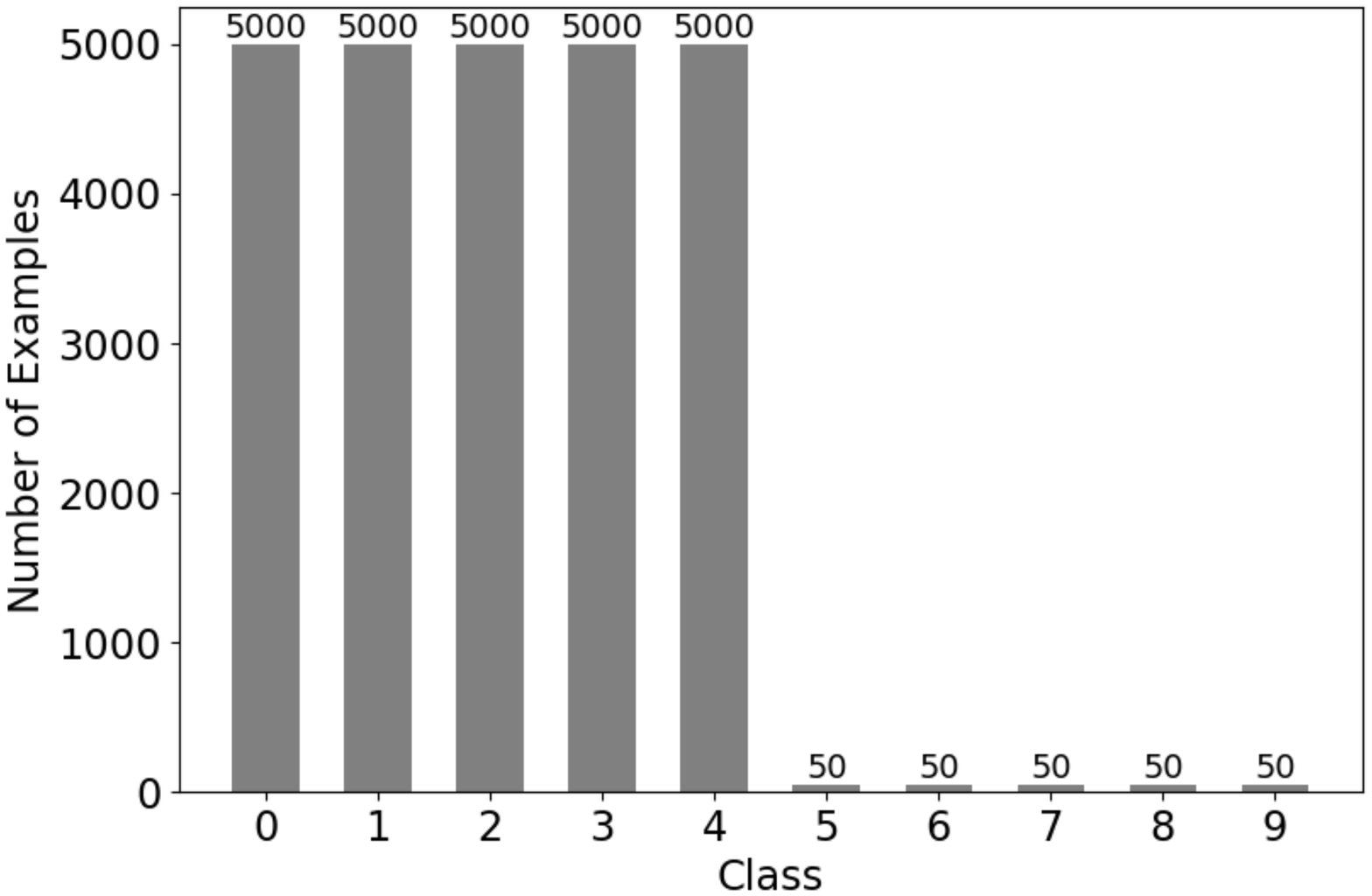}
\caption{Step-100}
\label{fig:data_distribution_cifar10_step_100}
\end{subfigure}
\hspace{0.03in}
\begin{subfigure}[b]{0.235\textwidth}
\centering
\includegraphics[width=1.3in]{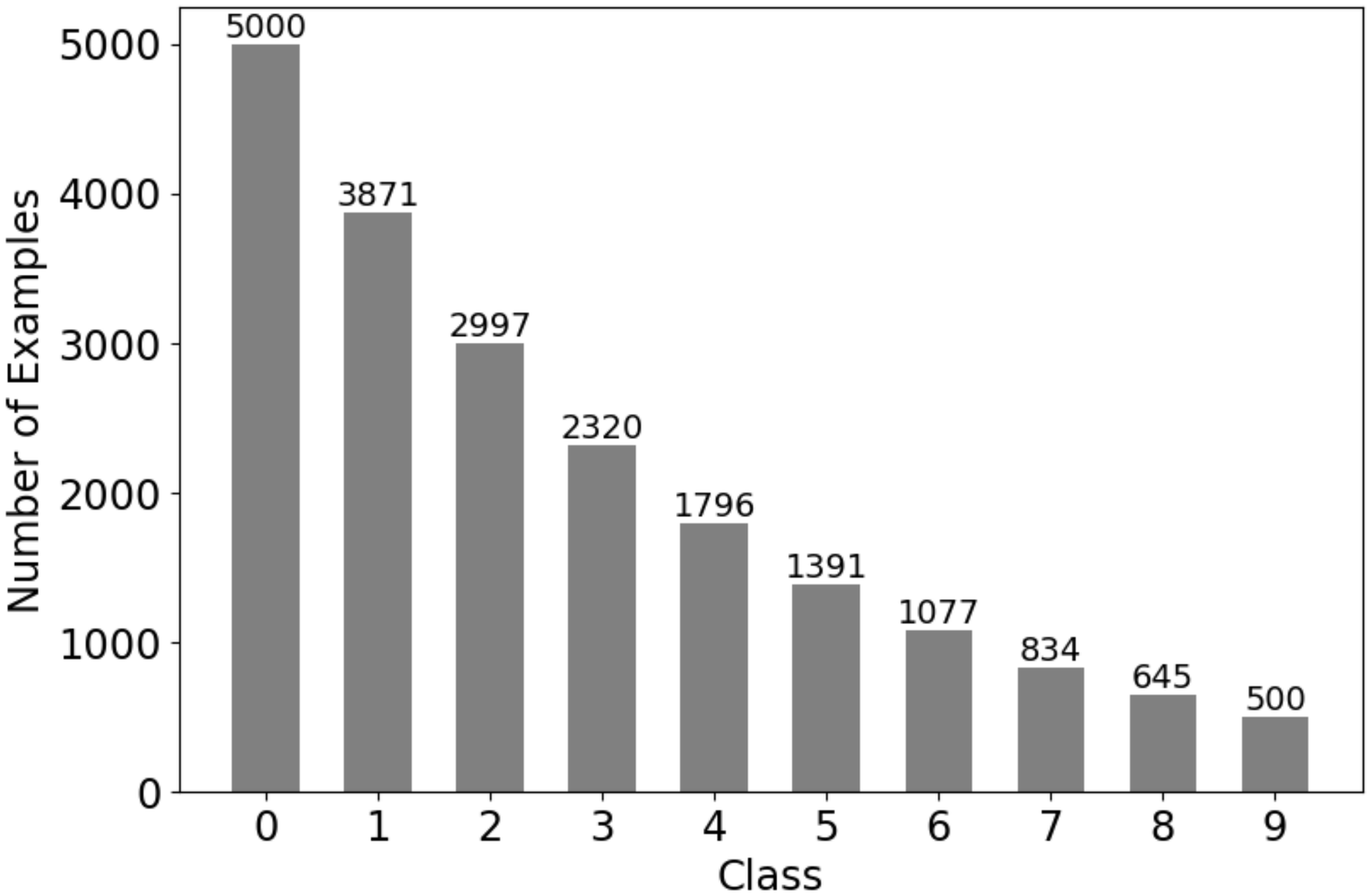}
\caption{Exp-10}
\label{fig:data_distribution_cifar10_exp_10}
\end{subfigure}
\hspace{0.03in}
\begin{subfigure}[b]{0.235\textwidth}
\centering
\includegraphics[width=1.3in]{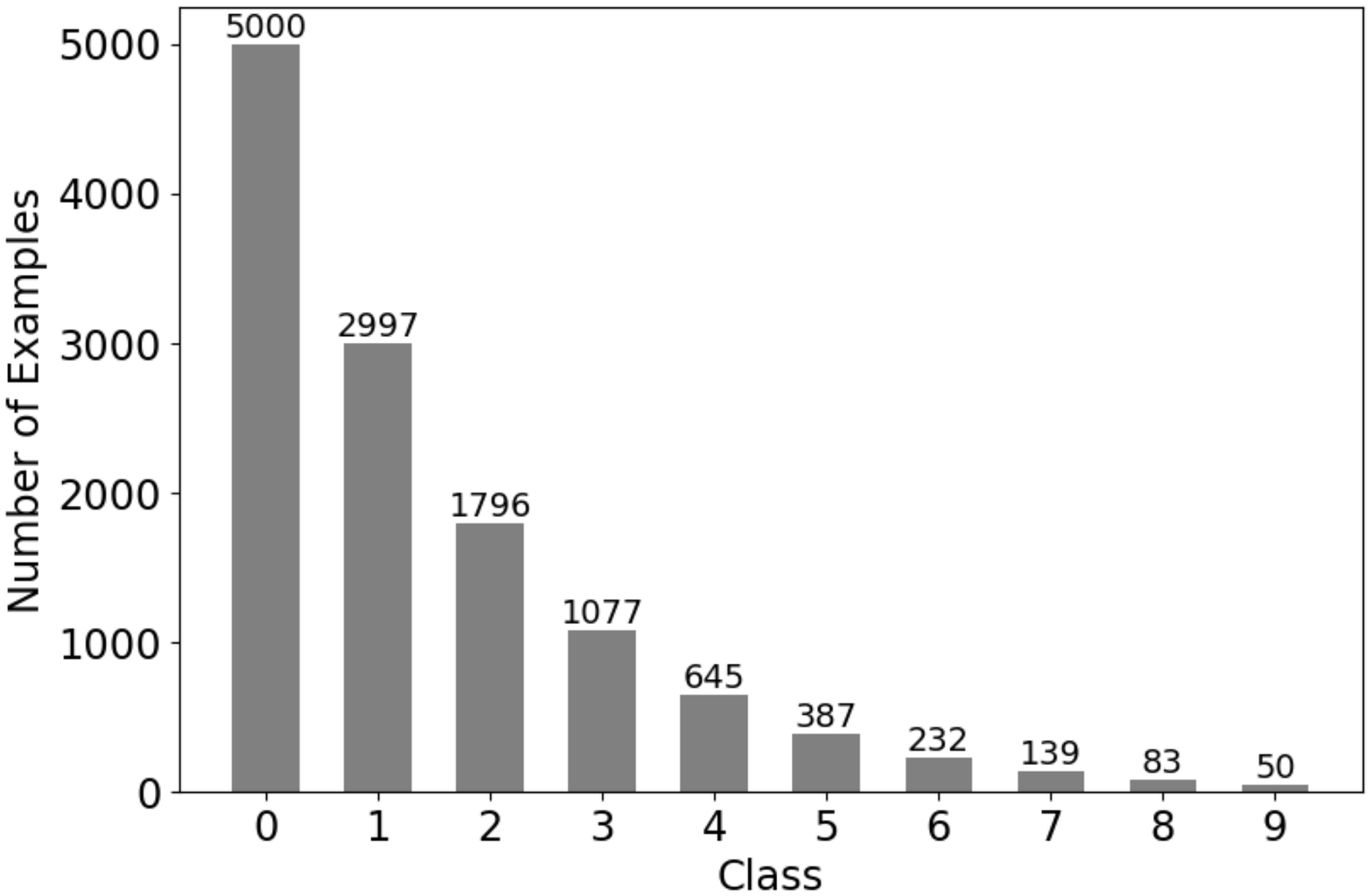}
\caption{Exp-100}
\label{fig:data_distribution_cifar10_exp_100}
\end{subfigure}
\caption{Data distribution of imbalanced training datasets constructed from CIFAR10 dataset.}
\label{fig:data_distribution_cifar10}
\end{figure}

\begin{figure}[h]
\centering
\begin{subfigure}[b]{0.235\textwidth}
\centering
\includegraphics[width=1.3in]{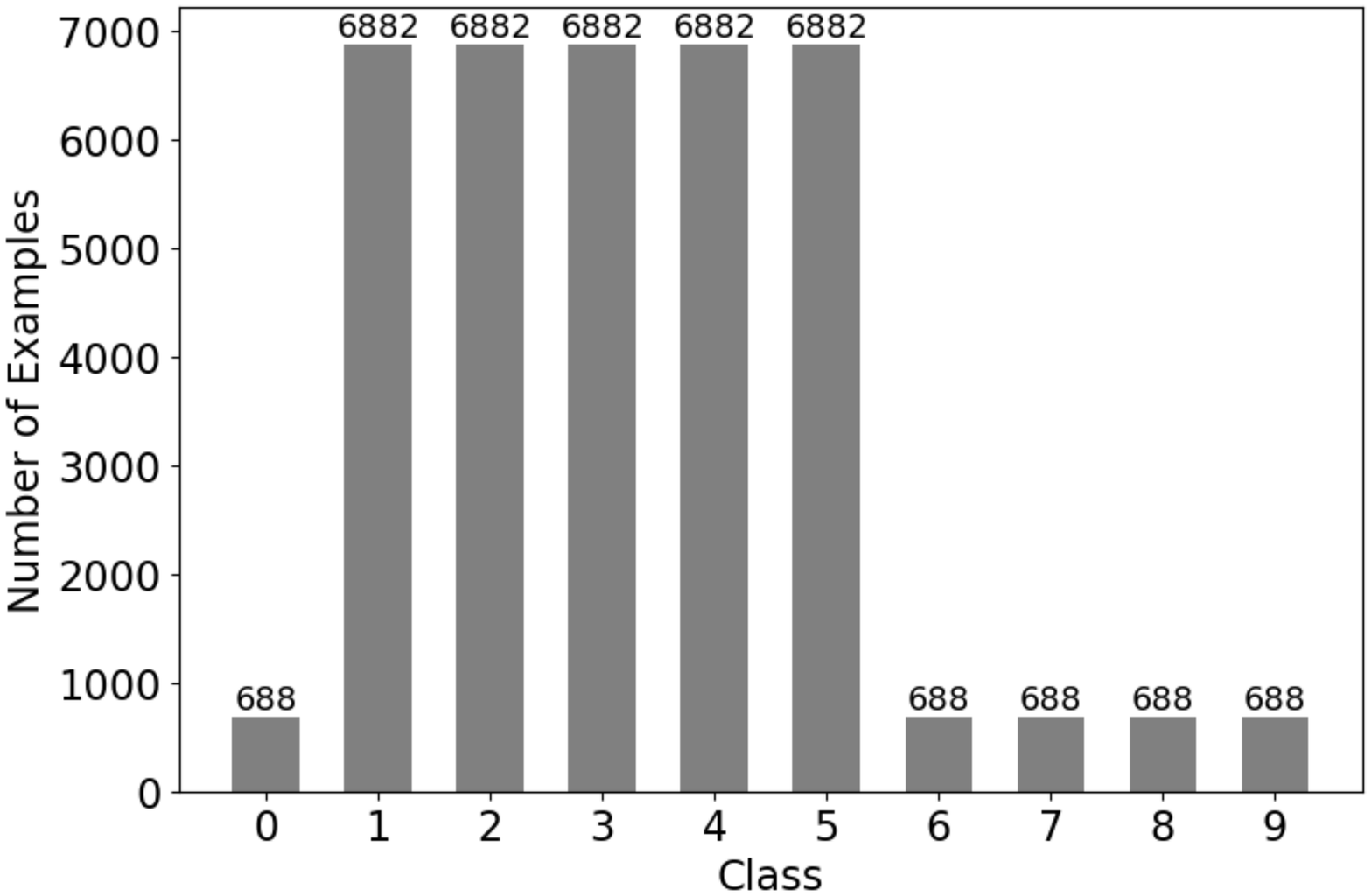}
\caption{Step-10}
\label{fig:data_distribution_svhn_step_10}
\end{subfigure}
\hspace{0.03in}
\begin{subfigure}[b]{0.235\textwidth}
\centering
\includegraphics[width=1.3in]{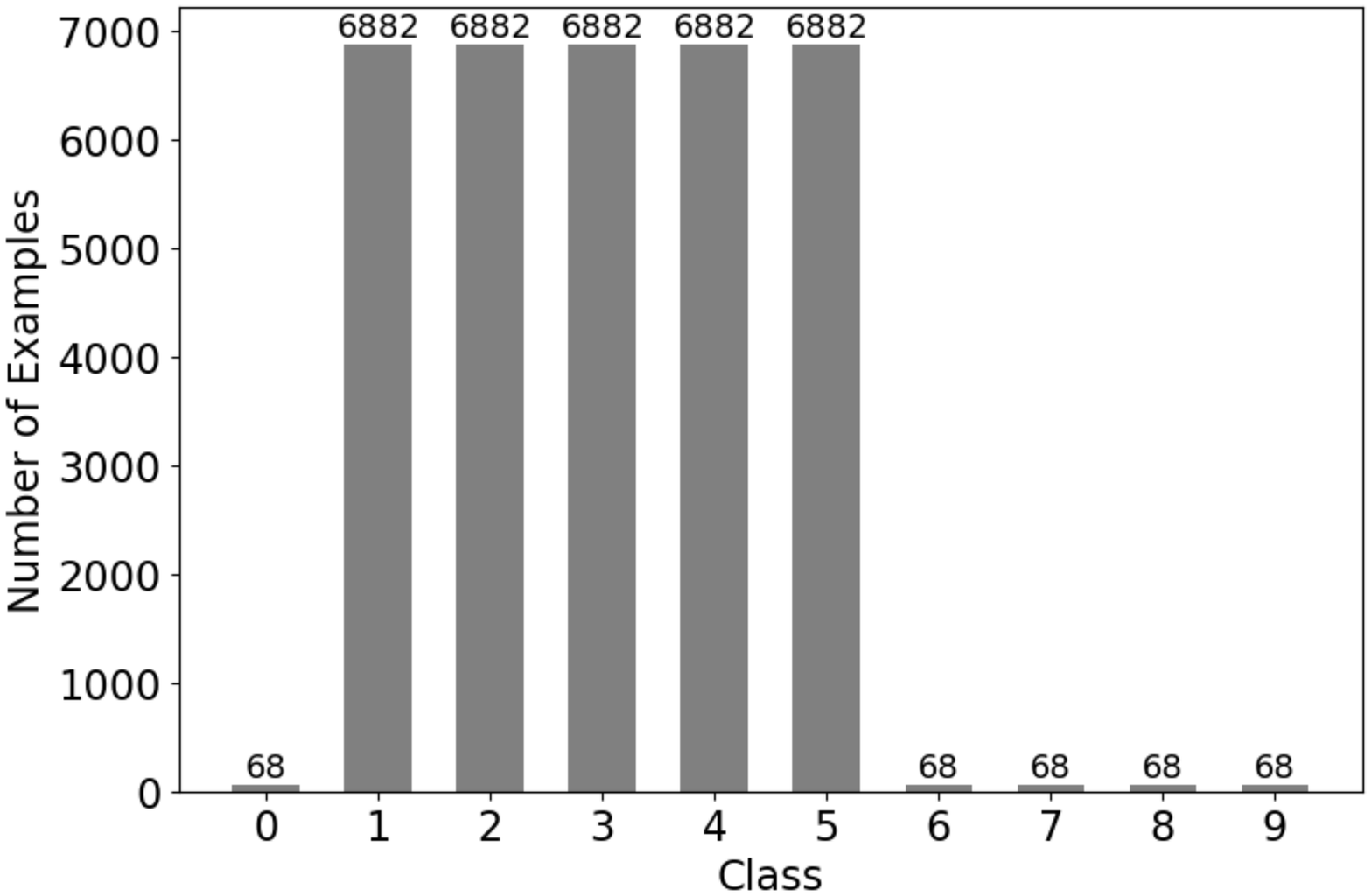}
\caption{Step-100}
\label{fig:data_distribution_svhn_step_100}
\end{subfigure}
\hspace{0.03in}
\begin{subfigure}[b]{0.235\textwidth}
\centering
\includegraphics[width=1.3in]{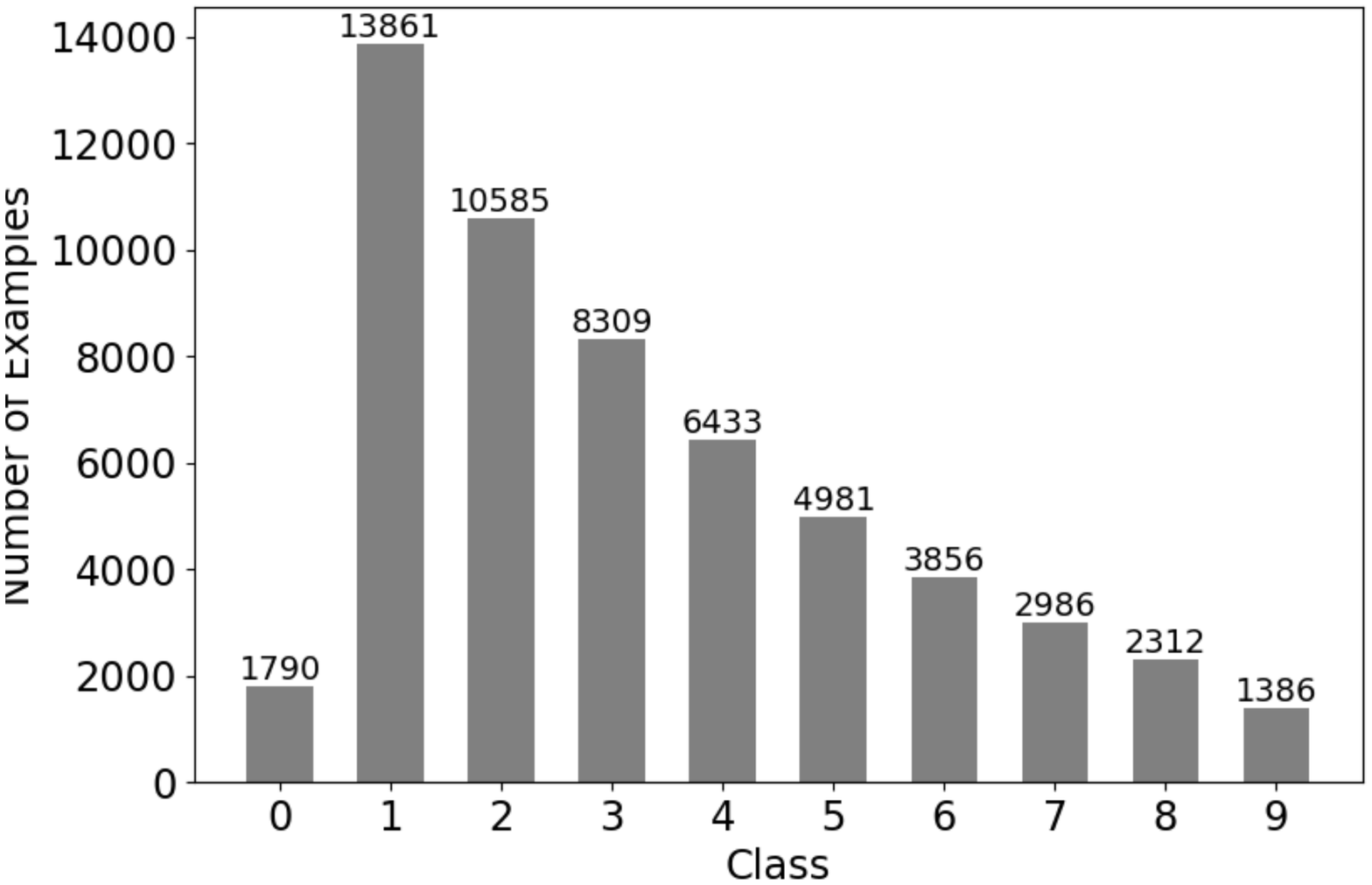}
\caption{Exp-10}
\label{fig:data_distribution_svhn_exp_10}
\end{subfigure}
\hspace{0.03in}
\begin{subfigure}[b]{0.235\textwidth}
\centering
\includegraphics[width=1.3in]{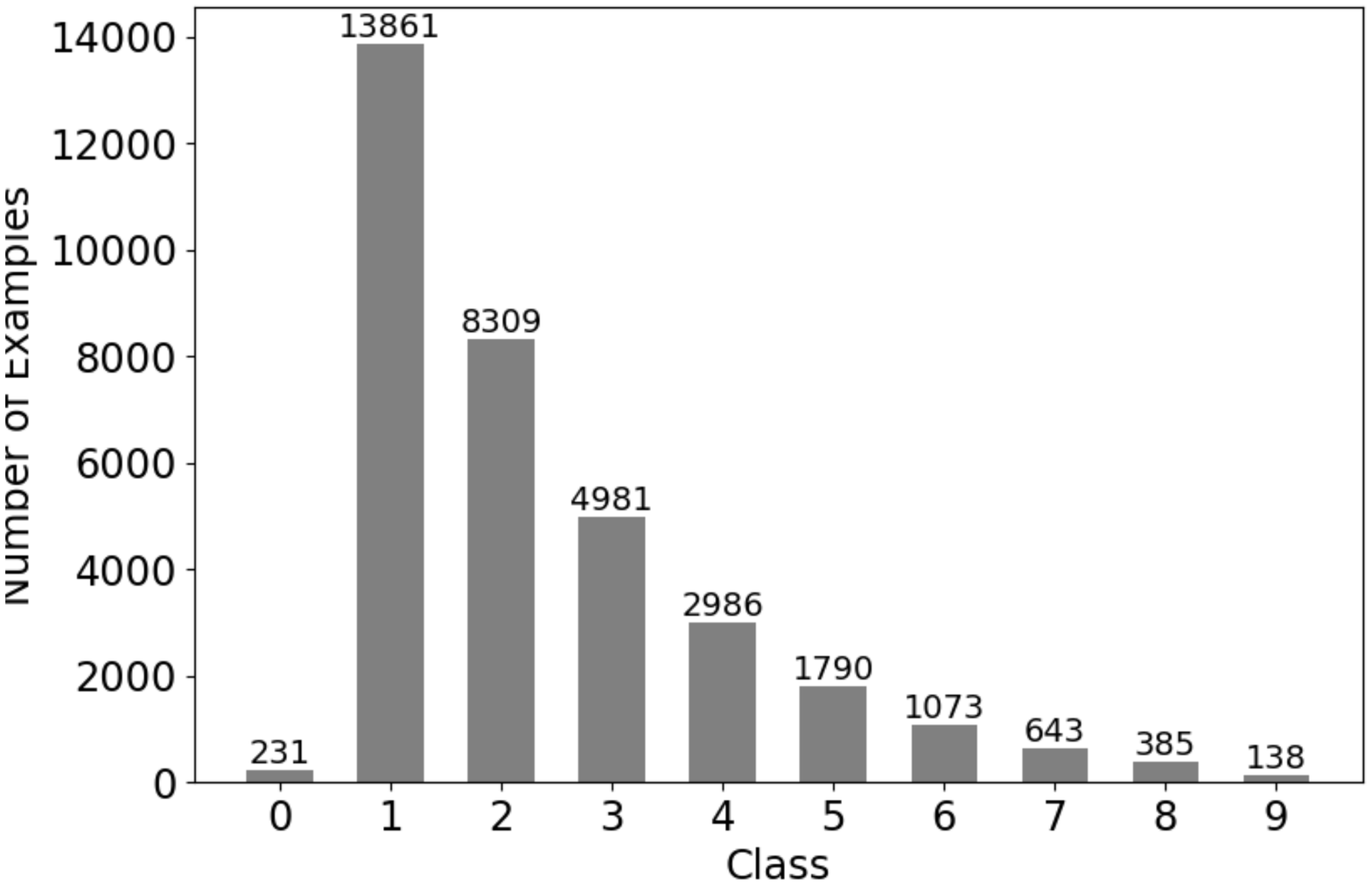}
\caption{Exp-100}
\label{fig:data_distribution_svhn_exp_100}
\end{subfigure}
\caption{Data distribution of imbalanced training datasets constructed from SVHN dataset.}
\label{fig:data_distribution_svhn}
\end{figure}

\subsection{Performance Comparison on Imbalanced SVHN Datasets}\label{app_sec:svhn_results}

Table~\ref{tab:step_main_result_svhn} and Table~\ref{tab:longtail_main_result_svhn} show the performance comparison on various imbalanced SVHN datasets with different imbalance types and imbalance ratios. We use bold values to denote the highest accuracy among all methods and use the underline values to indicate our SRAT variants which achieve the highest accuracy among their corresponding baseline methods utilizing the same loss function for making predictions.

From Table~\ref{tab:step_main_result_svhn} and Table~\ref{tab:longtail_main_result_svhn}, we get similar observation that, comparing with baseline methods, our proposed SRAT method can produce a robust model which can achieve improved overall performance when the training dataset is imbalanced. In addition, based on the experimental results in Table~\ref{tab:step_main_result} to Table~\ref{tab:longtail_main_result_svhn}, we find that, compared with the performance improvement between DRCB-LDAM and SRAT-LDAM, the improvement between DRCB-CE and SRAT-CE and the improvement between DRCB-Focal and SRAT-Focal are more obviously. The possible reason behind this phenomenon is, the LDAM loss can also implicitly produce a more separable feature space~\cite{cao2019learning} while CE loss and Focal loss do not conduct any specific operations on the latent feature space. Hence, the feature separation loss contained in SRAT-CE and SRAT-Focal could be more effective on learning separable feature space and facilitate the Focal loss on prediction. However, in SRAT-LDAM, the feature separation loss and LDAM loss may affect each other on learning feature representations and, hence, the effectiveness of the feature separation loss may be counteracted or weakened.

In conclusion, experiments conducted on multiple imbalanced datasets verify the effectiveness of our proposed SRAT method under various imbalanced scenarios.

\begin{table}[h]
\small
\centering
\caption{Performance Comparison on Imbalanced SVHN Datasets (Imbalanced Type: Step)}
\label{tab:step_main_result_svhn}
\begin{tabular}{c|c|c|c|c|c|c|c|c}
\hline
Imbalance Ratio & \multicolumn{4}{c}{10} & \multicolumn{4}{|c}{100} \\
\hline
Imbalance Ratio & \multicolumn{2}{c}{Standard Accuracy} & \multicolumn{2}{|c}{Robust Accuracy} & \multicolumn{2}{|c}{Standard Accuracy} & \multicolumn{2}{|c}{Robust Accuracy} \\
\hline
Method & Overall & Under & Overall & Under & Overall & Under & Overall & Under \\
\hline
CE & 79.88 & 67.04 & 37.62 & 22.08 & 59.61 & 26.19 & 29.57 & 5.03 \\
Focal & 79.96 & 67.03 & 37.83 & 22.47 & 60.58 & 28.17 & 30.27 & 5.83 \\
LDAM & 84.55 & 74.96 & 45.80 & 31.23 & 65.61 & 37.13 & 33.34 & 8.36 \\
CB-Reweight & 79.48 & 66.07 & 37.38 & 21.66 & 60.23 & 27.68 & 29.54 & 5.32 \\
CB-Focal & 80.29 & 67.56 & 38.10 & 23.00 & 60.73 & 28.37 & 30.09 & 5.75 \\
DRCB-CE & 80.62 & 68.74 & 37.25 & 22.79 & 60.67 & 28.36 & 30.02 & 5.59 \\
DRCB-Focal & 79.11 & 65.72 & 37.01 & 22.02 & 61.65 & 30.29 & 30.78 & 7.06 \\
DRCB-LDAM & \textbf{87.83} & \textbf{82.63} & \textbf{46.45} & \textbf{35.15} & 63.78 & 33.99 & 33.60 & 7.28 \\
\hline
\hline
SRAT-CE & \underline{82.89} & \underline{72.79} & \underline{38.23} & \underline{24.70} & \underline{63.39} & \underline{33.85} & 29.64 & \underline{6.11} \\
SRAT-Focal & \underline{85.05} & \underline{77.10} & \underline{39.51} & \underline{28.06} & \underline{70.12} & \underline{47.44} & \underline{32.18} & \underline{11.08} \\
SRAT-LDAM & 87.65 & 82.62 & 46.03 & 34.75 & \underline{\textbf{71.56}} & \underline{\textbf{50.33}} & \underline{\textbf{33.54}} & \underline{\textbf{11.63}} \\
\hline
\end{tabular}
\end{table}

\begin{table}[h]
\small
\centering
\caption{Performance Comparison on Imbalanced SVHN Datasets (Imbalanced Type: Exp)}
\label{tab:longtail_main_result_svhn}
\begin{tabular}{c|c|c|c|c|c|c|c|c}
\hline
Imbalance Ratio & \multicolumn{4}{c}{10} & \multicolumn{4}{|c}{100} \\
\hline
Metric & \multicolumn{2}{c}{Standard Accuracy} & \multicolumn{2}{|c}{Robust Accuracy} & \multicolumn{2}{|c}{Standard Accuracy} & \multicolumn{2}{|c}{Robust Accuracy} \\
\hline
Method & Overall & Under & Overall & Under & Overall & Under & Overall & Under \\
\hline
CE & 87.54 & 82.67 & 44.12 & 35.33 & 72.51 & 56.30 & 33.34 & 16.93 \\
Focal & 87.82 & 83.01 & 44.88 & 35.97 & 72.61 & 56.48 & 34.09 & 17.62 \\
LDAM & 90.06 & 86.69 & 51.84 & 43.73 & 79.11 & 66.86 & \textbf{40.42} & \textbf{25.18} \\
CB-Reweight & 87.66 & 82.79 & 44.39 & 35.53 & 72.25 & 55.97 & 33.36 & 17.16 \\
CB-Focal & 87.86 & 82.96 & 44.61 & 35.55 & 73.23 & 57.34 & 34.25 & 17.90 \\
DRCB-CE & 88.49 & 84.51 & 43.82 & 36.28 & 73.74 & 58.03 & 33.52 & 17.68 \\
DRCB-Focal & 87.47 & 82.78 & 42.52 & 34.31 & 71.95 & 55.11 & 33.43 & 17.63 \\
DRCB-LDAM & 91.24 & \textbf{89.65} & \textbf{52.39} & \textbf{46.71} & 80.29 & 69.23 & 40.16 & 24.64 \\
\hline
\hline
SRAT-CE & \underline{88.70} & \underline{84.94} & \underline{44.54} & \underline{36.59} & \underline{77.11} & \underline{64.47} & \underline{34.48} & \underline{19.91} \\
SRAT-Focal & \underline{89.51} & \underline{85.42} & \underline{45.37} & \underline{37.20} & \underline{80.04} & \underline{69.54} & \underline{35.25} & \underline{23.04} \\
SRAT-LDAM & \textbf{91.27} & 89.55 & 52.10 & 46.13 & \underline{\textbf{80.71}} & \underline{\textbf{70.49}} & 40.33 & 25.11 \\
\hline
\end{tabular}
\end{table}

\end{document}